\documentclass[letterpaper]{article} 
\usepackage{aaai2027}  
\usepackage[hyphens]{url}  
\usepackage{graphicx} 
\usepackage{natbib}  
\usepackage{caption} 
\usepackage{algorithm}
\usepackage{algorithmic}

\usepackage{newfloat}
\usepackage{listings}
\DeclareCaptionStyle{ruled}{labelfont=normalfont,labelsep=colon,strut=off} 
\floatstyle{ruled}
\newfloat{listing}{tb}{lst}{}
\floatname{listing}{Listing}

\usepackage{booktabs}
\usepackage{rotating}
\nocopyright

\usepackage{caption, amssymb, mathtools, extpfeil, float, yfonts, csquotes, lipsum, fancyhdr, algorithm, algorithmic} 

\usepackage{multirow}

\usepackage{dsfont}
\usepackage{subcaption}

\usepackage{amsmath, mathtools, extpfeil, amsthm}

\theoremstyle{plain}
\newtheorem{theorem}{Theorem}
\newtheorem{proposition}{Proposition}
\newtheorem{lemma}{Lemma}
\newtheorem{corollary}{Corollary}
\theoremstyle{definition}
\newtheorem{definition}{Definition}
\theoremstyle{remark}

\title{Online Conformal Prediction Beyond Feedback} 
\author {
    Joar Skalse\textsuperscript{\rm 1}\corresponding,
    Edoardo Pona\textsuperscript{\rm 1},
    Osvaldo Simeone\textsuperscript{\rm 2},
    Nicola Paoletti\textsuperscript{\rm 1}
}
\affiliations {
    \textsuperscript{\rm 1}King's College London\\
    \textsuperscript{\rm 2}Northeastern University London\\
    joar.skalse@kcl.ac.uk, edoardo.1.pona@kcl.ac.uk, o.simeone@nulondon.ac.uk, nicola.paoletti@kcl.ac.uk
}

\newcommand{\themethod}{\texttt{OCPQ}}

\begin{document}

\maketitle

\begin{abstract}
Uncertainty quantification is essential when deploying machine learning models in safety-critical applications. Online conformal prediction (OCP) provides theoretically principled uncertainty quantification for arbitrary black-box classifiers and non-i.i.d.\ data streams by constructing prediction sets that are guaranteed to contain the true label at a user-specified frequency. 
OCP usually updates prediction sets using feedback from previously deployed predictions. We instead study an OCP setting \emph{beyond feedback}: on each round, the learner can \emph{either} output a prediction set \emph{or} query the correct label, but not both. Thus, no deployed prediction is ever evaluated directly. 
We reduce this problem to a partial monitoring game in which prediction actions return no observation and a separate query action reveals the label.
The reward function is constructed in a way that encourages the learner to output small prediction sets while ensuring that the correct label is covered with a sufficiently high probability.
To solve this game, we develop \emph{OCP with queries} (\texttt{OCPQ}) by adapting the label efficient forecaster of \citet{label_efficient_prediction} to our setting.
For any black box classifier and any (non-i.i.d.) oblivious data stream of length $T$, \texttt{OCPQ} has $O(T^{2/3})$ expected regret and expected coverage at least $\beta-O(T^{-1/3})$ for a user-defined $\beta$, while querying only an expected $T^{-1/3}$ fraction of rounds. This provides coverage comparable to bandit-based OCP methods while requiring no feedback from deployed prediction sets. Experiments on real-world datasets further demonstrate the effectiveness of our approach.
\end{abstract}


\section{Introduction}

\subsection{Background and Motivation}

Uncertainty quantification is important for decision-making, especially in safety-critical domains. \emph{Conformal prediction} is a theoretically principled framework for uncertainty quantification of black-box models, such as neural networks \citep[see, e.g.,][]{angelopoulos2022gentleintroductionconformalprediction,vovk2022}. Typically, a conformal prediction method takes an arbitrary pre-trained function approximator and translates its outputs into \emph{sets} of outputs, such that the true label is provably guaranteed to be contained within this set with some user-specified probability or frequency, 
and where the prediction sets are otherwise ideally as small as possible.
These methods allow us to derive formal guarantees without making any assumptions about the data distribution or the underlying black-box model, though a more accurate model yields smaller prediction sets. The prediction sets act as confidence intervals that can be used to optimize the policy of a risk-averse agent \cite{kiyani2025decision,simeone2026decision}.


In \emph{online conformal prediction} (OCP), we assume that the data arrives sequentially, and no assumptions are made about the exchangeability of calibration and test data. OCP methods adopt a worst-case, per-sequence, performance objective, and can provide principled uncertainty quantification for applications including  large language models \citep[e.g.,][]{lee2026onlineconformalabstentionfactuality, rubashevskii2026adaptiveconformalpredictionimproving}, distributed systems \citep[e.g.,][]{zhu2025conformaldistributedremoteinference}, telecommunications networks \citep[e.g.,][]{10262367}, 
and world models for robot systems \citep[e.g.,][]{lindemann2023safeplanningdynamicenvironments, zhao2025conformalizedinteractiveimitationlearning}. 


Existing OCP methods typically assume that the correct labels can be obtained after a set prediction is issued  --- in other words, they assume that the algorithm gets \emph{feedback}. Feedback may potentially be partial or delayed \citep[e.g.,][etc]{wang2025mirroronlineconformalprediction, wang2026onlineconformalpredictioncorrupted}, but it always refers to decisions made by the algorithm. This feedback is leveraged to compensate for past mistakes: miscoverage events are compensated by increasing the size of future prediction sets, while coverage events yield smaller future sets. 

While obtaining feedback is feasible in many contexts, there are also many potential applications of OCP where this requirement is prohibitive. For example, consider a latent probe for a large language model (LLM) that must be designed to classify the LLM's input into a number of safety-relevant categories \citep[as, e.g.,][etc]{chia2025probinglatentsubspacesllm, mckenzie2026detectinghighstakesinteractionsactivation, jiao2026llmsafetywithindetecting, zhao2026llmsencodeharmfulnessrefusal, pona2026calibratethendelegatesafetymonitoringrisk}. 
In this case, if the probe could reliably access the correct labels, then there would be little reason to design the probe in the first place. 
However, inputs on which the probe abstains may be deferred to a human or a stronger system, and their labels may later become available. Labels are not obtained for inputs on which the probe makes a deployed decision. This is beyond feedback in the usual OCP sense: observations concern queried rounds, not past predictions. Methods that require a coverage signal from deployed prediction sets cannot be applied directly.

In this paper, we introduce \emph{Online Conformal Prediction with Queries} (\themethod), a new OCP method for this beyond feedback protocol. We tackle the problem in two steps. First, we reduce it to a finite partial monitoring game whose prediction actions return no observation and whose query action reveals the outcome. Second, we solve the resulting game using an adapted version of the label efficient exponentially weighted forecaster of \citet{label_efficient_prediction}. In this adaptation, experts correspond to conformal prediction thresholds and a query replaces, rather than follows, a prediction. Random queries provide unbiased estimates of the cumulative rewards of all thresholds, including thresholds that were not deployed.

For any black box classifier, any data stream of length $T$, and any user-defined $\beta \in [0,1]$, \themethod{} guarantees that the expected coverage rate $p_\text{cover}$ satisfies
\[
\mathbb{E}[p_{\mathrm{cover}}] \geq \beta-O(T^{-1/3}),
\]
while querying an expected $T^{-1/3}$ fraction of the rounds. \themethod{} thus obtains provable finite horizon guarantees using very little data, and without ever observing whether any deployed prediction set covered its label.

\subsection{Problem Formulation}

We focus on a standard online prediction setting operating along a discrete time index $t=1,2,\dots,T$, in which the input $x_t$ belongs to an input space $\mathcal{X}$ and the correct label $y_t$ belongs to an output space $\mathcal{Y}$. We assume that $\mathcal{Y}$ is finite, but $\mathcal{X}$ need not be finite.\footnote{The assumption that $\mathcal{Y}$ is finite is mostly made out of convenience, and can be lifted. For details, see the Appendix.}
We adopt an adversarial formulation in which the sequence 
\begin{equation}
z^T= \langle(x_t, y_t)\rangle_{t=1}^T \in (\mathcal X \times \mathcal Y)^T,
\end{equation} 
with $z_t=(x_t,y_t)$ for $t=1, \dots, T$, is selected in advance, possibly in an adversarial way (i.e., we have an oblivious adversary).


On each round $t$, the learner observes the input $x_t$ and returns an action $a_t \in \mathcal A = \mathcal{P}(\mathcal{Y)} \cup \{\texttt{query}\}$ as
\begin{equation}\label{eq:action}
a_{t}=\begin{cases}
S_{t}\subseteq\mathcal{Y} & \text{(produce a prediction set)}\\
\texttt{query} & \text{(request to observe \ensuremath{y_{t})}}.
\end{cases}
\end{equation}
After selecting an action, the learner receives an observation $\omega_t \in \Omega = \mathcal{Y} \cup \{\bot\}$, where $\omega_t = y_t$ if $a_t = \texttt{query}$, and otherwise $\omega_t$ is a null observation $\bot$.
Note that this means that the learner never obtains feedback about the true label $y_t$, or even about the coverage event $\{y_t \in \mathcal S_t\}$, on rounds in which it produces a prediction set $\mathcal S_t$. In this way, the output $y_t$ can only be observed on time steps at which the learner declined to make a prediction. 

This marks a key shift compared with existing formulations of 
OCP, which assume the availability of \emph{feedback} about the prediction sets produced by the learner 
(though this feedback may be partial or delayed, see the Related Work section).
This information is leveraged by the learner to compensate for past errors when designing future prediction sets, ensuring long-term risk guarantees. In our setting, however, such compensation is not possible, as the only available outputs $y_t$ provide information solely about the pre-selected sequence $z^T$, while not offering any direct feedback signal on the past predictions of the learner. 

The learner selects action $a_t$ based on a generally stochastic  \emph{policy} $\pi : \mathcal{H} \times \mathcal X \to \Delta(\mathcal{A})$, where $\Delta(\mathcal{A})$ is the set of all probability distributions over the action space $\mathcal{A}$ and $\mathcal{H} \subset (\mathcal{X} \times \mathcal{A} \times \Omega)^\star$ is the set of all histories 
\begin{equation} 
h_n = \langle(x_t,a_t,\omega_t)\rangle_{t=1}^n,
\end{equation} 
where $\omega_t$ is the observation at time $t$.

The learner has access to a classifier $C : \mathcal{X} \to \Delta(\mathcal{Y})$, which it can use to construct the prediction sets $S_t$.
We use $p_C(y\mid x)$ to denote the probability that $C$ assigns to label $y$ given input $x$. Following existing OCP methods, we assume that the learner selects a prediction threshold $m_t \in [0,1]$, and then let the prediction set $S_t$ for input $x_t$ be obtained as the superlevel set $\Gamma_C(x_t,m_t)$ defined by
\begin{multline}\label{equation:prediction_set}
    \Gamma_C(x_t,m_t) = \\
    \{y \in \mathcal{Y} \mid p_C(y\mid x_t) \geq \max_{y^\star \in \mathcal{Y}} p_C(y^\star\mid x_t) - m_t  \},
\end{multline}
where the choice of $m_t$ determines how conservative the learner is:
if $m_t=1$ then $\Gamma(x_t,m_t)=\mathcal{Y}$, (i.e., it is the trivial set of all labels), and if $m_t=0$ then $\Gamma(x_t,m_t)$ includes only the most likely label (as judged by $C$).\footnote{In (O)CP literature, the inclusion condition is often expressed in terms of a score function: $s(x_t,y)\leq m_t$. We can retrieve this form by setting $s(x_t,y)=\max_{y^\star \in \mathcal{Y}} p_C(y^\star\mid x_t) -  p_C(y\mid x_t)$.}

Given a data stream $z^T$ and an action sequence $a^T=\langle a_1 \dots a_T\rangle$, 
we use $Q$ to denote the total number of queries, 
\begin{equation}
Q = \sum_{t=1}^T \mathds{1}\{a_t = \texttt{query}\}.
\end{equation} 
We are interested in 
controlling the \emph{coverage rate}
\begin{equation}
p_\text{cover} =\frac{1}{T} \cdot \sum_{t=1}^T \mathds{1}\{a_t \neq \texttt{query}\} \cdot \mathds{1}\{y_t \in S_t\},\end{equation}
i.e., the proportion of the rounds on which a prediction set $S_t$ is produced and the true label $y_t$ is contained in $S_t$. 
By this definition, a query counts as a miscoverage event, and so the learner is incentivised to keep the total number of queries low in order to control $p_\text{cover}$.


While coverage can be trivially controlled by choosing the full set $\mathcal S_t =\mathcal Y$ on every round, prediction sets need to be informative, i.e., as small as possible. Thus, we also consider the 
average size of $S_t$, also known as the \emph{inefficiency} $I$:
\begin{equation}
I = \frac{\sum_{t=1}^T \mathds{1}\{a_t \neq \texttt{query}\} \cdot |S_t|}{\sum_{t=1}^T \mathds{1}\{a_t \neq \texttt{query}\}}.
\end{equation} 
Note that if the actions are sampled from a stochastic policy, then $Q$, $p_\text{cover}$, and $I$ are all random variables. 

In summary, we wish to simultaneously control the expected coverage rate and the expected query rate, regardless of which data stream the adversary picks, while keeping the size of prediction sets as small as possible.

\subsection{Related Work}
Table~\ref{tab:feedback_comparison} compares our method to some of the most relevant OCP methods in terms of feedback models and guarantees. 
\subsubsection{OCP With Feedback}
Most OCP methods observe the label \citep[e.g.,][etc]{gibbs2021adaptiveconformalinferencedistribution,bhatnagar2023improvedonlineconformalprediction,pmlr-v235-angelopoulos24a,angelopoulos2024onlineconformalpredictiondecaying} or a coverage signal \citep{wang2024efficientonlinesetvaluedclassification, gollapudi2026efficientonlineconformalselection} after issuing a prediction set. 
Some OCP methods take \emph{semi-bandit feedback}, meaning that they observe the label only if the prediction set has coverage \citep{pmlr-v267-ge25a, yang2026onlineconformalpredictionadversarial, lee2026onlineconformalabstentionfactuality}.
\citet{wang2025mirroronlineconformalprediction} allows feedback to arrive intermittently with exogenous probabilities $p_t$, while \citet{wang2026onlineconformalpredictioncorrupted} study binary coverage feedback that may be adversarially corrupted. These methods reduce the amount or reliability of feedback, but still update from observations associated with deployed predictions.

\subsubsection{Bandit Approaches to OCP}
Several previous works have shown that OCP can be reduced to a multi-armed bandit problem~\cite{wang2024efficientonlinesetvaluedclassification,pmlr-v267-ge25a,yang2026onlineconformalpredictionadversarial}, leading to both coverage and regret  guarantees~\cite{ramalingam2025relationshipnoregretlearningonline}. In these works, however, the deployed action still produces an observation, either a correctness signal or a label on covered rounds. In contrast, \themethod{} receives no information from deployed predictions. It observes labels only through separate query actions, used on an expected $T^{-1/3}$ fraction of rounds, while obtaining a $O(T^{-1/3})$ coverage deficit.

\subsubsection{Queries and Label-Efficient Adversarial Learning}
\citet{zhao2025conformalizedinteractiveimitationlearning} combine intermittent observations with active queries in OCP for interactive imitation learning. 
Our setting is similar but uses a fixed query probability, and does not assume the existence of any intermittent observations.
The learning rule underlying \themethod{} is an adaptation of the label efficient exponentially weighted forecaster of \citet{label_efficient_prediction}, which queries outcomes at random and updates all expert weights using inverse probability estimates. 
Our contribution is thus not a new generic adversarial learning rule, but rather, the use of label efficient learning to obtain an OCP method beyond feedback, together with the threshold reward, the query-only estimator, and the regret to coverage result.
General partial monitoring game algorithms \citep[e.g.,][]{RUSTICHINI1999224,lattimore2018cleaningneighborhoodclassificationadversarial,lattimore2019explorationoptimisationpartialmonitoring} can also be applied in this setting, an option which we discuss in the Appendix.

\begin{table*}[t]
\centering
\small
\begin{tabular}{p{4.5cm}p{1.7cm}p{5.5cm}p{4.8cm}}
\toprule
\textbf{Method} & \textbf{Data Model} & \textbf{Feedback} & \textbf{Main Guarantees} \\
\midrule
\citet{wang2024efficientonlinesetvaluedclassification}
& stochastic/iid
& bandit feedback at every round
& sublinear regret, finite-time coverage \\
\addlinespace
\citet{pmlr-v267-ge25a}
& stochastic/iid
& semi-bandit feedback at every round
& sublinear regret \\
\addlinespace
\citet{yang2026onlineconformalpredictionadversarial}
& adversarial
& semi-bandit feedback at every round
& sublinear regret, finite-time coverage\\
\addlinespace
\citet{wang2025mirroronlineconformalprediction}
& adversarial
& intermittent miscoverage feedback
& sublinear regret, finite-time coverage \\
\addlinespace
\citet{zhao2025conformalizedinteractiveimitationlearning}
& adversarial
& intermittent miscoverage feedback
& finite-sample coverage \\
\addlinespace
\themethod{} (ours)
& adversarial
& true label revealed with fixed probability, but no feedback on deployed prediction
& sublinear regret, finite-time coverage\\
\bottomrule
\end{tabular}
\caption{Feedback models and guarantees of \themethod{} compared to related OCP methods. \emph{Bandit feedback}: the learner proposes a label $y$ and receives feedback $\mathbf{1}(y_t = y)$. 
\emph{Semi-bandit feedback}: the learner proposes a set $S$ and receives feedback $y_t$ if $\mathbf{1}(y_t \in S)$. \emph{Intermittent miscoverage feedback}: the learner receives only $\mathbf{1}(y_t \in S)$ as feedback, not on every round, but with a positive probability. Regret is defined w.r.t. an optimal fixed threshold for the prediction region. Regret bounds are finite-sample. A finite-time coverage guarantee proves a finite-sample bound on the coverage rate.}
\label{tab:feedback_comparison}
\end{table*}

\subsubsection{Contributions}
\begin{itemize}
    \item We formulate OCP beyond feedback, where the learner must choose between predicting and querying, and therefore never observes the outcome of a deployed prediction.
    \item We reduce OCP beyond feedback to a partial monitoring game, and adapt the label efficient adversarial learning technique of \citet{label_efficient_prediction} to solve the resulting game.
    \item We prove finite horizon expected regret $O(T^{2/3})$ and expected coverage at least $\beta-O(T^{-1/3})$ using an expected query rate of $T^{-1/3}$. We also give a high probability regret bound and a high-probability coverage bound.
    \item We evaluate the method under distribution shift and on adversarially rewritten safety prompts, showing that the coverage and efficiency can be controlled in practice. 
\end{itemize}

\section{Online Conformal Prediction With Queries}

\subsection{Partial Monitoring Game Formulation}

We cast the beyond feedback OCP problem as a finite \emph{partial monitoring game}, a generalization of the adversarial multi-armed bandit framework in which the feedback generated by an action need not reveal its reward. The full game formulation is given in the Appendix. 
The \enquote{arms} are a finite set of prediction thresholds $M \subset [0,1]$,
where action $m \in M$ returns the prediction set $\Gamma_C(x_t,m)$ (Equation~\ref{equation:prediction_set}), together with the additional action $\texttt{query}$. For each observed input $x_t$ and hidden label $y_t$, these actions induce the reward vector defined in Definition~\ref{def:auxiliary_reward}: queries get no reward, and a threshold is rewarded according to whether its prediction set contains $y_t$ and how tight the threshold is. The best fixed prediction action in this game therefore corresponds to the best fixed conformal threshold over the full data stream.

The feedback structure captures the key difficulty of OCP beyond feedback. Prediction actions reveal neither the label nor their coverage or reward. By contrast, the action $\texttt{query}$ returns no prediction set and has zero reward, but reveals $y_t$. This allows the learner to reconstruct the reward $R(m,t)$ for every threshold $m\in M$. As shown in the Appendix, this problem belongs to the hard class of partial monitoring games, for which the characteristic regret rate is $T^{2/3}$.

\subsection{The OCPQ Algorithm}
Having reduced our OCP settings to the partial monitoring game above, we solve the game using Algorithm~\ref{algorithm:OCP_with_queries}. The algorithm adapts the label efficient exponentially weighted forecaster rule of \citet{label_efficient_prediction}. The main protocol adaptation, in addition to applying this approach to OCP for the first time, is the estimated reward update, since labels are observed only when the learner takes this separate action. The method takes the following inputs:
\begin{itemize}
    \item  a \emph{classifier} $C : \mathcal{X} \to \Delta(\mathcal{Y})$, 
    \item a finite set of \emph{prediction thresholds} $M \subset [0,1]$, such that $0 \in M$ and $1 \in M$,
    \item a \emph{query rate} $\epsilon \in (0,1)$,
    \item a \emph{baseline parameter} $\beta \in (0,1]$,
    \item a \emph{learning rate} $\eta \in (0,1]$, and
    \item a \emph{time horizon} $T \in \mathbb{N}$.
\end{itemize}
The parameter $\beta$ is used to control the trade-off between coverage and efficiency, and can be used to derive a lower bound on the coverage rate (c.f.\ Corollary~\ref{corollary:coverage_rate_bound}).
Theorem~\ref{thm:simple_learner_regret} in the next section gives guidance on how to set the values of $\epsilon$ and $\eta$ based on $T$.
$C$ can be any black-box classifier, including a neural network. 

\begin{algorithm}[ht]
\caption{\texttt{OCP With Queries}}\label{algorithm:OCP_with_queries}
\begin{algorithmic}[1]
\STATE \textbf{input:} classifier $C$, thresholds $M$, query rate $\epsilon$, baseline $\beta$, learning rate $\eta$, horizon $T$
\STATE \textbf{output:} actions $\langle a_1, \ldots, a_T \rangle$ as per Equation~\ref{eq:action}.

\STATE let $\hat{Z}_{m,1} = 0$ for all $m \in M$
\FOR{$t \in \{1 \dots T\}$}
\STATE observe $x_t$
\STATE let $a_t = \texttt{query}$ with probability $\epsilon$
\IF{$a_t = \texttt{query}$}
    \STATE observe $y_t$
    \STATE let $\hat{Z}_{m, t+1} = \hat{Z}_{m,t} +  1 - \frac{1 - R(m,t)}{\epsilon}$ for all $m \in M$, with $R(m,t)$ as per Definition~\ref{def:auxiliary_reward}
\ELSE
    \STATE let $p_{m,t} = 
    \frac{\exp(\eta \cdot \hat{Z}_{m,t})}{\sum_{i \in M} \exp(\eta \cdot \hat{Z}_{i,t})} 
    $ 
    for $m \in M$
    \STATE let  $m_t = m$ with probability $p_{m,t}$
    \STATE let $a_t = \Gamma_C(x_t, m_t)$ (Equation~\ref{equation:prediction_set})
    \STATE let $\hat{Z}_{m, t+1} = \hat{Z}_{m,t} +  1$ for all $m \in M$
\ENDIF
\ENDFOR
\end{algorithmic}
\end{algorithm}

On round $t$, \texttt{OCPQ} makes a query with probability $\epsilon$ (line 6). A query returns $y_t$ but produces no prediction set. Otherwise, the method samples a threshold $m_t$ from the exponentially weighted distribution (lines 11 and 12) and outputs $S_t=\Gamma_C(x_t,m_t)$ (line 13). The prediction set contains labels whose classifier scores lie within $m_t$ of the largest score. No label or coverage indicator is observed after this prediction. This separation between prediction and observation is the beyond feedback feature of the method.


To find the best prediction threshold, \texttt{OCPQ} implicitly tries to maximise an \emph{auxiliary reward function}, defined as follows:
\begin{definition}\label{def:auxiliary_reward}
    Given a data stream $z^T$, classifier $C$, and set of prediction thresholds $M$, let the \emph{auxiliary reward function} 
    $$
    R : M \cup \{\texttt{query}\} \times \{1 \dots T\} \to [0,1]
    $$
    be the function such that 
    $$
    R(\texttt{query}, t) = 0
    $$
    for all $t$, and such that for $m \in M$,
    \begin{equation*}
        R(m,t) =\begin{cases}
        1 - m \cdot (1-\beta) \text{ if } y_t \in \Gamma_C(x_t, m),\\
        0 \text{ otherwise},
        \end{cases}
    \end{equation*}
    where $\Gamma_C(x_t, m)$ is defined as in Equation~\ref{equation:prediction_set}.
\end{definition}
In other words, the algorithm never gets any reward when it makes a query or when it outputs a prediction set $S_t$ such that $y_t \not\in S_t$.
Otherwise, if $y_t \in S_t$, the algorithm gets a higher reward for selecting a tighter prediction threshold;  
this will encourage the algorithm to select small prediction thresholds (which in turn leads to small prediction sets), but without selecting thresholds that are \emph{too} small, since this would lead to $0$ reward.
The value of $\beta$ lets us control the trade-off between achieving a higher coverage rate and a better efficiency 
(c.f.\ Corollary~\ref{corollary:coverage_rate_bound} and Figure~\ref{fig:beta_vs_coverage_and_set_size}).
In particular, with $\beta=1$ all coverage events are treated equally; with $\beta=0$, a coverage event yields a $1-m$ reward, meaning that the prediction threshold has the highest impact on reward.
We provide further discussion of the construction of the reward function in the Appendix, where we also compare the reward in Definition~\ref{def:auxiliary_reward} to other reward functions in the existing literature.

Note that \texttt{OCPQ} never actually directly observes the value of the reward function --- this is why we say that it only \emph{implicitly} maximises this reward. If it makes a query then it can calculate how much reward each prediction threshold \emph{would have got} on that round, but at the expense of getting $0$ actual reward on that round. 
To estimate the reward of each prediction threshold, \texttt{OCPQ} associates each $m \in M$ with a \emph{weight} $\hat{Z}_{m,t}$, initially set to $0$, and then updates these weights in a way that tracks the value of the corresponding prediction threshold (lines 3, 9, and 14). In particular, for each $m \in M$ and $t \leq T$, we have that $\hat{Z}_{m,t}$ is an unbiased estimator of the cumulative reward (up to $t$) of threshold $m$,
as long as $\epsilon > 0$, where the expectation is over the randomness in \texttt{OCPQ} (c.f.\ Lemma~\ref{lemma:the_estimators_are_unbiased} in the Appendix). 
The main difference from the original label efficient protocol of \citet{label_efficient_prediction} is that a query is treated as a distinct action; in \citet{label_efficient_prediction}, the learner may take an action and query the outcome on the same round.

\paragraph{Extensions} In the Appendix, we describe a few ways to generalise the \texttt{OCPQ} algorithm. Specifically, we describe how to accept delayed observations, how to support infinite or unknown time horizons, how to use it with set-valued classifiers, and how to use it with non-discrete label spaces.


\section{Theoretical Results}
In this section, we present our theoretical results about \texttt{OCPQ}. All proofs are provided in the Appendix.

We first prove a \emph{regret bound} for \texttt{OCPQ}. 
Let $R$ be the auxiliary reward function given in Definition~\ref{def:auxiliary_reward},
and define the \emph{regret} of a sequence of actions $a^T = \langle a_1 \dots a_T \rangle$ as
$$
\text{reg}(a^T) = \max_{m \in M} \sum_{t=1}^T R(m,t) - \sum_{t=1}^T R(a_t, t),
$$
where each action $a_t$ is either \texttt{query} or an element of $M$.
\footnote{It is important to note that the regret is defined relative to a given classifier $C$, set of thresholds $M$, and data stream $z^T$.}
%
Similarly, for a given algorithm, we can define its \emph{expected regret} (when using a classifier $C$ on a data stream $z^T$) as the expectation of the regret of the actions that the algorithm takes. We can now state the following result:

\begin{theorem}\label{thm:simple_learner_regret}
    For any data stream $z^T$ and classifier $C$, and any $\eta \in (0,1]$ and $\epsilon \in (0,0.5]$, the expected  regret of Algorithm~\ref{algorithm:OCP_with_queries} with the auxiliary reward (Definition~\ref{def:auxiliary_reward}) is at most
    $$
    \frac{\ln(|M|)}{\eta} + \epsilon \cdot T + \left(\frac{\eta}{\epsilon}\right) \cdot T.
    $$
    If we set $\epsilon = T^{-1/3}$ and $\eta = T^{-2/3} \cdot \sqrt{\ln(|M|)}$ then this simplifies to
    $$
    T^{2/3} \cdot (2 \sqrt{\ln(|M|)} + 1).
    $$
\end{theorem}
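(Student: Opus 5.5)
The plan is to recast Algorithm~\ref{algorithm:OCP_with_queries} as the exponentially weighted forecaster (Hedge) run on importance-weighted loss estimates, in the style of \citet{label_efficient_prediction}, and then pay separately for the rounds spent querying.

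\textbf{Setup.} Define losses $\ell(m,t) = 1 - R(m,t) \in [0,1]$ for $m\in M$. Let $q_t = \mathds{1}\{a_t=\texttt{query}\}$ and define estimates $\hat\ell(m,t) = q_t\,\ell(m,t)/\epsilon$. Then lines 9 and 14 give
\[
\hat Z_{m,t} = (t-1) - \sum_{s<t}\hat\ell(m,s).
\]
The common shift $(t-1)$ cancels in the softmax, so
\[
p_{m,t} \propto \exp\Bigl(-\eta\sum_{s<t}\hat\ell(m,s)\Bigr).
\]
This is exactly Hedge on the losses $\hat\ell$. The distribution $p_{\cdot,t}$ is well defined on every round, including query rounds where it is not used. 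It is measurable with respect to the coins of rounds $s<t$, so it is independent of $q_t$. Since the adversary is oblivious, $\mathbb{E}[\hat\ell(m,t)\mid \mathcal{F}_{t-1}] = \ell(m,t)$, which is the content of Lemma~\ref{lemma:the_estimators_are_unbiased}. I expect this conditioning and independence bookkeeping to be the only delicate point; everything else is standard.

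\textbf{Hedge bound.} Apply the potential argument with $W_t=\sum_m \exp(-\eta\sum_{s<t}\hat\ell(m,s))$. Since $\hat\ell\ge 0$, the inequality $e^{-x}\le 1-x+x^2/2$ holds for $x\ge0$. Together with $\ln(1+u)\le u$, this yields, pathwise,
\[
\sum_t\sum_m p_{m,t}\hat\ell(m,t) - \min_m\sum_t\hat\ell(m,t) \le \frac{\ln|M|}{\eta} + \frac{\eta}{2}\sum_t\sum_m p_{m,t}\hat\ell(m,t)^2 .
\]
Now take expectations. Conditionally on $\mathcal F_{t-1}$ we have $\mathbb E[\hat\ell(m,t)^2\mid\mathcal F_{t-1}] = \ell(m,t)^2/\epsilon \le 1/\epsilon$, so the quadratic term is at most $\eta T/(2\epsilon)$. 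For the comparator term, $\mathbb E[\min_m \sum_t\hat\ell(m,t)] \le \min_m \sum_t \ell(m,t)$ by unbiasedness. Hence
\[
\mathbb E\Bigl[\sum_t\sum_m p_{m,t}\ell(m,t)\Bigr] - \min_m\sum_t\ell(m,t) \le \frac{\ln|M|}{\eta} + \frac{\eta T}{2\epsilon}.
\]

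\textbf{Cost of querying.} On round $t$, conditionally on $\mathcal F_{t-1}$, the learner's expected reward is $(1-\epsilon)\sum_m p_{m,t}R(m,t)$, because queries earn $0$ and the threshold is drawn from $p_{\cdot,t}$ independently of the coin. Since $R\le1$, this is at least $\sum_m p_{m,t}R(m,t)-\epsilon$. Summing over $t$ gives
\[
\mathbb E\Bigl[\sum_t R(a_t,t)\Bigr] \ge T - \mathbb E\Bigl[\sum_t\sum_m p_{m,t}\ell(m,t)\Bigr] - \epsilon T .
\]
Combining this with the Hedge bound and using $T-\min_m\sum_t\ell(m,t)=\max_m\sum_tR(m,t)$ bounds the expected regret by
\[
\frac{\ln|M|}{\eta}+\epsilon T+\frac{\eta T}{2\epsilon},
\]
which is at most the claimed expression. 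The assumption $\epsilon\le 1/2$ is not needed in this route, but it is harmless.

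\textbf{Substitution.} With $\epsilon=T^{-1/3}$ and $\eta=T^{-2/3}\sqrt{\ln|M|}$:
\begin{itemize}
\item $\ln|M|/\eta = T^{2/3}\sqrt{\ln|M|}$,
\item $\epsilon T = T^{2/3}$,
\item $\eta T/\epsilon = T^{2/3}\sqrt{\ln|M|}$.
\end{itemize}
The sum is $T^{2/3}(2\sqrt{\ln|M|}+1)$. One should also check that $\eta\le1$, which requires $\ln|M|\le T^{4/3}$, and note that this value of $\epsilon$ lies in $(0,1/2]$ for $T\ge 8$.
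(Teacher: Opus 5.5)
Your proof is correct, but it takes a different route from the paper at the key analytic step.

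\textbf{The paper's route.} The paper stays in reward form and runs the potential argument directly on $\hat X_{m,t}$. This quantity equals $1$ on non-query rounds and $1-(1-R(m,t))/\epsilon$ on query rounds. The paper uses $e^x\le 1+x+x^2$ for $x\le 1$, which is why the estimator is built so that $\hat X_{m,t}\le 1$ and why $\eta\le 1$ is assumed. It then bounds $\mathbb{E}[\sum_m P_{m,t}\hat X_{m,t}^2]\le 1/\epsilon-1$ via Lemma~\ref{lemma:simple_calculus_lemma}, which is where $\epsilon\le 1/2$ is used. The query cost $\epsilon T$ is paid exactly as you do.

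\textbf{What your route changes.} You observe that $\hat Z_{m,t}=(t-1)-\sum_{s<t}\hat\ell(m,s)$ and that the common shift cancels in the softmax. This shows the paper's estimator is the standard nonnegative importance-weighted loss estimator in disguise. Because your exponent $-\eta\hat\ell(m,t)$ is nonpositive, you can use $e^{-x}\le 1-x+x^2/2$ for all $x\ge 0$. As a result:
\begin{itemize}
\item no boundedness condition on $\eta\hat X_{m,t}$ is needed;
\item the quadratic term gains a factor $1/2$;
\item the conditional second moment $\ell(m,t)^2/\epsilon\le 1/\epsilon$ is immediate, with no need for the calculus lemma.
\end{itemize}
Your bound $\ln|M|/\eta+\epsilon T+\eta T/(2\epsilon)$ is sharper than the stated one. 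It also holds for every $\eta>0$ and $\epsilon\in(0,1)$.

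Your filtration bookkeeping also makes precise an independence claim the paper states loosely. $P_{m,t}$ is a function of the query coins $q_1,\dots,q_{t-1}$ alone, while the round-$t$ estimate depends only on $q_t$.

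\textbf{What the paper's route buys.} Its pathwise inequality is written in exactly the quantities it reuses verbatim in the proof of Theorem~\ref{thm:high_probability_bound}: $\hat X_{m,t}$, $\hat Z_{m,T+1}$, and $\sum_m P_{m,t}\hat X_{m,t}$. Your inequality would serve there equally well after undoing the shift.
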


In other words, with an appropriate query and learning rate, the expected regret of \texttt{OCPQ} 
grows as $O(T^{2/3})$. Using the results in \citet{lattimore2018cleaningneighborhoodclassificationadversarial}, it is possible to show that this growth rate is asymptotically optimal --- for details, see Proposition~\ref{prop:OCPPF_is_a_hard_game} in the Appendix. It is also worth noting that this can be achieved with a very low query rate of only $T^{-1/3}$. Of course, for this to be of interest to us, we must also establish some kind of connection between the regret and the coverage rate (or inefficiency). Our next result provides such a connection to the coverage rate:

\begin{theorem}\label{thm:regret_coverage_connection}
    Given a data stream $z^T$, classifier $C$, and sequence of actions $a^T = \langle a_1 \dots a_T \rangle \in (M \cup \{\texttt{query}\})^T$, 
    if $\text{reg}(a^T) \leq N$, then the coverage of $a^T$ is at least $\beta - N/T$.
\end{theorem}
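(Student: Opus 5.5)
The plan is to compare the learner's cumulative reward against the reward of the trivial threshold $m=1\in M$. We assume, as in Algorithm~\ref{algorithm:OCP_with_queries}, that $1 \in M$, and we read ``coverage'' as $p_\text{cover}$.

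First, observe that $\Gamma_C(x_t,1)=\mathcal{Y}$ for every $t$: since every $p_C(y\mid x_t)\ge 0 \ge \max_{y^\star} p_C(y^\star\mid x_t)-1$, all labels pass the test in Equation~\ref{equation:prediction_set}. Hence $y_t\in\Gamma_C(x_t,1)$ always, and Definition~\ref{def:auxiliary_reward} gives $R(1,t)=1-(1-\beta)=\beta$ for every $t$. Consequently
\[
\max_{m\in M}\sum_{t=1}^T R(m,t)\;\ge\;\sum_{t=1}^T R(1,t)=\beta T .
\]
Combining this with the hypothesis $\text{reg}(a^T)\le N$ yields $\sum_{t=1}^T R(a_t,t)\ge \beta T - N$.

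Second, we upper bound the learner's reward by its number of coverage events. For each $t$ there are three cases. If $a_t=\texttt{query}$, then $R(a_t,t)=0$. If $a_t=m\in M$ and $y_t\notin\Gamma_C(x_t,m)$, then $R(a_t,t)=0$. Otherwise $R(a_t,t)=1-m(1-\beta)\le 1$, because $m\in[0,1]$ and $\beta\le 1$. In every case
\[
R(a_t,t)\le \mathds{1}\{a_t\neq\texttt{query}\}\cdot\mathds{1}\{y_t\in S_t\},
\]
where $S_t=\Gamma_C(x_t,a_t)$. Summing over $t$ and dividing by $T$ gives
\[
p_\text{cover}\ \ge\ \frac{1}{T}\sum_{t=1}^T R(a_t,t)\ \ge\ \beta - \frac{N}{T}.
\]

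There is no real obstacle here; the key observation is the choice of comparator $m=1$, whose reward is exactly $\beta$ on every round. The only care needed is to confirm that $1\in M$ is guaranteed (it is, by the input specification) and that $\Gamma_C(x,1)=\mathcal{Y}$ holds even when some label probabilities are zero.
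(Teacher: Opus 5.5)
Your proposal is correct and follows essentially the same route as the paper: you compare against the threshold $m=1$, whose auxiliary reward is exactly $\beta$ on every round, and then bound each round's reward by the coverage indicator to get $p_\text{cover} \geq \frac{1}{T}\sum_t R(a_t,t) \geq \beta - N/T$. Your explicit checks that $1 \in M$ and that $\Gamma_C(x_t,1)=\mathcal{Y}$ fill in details the paper leaves implicit.
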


Using these two results, we can now derive a lower bound on the expected coverage rate of \texttt{OCPQ}: 

\begin{corollary}\label{corollary:coverage_rate_bound}
    On any data stream $z^T$ and for any classifier $C$, the coverage rate $p_{\text{cover}}$ of Algorithm~\ref{algorithm:OCP_with_queries} satisfies
        $$
        \mathbb{E}\left[p_{\text{cover}}\right] \geq \beta - \left(\frac{2 \sqrt{\ln(|M|)} + 1}{\sqrt[3]{T}}\right)
        $$    
    provided that $\eta = T^{-2/3} \cdot \sqrt{\ln|M|}$ and $\epsilon = T^{-1/3} \leq 0.5$.
\end{corollary}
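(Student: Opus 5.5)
The plan is to combine Theorem~\ref{thm:simple_learner_regret} and Theorem~\ref{thm:regret_coverage_connection}, passing from the deterministic statement of the latter to an expectation. Theorem~\ref{thm:regret_coverage_connection} is stated pathwise, as ``if $\text{reg}(a^T)\le N$''. Taking expectations of that implication directly would force us to condition on a high-probability event. Instead, I would first extract from its proof (or reprove) the pointwise linear inequality
\[
T\cdot p_{\text{cover}}(a^T) \;\ge\; \beta T - \text{reg}(a^T)
\]
for every action sequence $a^T$. This is the same as saying that the conclusion holds with $N = \text{reg}(a^T)$ itself. The theorem gives exactly this when applied with $N := \text{reg}(a^T)$, which is legitimate because the hypothesis $\text{reg}(a^T)\le \text{reg}(a^T)$ holds trivially. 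So for every realisation of the algorithm's randomness, $p_{\text{cover}} \ge \beta - \text{reg}(a^T)/T$.

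Next, I would take expectations over the internal randomness of Algorithm~\ref{algorithm:OCP_with_queries}: the query coin flips and the threshold sampling. The data stream $z^T$ is fixed in advance (oblivious adversary), so $R(m,t)$ is deterministic and all randomness sits in $a^T$. By linearity,
\[
\mathbb{E}[p_{\text{cover}}] \;\ge\; \beta - \frac{\mathbb{E}[\text{reg}(a^T)]}{T}.
\]

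Finally, I would invoke Theorem~\ref{thm:simple_learner_regret} with $\epsilon = T^{-1/3}$ and $\eta = T^{-2/3}\sqrt{\ln|M|}$. Its hypotheses need $\epsilon\le 0.5$, which the corollary assumes, and $\eta\le 1$. The latter holds whenever $\sqrt{\ln |M|}\le T^{2/3}$. If it fails, the claimed bound exceeds $\beta$ in magnitude of deficit and is trivial because $p_{\text{cover}}\ge 0$: here $2\sqrt{\ln|M|}/T^{1/3} > 2T^{1/3} \ge 1$, so the right-hand side is negative. Under the stated parameters the theorem gives $\mathbb{E}[\text{reg}]\le T^{2/3}(2\sqrt{\ln|M|}+1)$. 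Dividing by $T$ yields the stated bound $\beta - (2\sqrt{\ln|M|}+1)/\sqrt[3]{T}$.

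The main obstacle is the first step: making sure Theorem~\ref{thm:regret_coverage_connection} really yields the pointwise inequality, so that expectations commute with it. Applying it with the random $N=\text{reg}(a^T)$ settles this, provided the theorem holds for arbitrary real $N$, including negative values, which its statement permits. The rest is bookkeeping. One further point to check is that the expected regret in Theorem~\ref{thm:simple_learner_regret} is $\mathbb{E}[\text{reg}(a^T)]$, with the $\max_m$ taken over a fixed comparator. This holds because the rewards are deterministic under the oblivious stream, so $\max_m\sum_t R(m,t)$ is a constant.
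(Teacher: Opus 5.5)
Your proposal is correct and follows the paper's route: the paper's proof just combines the expected regret bound of Theorem~\ref{thm:simple_learner_regret} with Theorem~\ref{thm:regret_coverage_connection} and divides by $T$. Your two extra steps are sound and make explicit what the paper's ``immediate'' argument leaves implicit: extracting the pathwise inequality $p_{\text{cover}} \ge \beta - \text{reg}(a^T)/T$ so that expectations pass through by linearity, and checking $\eta \le 1$ (or observing the bound is vacuous otherwise).
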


Note that Corollary~\ref{corollary:coverage_rate_bound} means that $\beta$ is nearly a lower bound on the expected coverage rate.\footnote{Note that $|M|$ and $T$ can be assumed to be known beforehand. This means that if we wish to target a specific coverage rate, then we can actually calculate the exact value of $(2 \sqrt{\ln(|M|)} + 1)/\sqrt[3]{T}$ and simply increase the value of $\beta$ accordingly.} Since a higher coverage rate typically will correspond to lower efficiency, and vice versa, this means that we can use $\beta$ to control the trade-off between the two. However, it is important to note that Corollary~\ref{corollary:coverage_rate_bound} is a \emph{lower} bound (for \emph{any} data stream), and that the actual coverage rate may be higher than this bound (at the expense of higher inefficiency), see Figure~\ref{fig:beta_vs_coverage_and_set_size}.

In addition to having a lower bound on the coverage rate, it would be nice to also have an upper bound on the inefficiency. Unfortunately, it would be impossible to derive such a bound without making any assumptions about the classifier or the data stream. To see this, note that $z^T$ could be a data stream on which the correct label is always the one which $C$ assigns the lowest probability --- in that case, the inefficiency of an optimal learner will converge to $|\mathcal{Y}|$, which is also a trivial general upper bound.
However, 
note that
Theorem~\ref{thm:simple_learner_regret} implies that the expected regret of \texttt{OCPQ} \emph{per time step} goes to 0 as $T$ increases. This means that if $T$ is sufficiently large, then we should expect \texttt{OCPQ} to converge to mostly using the prediction threshold $m \in M$ that maximises the auxiliary reward. The way this reward function is constructed means that the learner will be encouraged to use a prediction threshold that is as small as possible. It is also possible to derive a bound on $|S_t|$ by making assumptions about $C$ and $x_t$. For example:

\begin{proposition}
    For any classifier $C$, any $x_t \in \mathcal{X}$, and any threshold $m_t \in [0,1]$, let $S_t$ be constructed as in Equation~\ref{equation:prediction_set}.
    Let $\max_{y \in \mathcal{Y}} p_C(y \mid x_t) = p_\text{max}$. 
    If $m_t < p_\text{max}$ then $|S_t| \leq 1 + \frac{1-p_\text{max}}{p_\text{max} - m_t}$, and if $m_t \geq p_\text{max}$ then $|S_t| = |\mathcal{Y}|$.
\end{proposition}

We next provide a high-probability bound on the regret of \texttt{OCPQ}, which shows that its variance cannot be too large:

\begin{theorem}\label{thm:high_probability_bound}
    If $\epsilon = T^{-1/3} \leq 0.5$ and $\eta = T^{-2/3} \cdot \sqrt{\ln(|M|)}$, then for any $\delta \in (0, 1/3)$ we have that the regret of Algorithm~\ref{algorithm:OCP_with_queries} is at most
    \begin{align*}
    &2 \cdot T^{5/6 + \delta/2} + 2 \cdot T^{3/4} + (1 + 2\sqrt{\ln(|M|)}) \cdot T^{2/3}\\
    + &\sqrt{\ln(|M|)} \cdot T^{1/3} + \sqrt{\ln(|M|)} \cdot T^{(1+\delta)/2}
    \end{align*}
    with probability at least
    \begin{multline}\label{eq:high-prob-prob}
    (1-3\exp(-2T^\delta))\cdot\\
    (1-\exp(-2T^{1/2}) - \exp(-2(T^{1/6} - T^{\delta/2}))).
    \end{multline}
\end{theorem}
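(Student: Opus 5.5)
Throughout, we write $\rho = \eta/\epsilon$, so that with the stated parameters $\epsilon = T^{-1/3}$ and $\rho = T^{-1/3}\sqrt{\ln|M|}$.

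\textbf{Decomposition.} We bound the realized regret by splitting it into three pieces and controlling each by a concentration inequality:
\begin{enumerate}
\item the deterministic potential-based regret of the exponential weights against the estimated rewards $\hat Z$;
\item the deviation of the estimates $\hat Z_{m,T+1}$ from the true cumulative rewards $Z_{m,T+1}=\sum_t R(m,t)$, uniformly over $m\in M$;
\item the deviation of the realized reward $\sum_t R(a_t,t)$ from the conditional expected reward $\sum_t (1-\epsilon)\sum_m p_{m,t}R(m,t)$.
\end{enumerate}

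\textbf{Step 1: the standard potential argument.} We write $\Phi_t=\frac1\eta\ln\sum_m e^{\eta\hat Z_{m,t}}$. On non-query rounds $\Phi$ increases by exactly $1$. On a query round the increment is $1-\frac{1-R(m,t)}{\epsilon}\in[1-1/\epsilon,1]$. Using $e^x\le 1+x+x^2$ for $x\le 1$, or Hoeffding's lemma applied to shifted losses, this gives
\[
\max_m \hat Z_{m,T+1}-\frac{\ln|M|}{\eta}\le \sum_t \sum_m p_{m,t}\,\hat r_{m,t}+\eta\sum_t\sum_m p_{m,t}\,\hat\ell_{m,t}^2,
\]
where $\hat r_{m,t}$ is the per-round increment of $\hat Z_{m,t}$ and $\hat\ell_{m,t}$ is the corresponding loss estimate. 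On query rounds $\hat\ell_{m,t}^2\le 1/\epsilon^2$, and otherwise it is $0$. So the quadratic term equals $\frac{\eta}{\epsilon^2}Q$. A Hoeffding bound on $Q\sim\mathrm{Bin}(T,\epsilon)$, namely $Q\le \epsilon T+T^{(1+\delta)/2}$ with probability at least $1-\exp(-2T^\delta)$, turns this into
\[
\frac{\eta}{\epsilon}T+\frac{\eta}{\epsilon^2}T^{(1+\delta)/2},
\]
which is $\sqrt{\ln|M|}\,T^{2/3}$ plus a lower-order term. Together with $\ln|M|/\eta=\sqrt{\ln|M|}\,T^{2/3}$ and the $\epsilon T=T^{2/3}$ cost of the forced queries, this reproduces the $(1+2\sqrt{\ln|M|})T^{2/3}$ term. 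The surplus from the binomial deviation of $Q$ is the source of the $\sqrt{\ln|M|}\,T^{(1+\delta)/2}$ and $\sqrt{\ln|M|}\,T^{1/3}$ terms.

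\textbf{Step 2: concentration of the estimates.} For each fixed $m$, $\hat Z_{m,T+1}-Z_{m,T+1}$ is a sum of independent (oblivious adversary, fixed coin flips) zero-mean terms bounded in a range of width about $1/\epsilon=T^{1/3}$. Hoeffding alone would give deviation $T^{1/3}\sqrt{T\cdot T^{\delta}}=T^{5/6+\delta/2}$ with failure probability $\exp(-2T^\delta)$ per bound. I would apply this to the best fixed $m^\star$, which is deterministic given $z^T$, and to the learner's side. This is what I expect produces the $2T^{5/6+\delta/2}$ term and the factor $(1-3\exp(-2T^\delta))$, with three Hoeffding events for $Q$, $m^\star$, and the learner side. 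Since $m^\star$ is fixed, no union bound over $M$ is needed.

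\textbf{Step 3: the learner side.} The learner-side term $\sum_t\sum_m p_{m,t}(\hat r_{m,t}-R(m,t))$ has predictable weights $p_{m,t}$. I would handle it with Azuma--Hoeffding, again with increments of size $O(1/\epsilon)$. The realized-versus-expected reward difference from sampling $a_t$ has bounded increments, and Azuma gives the $T^{3/4}$-type terms with failure probabilities $\exp(-2T^{1/2})$.

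\textbf{Main obstacle.} The delicate step is matching the exact second factor $1-\exp(-2T^{1/2})-\exp(-2(T^{1/6}-T^{\delta/2}))$. This presumably comes from conditioning on the event that the query count is near $\epsilon T$, so that the large-increment martingale has only about $T^{2/3}$ nontrivial steps. Bounding the increments conditionally on that count would give the odd exponent $T^{1/6}-T^{\delta/2}$. I would therefore first condition on the Hoeffding event for $Q$, then apply Azuma to the martingale restricted to query rounds, and tune thresholds to match the stated terms. If the constants do not line up exactly, a bound of the same order with slightly different constants follows from the same three-step argument.
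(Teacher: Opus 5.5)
Your argument is correct. It differs from the paper mainly in how the learner-side term $\tilde{R}_{T,\pi}-R_{T,\pi}$ is controlled, where $\tilde{R}_{T,\pi}=\sum_t\sum_m p_{m,t}\hat{X}_{m,t}$. The paper shares your first two ingredients: the potential bound with Hoeffding for $Q\le\epsilon T+n$, and Hoeffding for the independent terms $\mathds{1}\{a_t=\texttt{query}\}(1-R(m^\star,t))\in[0,1]$, which gives $n/\epsilon=T^{5/6+\delta/2}$ with $n=T^{(1+\delta)/2}$.

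For the third piece the paper proceeds as follows. It writes $\tilde{R}_{T,\pi}-R_{T,\pi}=T(1-r_1)-(Q/\epsilon)(1-r_2)$, where $r_1$ is the realised average reward and $r_2$ is the average of $\sum_m p_{m,t}R(m,t)$ over query rounds. It then uses the event $Q\ge\epsilon T-n$ (its third $e^{-2T^\delta}$ event) to reduce this to $n/\epsilon+T(r_2-r_1)$. Finally it applies Hoeffding to $r_1$ over $T$ rounds and to $r_2$ over at least $\epsilon T-n$ query rounds, each at deviation $\lambda=T^{-1/4}$, together with $\mathbb{E}[r_1]=(1-\epsilon)\mathbb{E}[r_2]$. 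The $r_2$ step is what produces $\exp(-2(T^{1/6}-T^{\delta/2}))$ and one of the two $T^{3/4}$ terms.

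You replace this with two Azuma--Hoeffding bounds around conditional means, plus the deterministic $\epsilon T$. One bound is for $\sum_t\sum_m p_{m,t}(\hat{X}_{m,t}-R(m,t))$ and the other for $\sum_t((1-\epsilon)\sum_m p_{m,t}R(m,t)-R(a_t,t))$. What this buys is rigour. The summands of $r_1,r_2$ depend on the adaptive weights, are centred at unconditional expectations, and $r_2$ has a random normaliser $Q$, so the paper's plain Hoeffding is heuristic there. Your increments, by contrast, are genuine martingale differences.

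To get the constant $2$ in the exponents, use the version of Azuma--Hoeffding for increments lying in a predictable interval of length $c_t$, which gives $\exp(-2s^2/\sum_t c_t^2)$. Here $c_t=(1-\sum_m p_{m,t}R(m,t))/\epsilon\le 1/\epsilon$ for the first bound and $c_t\le 1$ for the second. Working with nonnegative losses in the potential step also removes the paper's $\eta T=\sqrt{\ln|M|}\,T^{1/3}$ term. That term comes from $\hat{X}_{m,t}^2=1$ on non-query rounds, not from the deviation of $Q$.

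Consequently your ``main obstacle'' is not an obstacle, and you should drop the plan of conditioning on $Q$ and running Azuma on query rounds only. Conditioning on the query count makes the coins dependent, so that step would itself need care. Your union bound already gives regret at most $2T^{5/6+\delta/2}+T^{3/4}+(1+2\sqrt{\ln|M|})T^{2/3}+\sqrt{\ln|M|}\,T^{(1+\delta)/2}$ with probability at least $1-a-b$, where $a=3e^{-2T^\delta}$ and $b=e^{-2T^{1/2}}$. This regret bound is below the stated one. For the probability, set $c=e^{-2(T^{1/6}-T^{\delta/2})}$; then $b\le c$, and $a<3e^{-2}<1/2$ because $T\ge 8$. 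Hence $(1-a)(1-b-c)=1-a-b-c(1-a)+ab\le 1-a-b$, and the theorem as stated follows directly from your three-step argument.
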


The Appendix provides a more general version of Theorem~\ref{thm:high_probability_bound}.
To make the meaning of this theorem a bit more intuitive, first note that $(1-3\exp(-2T^\delta))\cdot(1-\exp(-2T^{1/2}) - \exp(-2(T^{1/6} - T^{\delta/2})))$ will approach 1 as $T$ approaches infinity, provided that $0 < \delta < 1/3$. Moreover, also note that we can make $T^{5/6 + \delta/2}$ arbitrarily close to $T^{5/6}$ by making $\delta$ sufficiently small, and note that this term will dominate the other terms in the expression. 
In other words, Theorem~\ref{thm:high_probability_bound} states that for large $T$, the regret of \texttt{OCPQ} will be less than approximately $2 \cdot T^{5/6}$ with very high probability. 

\begin{figure*}[t]
    \centering
    \begin{tabular}{ccccc}
         & \begin{scriptsize}\textbf{MNIST $\rightarrow$ USPS}\end{scriptsize} & \begin{scriptsize}\textbf{MNIST $\rightarrow$ MNIST-C}\end{scriptsize} & \begin{scriptsize}\textbf{CIFAR-10 $\rightarrow$ CIFAR-10-C}\end{scriptsize} & \begin{scriptsize}\textbf{CIFAR-100 $\rightarrow$ CIFAR-100-C}\end{scriptsize}\\
 \begin{sideways}\hspace{0.7cm}\textbf{Coverage} \end{sideways}&

        \includegraphics[width=.22\linewidth]{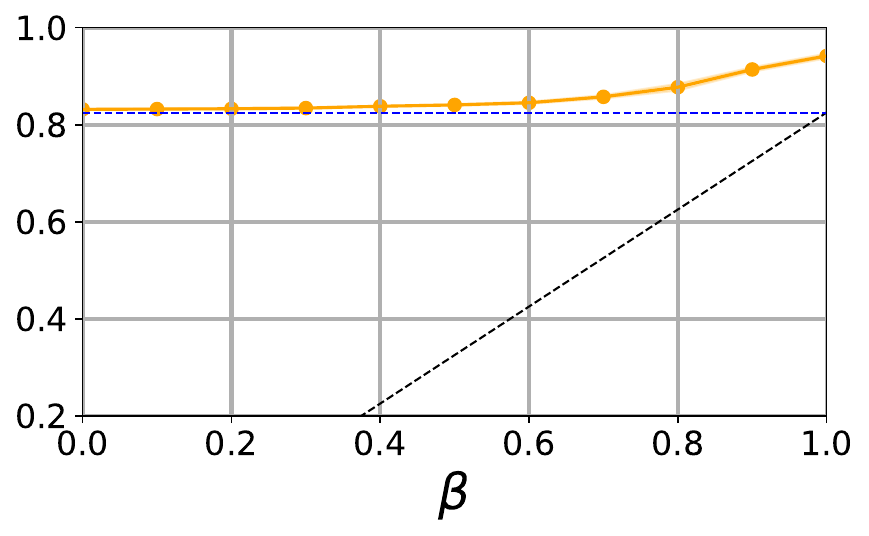}
   &
        \includegraphics[width=.22\linewidth]{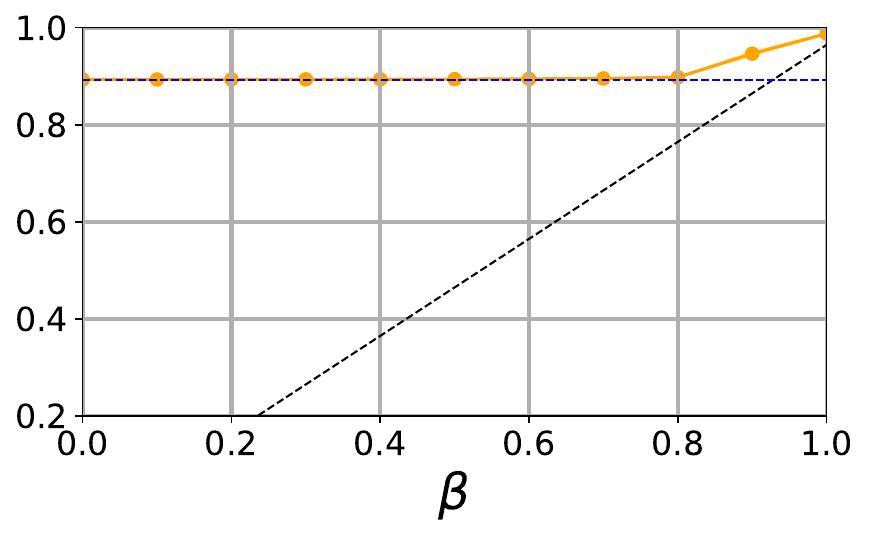}
    &
        \includegraphics[width=.22\linewidth]{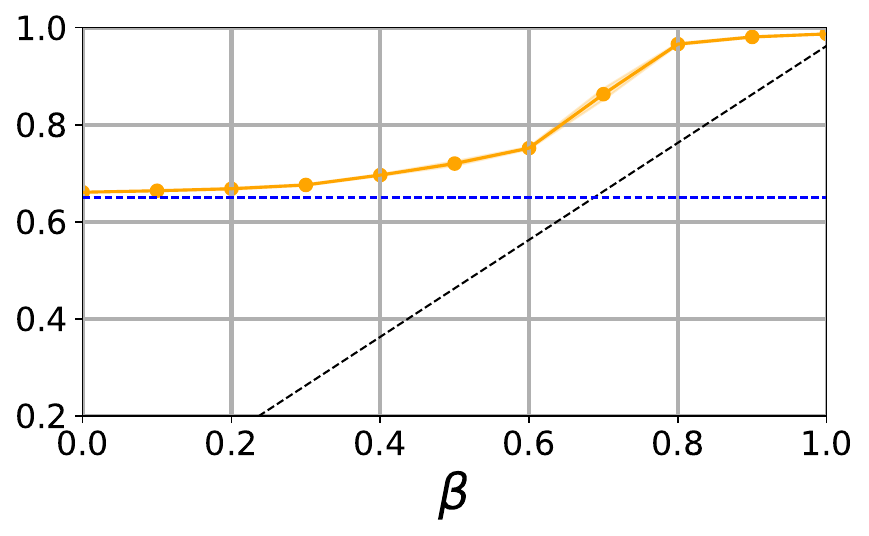}
    &
        \includegraphics[width=.22\linewidth]{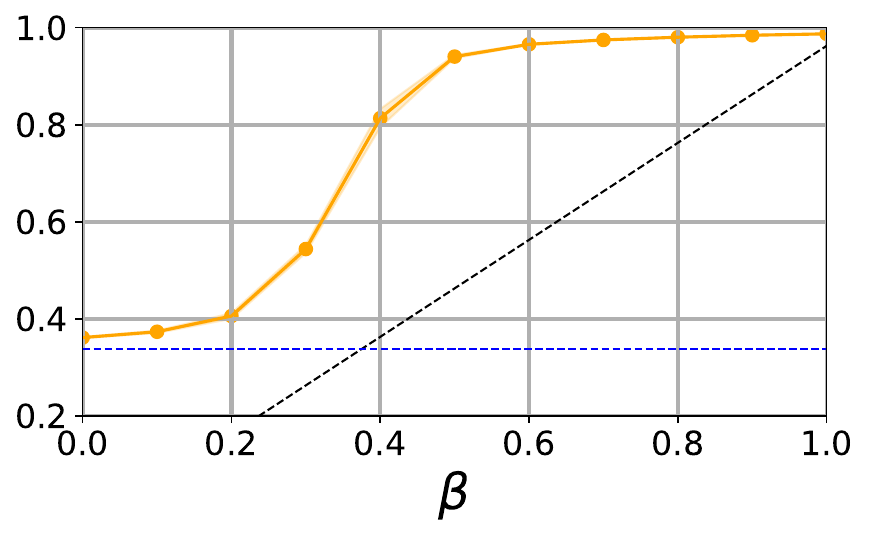}
\\ 
\begin{sideways}\hspace{1cm}\textbf{Inefficiency} \end{sideways}&
        \includegraphics[width=.22\linewidth]{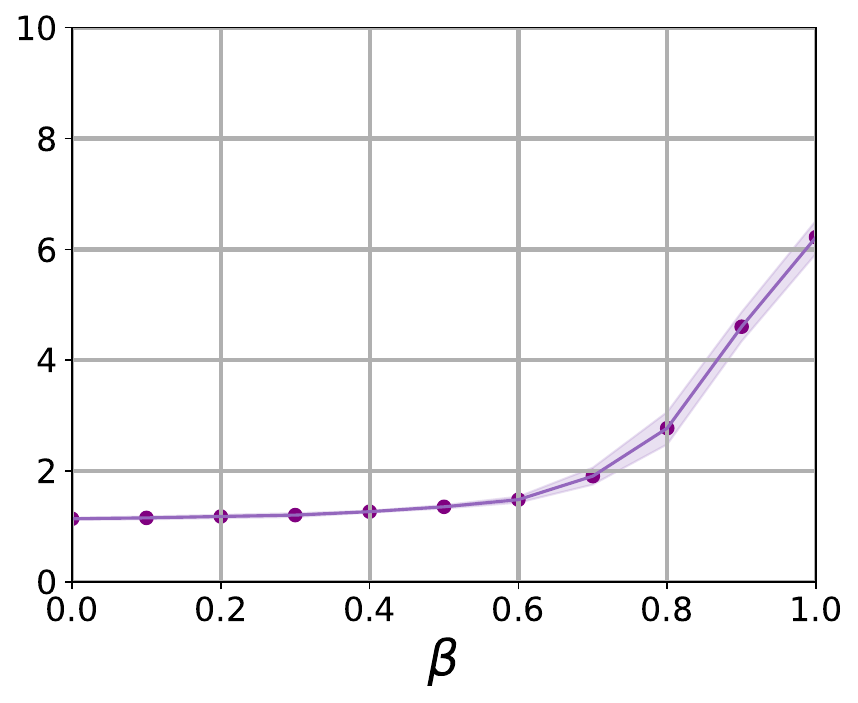}
    &
        \includegraphics[width=.22\linewidth]{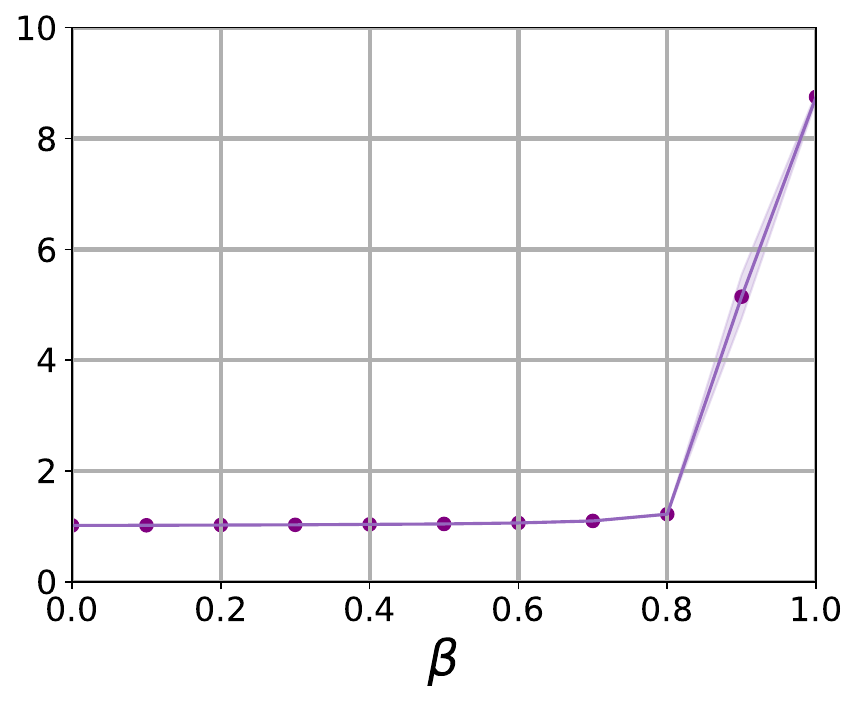}
    &
        \includegraphics[width=.22\linewidth]{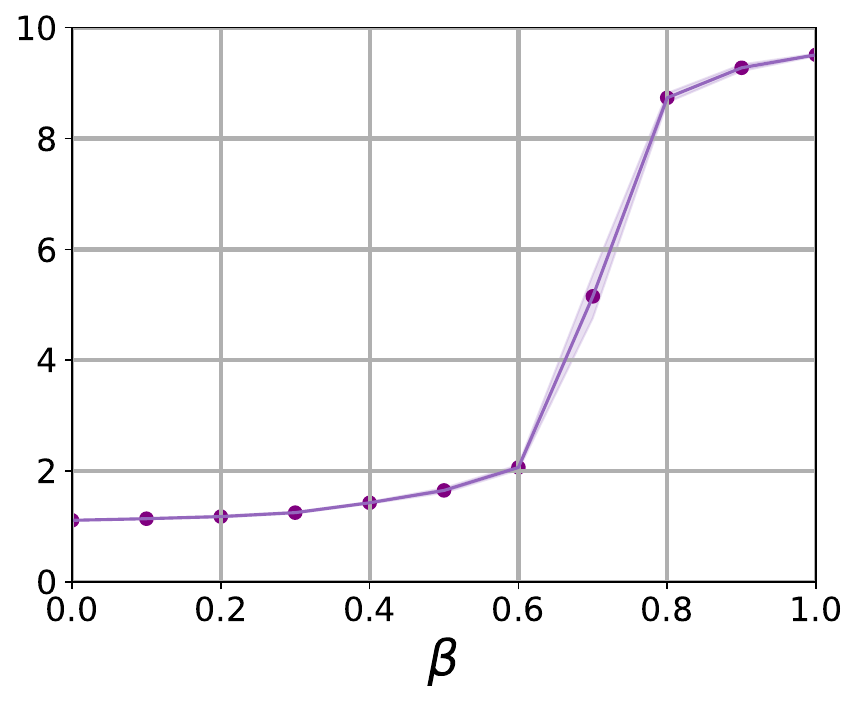}
    &
        \includegraphics[width=.22\linewidth]{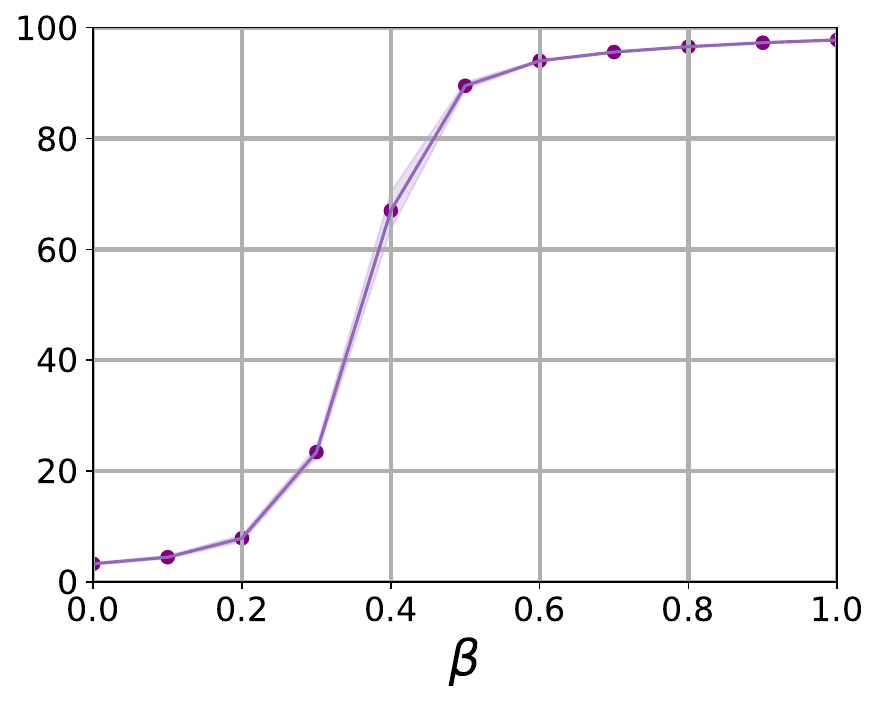}
        \end{tabular}

    \caption{Effect of $\beta$ on coverage and efficiency. 
    We use a neural network trained on MNIST \citep{mnist}, CIFAR-10, or CIFAR-100 \citep{cifar10}, and let the data stream be the USPS handwritten digit data set \citep{USPS}, MNIST-C \citep{mnist_c}, CIFAR-10-C, or CIFAR-100-C \citep{cifar10c}.
    We vary $\beta$ from $0$ to $1$ in increments of $0.1$ and run \texttt{OCPQ} 10 times for each value of $\beta$ with $M = \{0.0, 0.2, 0.4, 0.6, 0.8, 1.0\}$, $\epsilon = T^{-1/3}$, and $\eta = T^{-2/3} \cdot \sqrt{\ln|M|}$, where $T$ is the size of the data set. The graphs show the average resulting \textbf{\textcolor{orange}{coverage rate}} and \textbf{\textcolor{violet}{efficiency}},
    with error bars showing the mean absolute deviation. 
    The \textbf{black} dashed line is the lower bound from Corollary~\ref{corollary:coverage_rate_bound}, and the \textbf{\textcolor{blue}{blue}} dashed line is the accuracy of the network 
    on the relevant data set 
    when no conformal prediction is used.}
    \label{fig:beta_vs_coverage_and_set_size}
\end{figure*}

While it is unclear if this bound is tight, Theorem~\ref{thm:high_probability_bound} demonstrates that there is no data stream on which \texttt{OCPQ} has pathologically high variance. For example, it is worth noting that an analogous result cannot be derived for \texttt{EXP3} \citep[see][exercise 11.6]{bandit_algorithms_book}. \citet{label_efficient_prediction} also prove a high-probability regret bound for their label-efficient forecaster --- we provide a comparison of this bound and the bound proven in Theorem~\ref{thm:high_probability_bound} in the Appendix. Finally, we note that Theorem~\ref{thm:high_probability_bound} also induces a high-probability bound on the coverage rate:

\begin{corollary}
    If $\epsilon = T^{-1/3} \leq 0.5$ and $\eta = T^{-2/3} \cdot \sqrt{\ln(|M|)}$, then for any $\delta \in (0, 1/3)$ we have that the coverage rate $p_\text{cover}$ of Algorithm~\ref{algorithm:OCP_with_queries} satisfies
    \begin{align*}
    p_\text{cover} \geq \hspace{0.2cm} &\beta - 2 \cdot T^{\delta/2 - 1/6} - 2 \cdot T^{-1/4}\\
    &- (1 + 2\sqrt{\ln(|M|)}) \cdot T^{-1/3}\\
    &- \sqrt{\ln(|M|)} \cdot T^{-2/3} - \sqrt{\ln(|M|)} \cdot T^{(\delta-1)/2}
    \end{align*}
    with probability at least as given in Equation~\ref{eq:high-prob-prob}.  
\end{corollary}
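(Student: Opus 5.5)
This corollary follows by combining Theorem~\ref{thm:high_probability_bound} with the deterministic, pathwise Theorem~\ref{thm:regret_coverage_connection}. Fix $\epsilon = T^{-1/3} \leq 0.5$, $\eta = T^{-2/3}\sqrt{\ln(|M|)}$ and $\delta \in (0,1/3)$, and write
\begin{align*}
N_\delta = \; & 2 T^{5/6+\delta/2} + 2 T^{3/4} + (1+2\sqrt{\ln(|M|)}) T^{2/3} \\
& + \sqrt{\ln(|M|)}\, T^{1/3} + \sqrt{\ln(|M|)}\, T^{(1+\delta)/2}
\end{align*}
for the regret bound appearing in Theorem~\ref{thm:high_probability_bound}. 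That theorem says the event $E = \{\text{reg}(a^T) \leq N_\delta\}$ has probability at least the quantity in Equation~\ref{eq:high-prob-prob}. The randomness here is over the algorithm's internal coins, with $z^T$ fixed in advance by the oblivious adversary.

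Next, I check that Theorem~\ref{thm:regret_coverage_connection} applies on every realisation in $E$. The only subtlety is that Algorithm~\ref{algorithm:OCP_with_queries} outputs prediction sets $\Gamma_C(x_t,m_t)$, whereas Theorem~\ref{thm:regret_coverage_connection} is phrased for action sequences in $(M\cup\{\texttt{query}\})^T$. I identify each prediction round with the sampled threshold $m_t \in M$, which is exactly how the regret is defined. Under this identification, $\mathds{1}\{a_t\neq\texttt{query}\}\mathds{1}\{y_t\in S_t\}$ coincides with the coverage indicator of the threshold sequence. Hence on every outcome in $E$ the realised sequence has regret at most $N_\delta$, and Theorem~\ref{thm:regret_coverage_connection} gives $p_\text{cover} \geq \beta - N_\delta/T$.

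It remains to divide $N_\delta$ by $T$ term by term:
\begin{align*}
T^{5/6+\delta/2}/T &= T^{\delta/2-1/6}, & T^{3/4}/T &= T^{-1/4}, \\
T^{2/3}/T &= T^{-1/3}, & T^{1/3}/T &= T^{-2/3}, \\
T^{(1+\delta)/2}/T &= T^{(\delta-1)/2}. & &
\end{align*}
Substituting these reproduces exactly the claimed lower bound, so
$$
P\!\left(p_\text{cover} \geq \beta - N_\delta/T\right) \geq P(E),
$$
which is at least the probability in Equation~\ref{eq:high-prob-prob}.

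I do not expect a real obstacle, since all the probabilistic work is done in Theorem~\ref{thm:high_probability_bound}. The step needing the most care is the second one: Theorem~\ref{thm:regret_coverage_connection} must be applied pathwise, to each realised action sequence, rather than in expectation. That is legitimate because it is a deterministic statement about any fixed sequence $a^T$. The event inclusion $E \subseteq \{p_\text{cover} \geq \beta - N_\delta/T\}$ then transfers the probability bound directly.
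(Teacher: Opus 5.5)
Your proposal is correct and is exactly the paper's argument, which the paper simply calls immediate: apply the deterministic Theorem~\ref{thm:regret_coverage_connection} pathwise on the high-probability event of Theorem~\ref{thm:high_probability_bound}, then divide the regret bound by $T$. Your term-by-term division is also right, including the $-2T^{-1/4}$ term (the appendix restatement of this corollary misprints it as $+2T^{-1/4}$).
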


\section{Empirical Results}

\subsection{The Coverage-Efficiency Tradeoff}
Figure~\ref{fig:beta_vs_coverage_and_set_size} shows the results of a number of experiments in which we apply \texttt{OCPQ} to several standard benchmarks. 
In the experiments we let $C$ be a neural network trained on one data set, but we let the data stream $z^T$ be a different data set, in order to simulate distribution shifts. We vary $\beta$ from $0$ to $1$ and show the resulting coverage rate and efficiency. As we can see, we can control the trade-off between coverage and efficiency via the choice of $\beta$. 
The coverage is always above the accuracy of the base classifier and the analytic bound (as we expect), but the learner appears to typically be more conservative than the bound requires.
It is interesting to note that the graphs for the coverage and efficiency both seem to consistently have a sigmoid shape, meaning that there is a relatively small range of $\beta$ in which the learner quickly goes from low to high coverage. This suggests that the precise choice of $\beta$ may be a delicate matter in practice.\footnote{It is also interesting to note that the inflection point appears to be close to the value of $\beta$ for which the lower bound on the coverage rate intersects the accuracy of the base classifier. Yet, we cannot predict the location of the inflection point \emph{a priori}, since we typically do not have prior knowledge of the accuracy of the base classifier in the online setting.}
Many additional results are discussed in the Appendix, including experiments on varying the granularity of $M$, different choices of adversaries and reward functions, and the effects of distribution shifts.

\begin{figure}[h]
    \centering
    \includegraphics[width=\columnwidth]
        {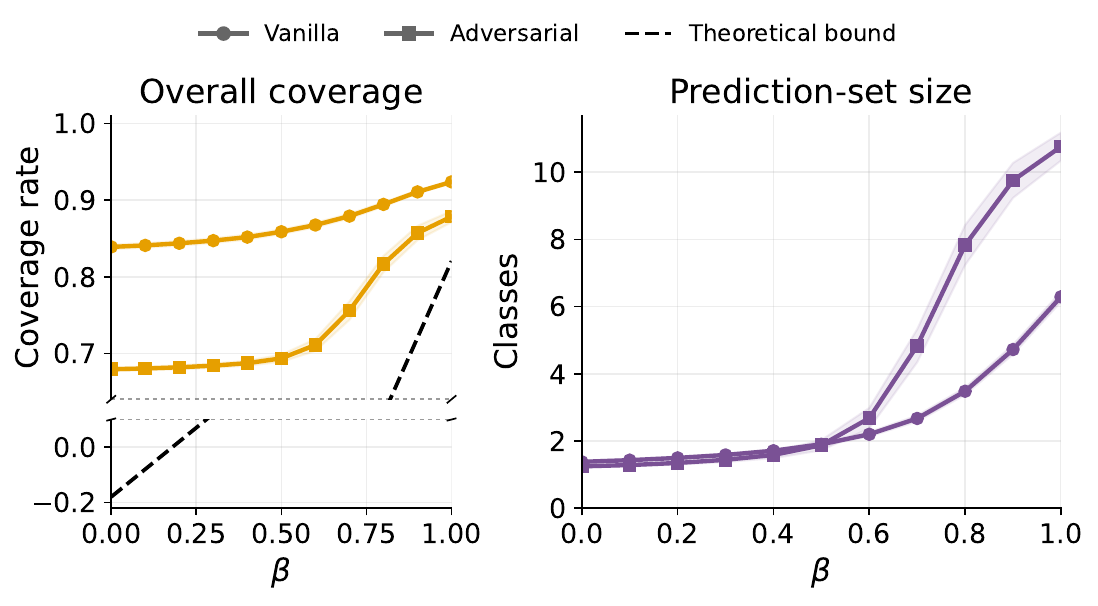}
    \caption{$\beta$ against coverage and efficiency for \texttt{OCPQ} on vanilla and adversarial WildGuardMix prompts. Solid lines and shaded regions show means and mean absolute deviations over ten shuffled single-pass streams. Dashed lines show the finite-horizon coverage lower bound.}
    \label{fig:wildguard_beta}
\end{figure}

\begin{figure}[h]
    \centering
    \includegraphics[width=\columnwidth]
        {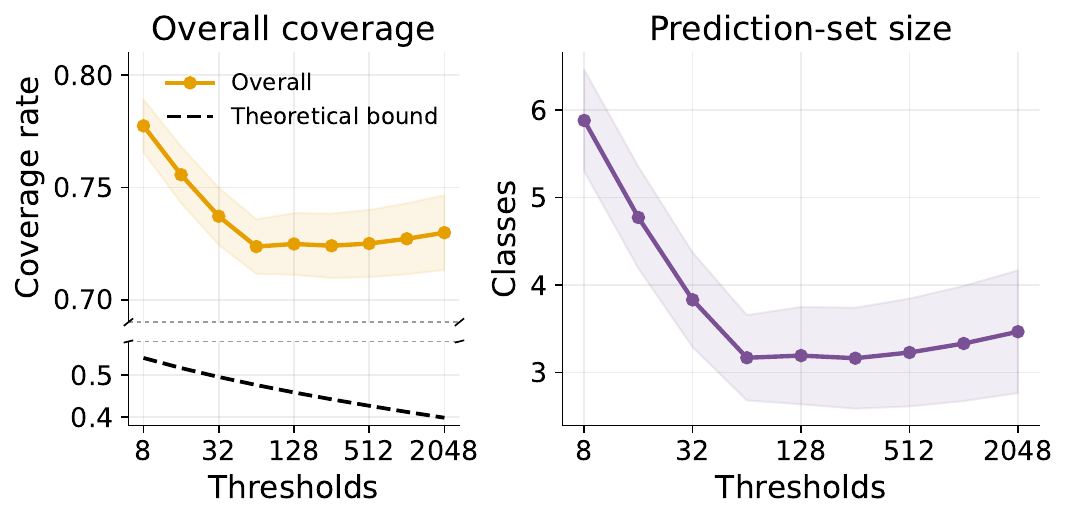}
    \caption{Sensitivity of \texttt{OCPQ} to the size of $M$ on adversarial WildGuardMix prompts at \(\beta=0.75\).
    Solid lines and shaded regions show means and mean absolute deviations
    over ten independently shuffled, single-pass streams. The dashed line
    shows the finite-horizon coverage lower bound.}
    \label{fig:wildguard_threshold_grid}
\end{figure}

\subsection{LLM Safety Monitoring With Adversarial Shifts}
    
We next consider safety monitoring for LLM prompts, where obtaining human labels during deployment may be costly and users may deliberately pose unsafe requests in an adversarial manner to evade a safeguard. 
We use WildGuardMix \citep{wildguard2024}, which contains both vanilla prompts and adversarial rewrites labelled according to a safety-risk taxonomy. We formulate the task as 14-way classification: one class represents safe prompts and the remaining 13 represent distinct risk categories. The adversarial prompts preserve the underlying safety task while introducing a shift in how potentially harmful requests are expressed.
Our base classifier is DistilBERT \citep{sanh2020distilbertdistilledversionbert} with a 14-way classification head, fine-tuned using only vanilla prompts.
Full training and data preparation details are provided in the Appendix.

We evaluate the monitor on 8,584 vanilla prompts and 6,404 adversarial prompts
over ten independent runs.
The distribution shift is substantial: the base classifier's accuracy falls from (0.7854) on vanilla prompts to (0.5914) on adversarial prompts. 
Figure~\ref{fig:wildguard_beta} reports overall coverage and efficiency as \(\beta\) varies.
As expected, adversarial prompts make coverage substantially more expensive,
%
and the empirical coverage remains above the theoretical bound. 

We also examine sensitivity to the number of candidate thresholds at \(\beta=0.75\) on adversarial prompts (Figure~\ref{fig:wildguard_threshold_grid}).
Increasing \(|M|\) from \(8\) to \(64\) reduces inefficiency from \(5.75\) to \(2.99\) while the coverage decreases from \(0.776\) to \(0.721\). Beyond \(64\) thresholds, both quantities remain broadly stable. 

\paragraph{Comparison with full-feedback ACI.}
We compare \texttt{OCPQ} with Adaptive Conformal Inference ( \texttt{ACI})
\cite{gibbs2021adaptive} using a fixed calibration set of 2,141 vanilla
prompts. \texttt{ACI} observes the label after every prediction and selects continuous
empirical-quantile thresholds. \texttt{OCPQ} instead receives feedback on approximately
\(5\%\) of rounds and selects among 32 uniformly spaced thresholds. Because
 \texttt{ACI}'s target coverage and \texttt{OCPQ}'s \(\beta\) have different meanings, we compare
their coverage--inefficiency frontiers rather than individual
values.
As expected, the full-feedback reference is more efficient at comparable coverage. 
Nonetheless, this gap is surprisingly small considering that \texttt{OCPQ} uses much less information, especially in the adversarial setting.

\begin{figure}[t]
    \centering
    \includegraphics[width=0.9\columnwidth]
        {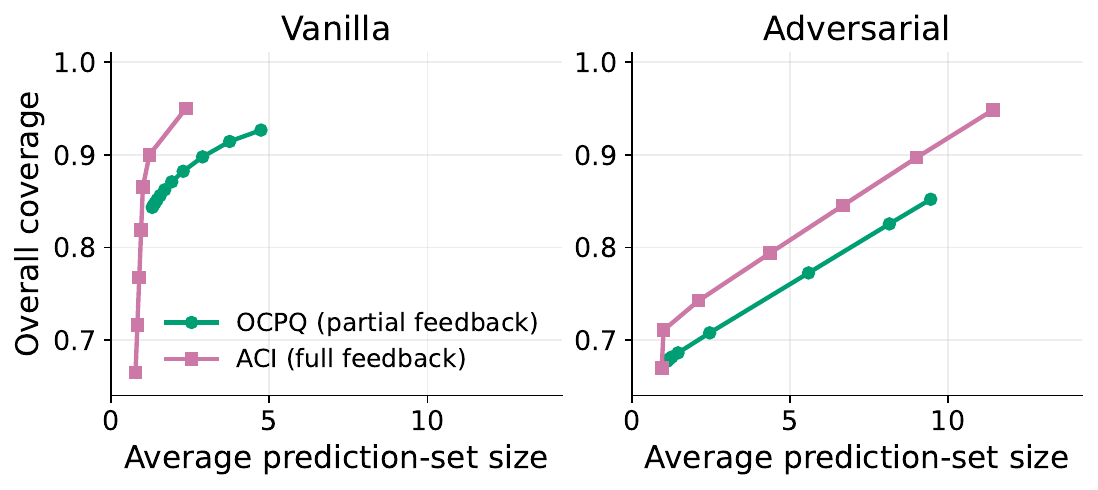}
    \caption{
        Static comparison with full-feedback  \texttt{ACI} on vanilla and adversarial WildGuardMix prompts.  \texttt{ACI} sweeps target coverage from \(0.65\) to \(0.95\) and OCPQ sweeps \(\beta\in\{0,0.1,\ldots,1\}\). Points show means over 10 runs.
         \texttt{ACI} observes every label, whereas \texttt{OCPQ} queries \(\sim4.9\%\) and \(\sim5.5\%\) of the vanilla and adversarial streams respectively.
    }
    \label{fig:wildguard-aci}
\end{figure}

\section{Discussion}

\subsection{Conclusions and Significance}
We have developed an OCP method for a setting where labels are available only through rounds on which the learner declines to predict. \texttt{OCPQ} reduces this problem to a partial monitoring game and solves the resulting game using a version of the label efficient forecaster \citep{label_efficient_prediction} that has been adapted so that queries replace actions. Its expected regret and coverage guarantees hold for any fixed black box classifier and any oblivious adversarial data stream. The expected query fraction is only $T^{-1/3}$, while the coverage deficit is $O(T^{-1/3})$. This separates \texttt{OCPQ} from bandit and semi-bandit OCP methods that obtain information from deployed actions, and from intermittent methods that require exogenous feedback. The experiments show that this sparse supervision can still support useful coverage and efficiency tradeoffs, even under adversarial distribution shifts.


\subsection{Limitations and Future Work}
There are a number of ways that our work can be extended. 
The parameter $\beta$ allows the user to control the coverage-efficiency trade-off in \texttt{OCPQ}, with larger values yielding stronger coverage guarantees (Corollary~\ref{corollary:coverage_rate_bound}). However, in practice, the achieved coverage is often higher than the guaranteed level, with a corresponding loss in efficiency (Figure~\ref{fig:beta_vs_coverage_and_set_size}). This conservatism is a consequence of the weak assumptions of the partial monitoring game, which allows adversarial data and provides no label feedback on prediction rounds. Developing less conservative methods with finer control of the trade-off is an important direction for future work.


\texttt{OCPQ} works for any data stream $z^T$, but it assumes that this data stream is fixed at the start of the interaction. In particular, the value of $x_t$ and $y_t$ cannot depend on the past predictions of the learner (i.e., we assume that the adversary is \emph{oblivious} rather than \emph{adaptive}). This is often a reasonable assumption, but there are cases when it does not hold. 
Generalising \texttt{OCPQ} to handle such dependencies would be an interesting extension, though this also comes with well-known challenges \citep[see, e.g.,][]{arora2012onlinebanditlearningadaptive}.


\bibliography{aaai2027}


\newpage
\appendix
\section{OCP as a Bandit Problem}

The \texttt{OCPQ} algorithm approaches OCP as an instance of a \emph{bandit problem} --- more specifically, it treats OCP as a \emph{partial monitoring game}. In this Appendix, we describe this connection in more detail. Specifically, we describe a scheme that reduces OCP (without feedback) to an instance of a partial monitoring game, thereby allowing any bandit algorithm for general partial monitoring games to be used in this setting. We show that \texttt{OCPQ} is asymptotically optimal for the resulting game, and we discuss how it compares to existing partial monitoring game algorithms \citep[especially][]{label_efficient_prediction, lattimore2019explorationoptimisationpartialmonitoring}.

For a detailed overview of partial monitoring games, see, e.g., \citet{bandit_algorithms_book}.
In brief, a partial monitoring game is a game between two players, a \emph{learner} and an \emph{adversary}. The game is defined in terms of a set of \emph{actions} $\mathcal{A}$, a set of \emph{observations} $\Omega$, and a set of \emph{states} $\mathcal{S}$. We assume that $\mathcal{A}$, $\Omega$, and $\mathcal{S}$ all are finite.
The game also includes a \emph{reward matrix} $R : \mathcal{A} \times \mathcal{S} \to [0,1]$ and an \emph{observation matrix} $\Phi : \mathcal{A} \times \mathcal{S} \to \Omega$. These matrices are both known at the start of the game. 

The game is played over $T$ rounds. At the start of the game the adversary picks a sequence of states $\xi \in \mathcal{S}^T$, which the learner does not get to observe. On each round the learner picks an action $a_t \in \mathcal{A}$, possibly by sampling from some distribution over $\mathcal{A}$. The learner then observes $\Phi(a_t, \xi_t)$ and receives $R(a_t, \xi_t)$ reward. Unlike in a typical bandit problem, the learner does \underline{not} observe its reward, except when this can be inferred from $\Phi(a_t, \xi_t)$. This setting generalises the standard adversarial bandit setting, since we can let $\Phi$ reveal the reward to the learner, or choose to give the learner more limited information (potentially with a complex structure behind the feedback that is given to the learner).

The goal of the learner is to maximise its cumulative reward over the $T$ rounds of the game. The learner is typically evaluated based on its \emph{expected regret}, defined as
$$
\max_{a^\star \in \mathcal{A}} \sum_{t=1}^T R(a^\star, \xi_t) - \sum_{t=1}^T\mathbb{E}_{A_t \sim P_t}\left[R(A_t, \xi_t)\right],
$$
where $P_t$ is the distribution over $\mathcal{A}$ that the learner samples from on round $t$. In other words, we compare the reward that the agent obtains to the reward of the single action that would have been best in hindsight, if we had to pick the same action on each round of the game.
How well we can do in a partial monitoring game depends on the structure of $R$ and $\Phi$. \citet{lattimore2018cleaningneighborhoodclassificationadversarial} show that in any partial monitoring game, the optimal worst-case expected regret will scale as either $O(1)$, $O(T^{1/2})$, $O(T^{2/3})$, or $O(T)$. For an example of an algorithm for partial monitoring games, see, e.g., \texttt{ExO}, due to \citet{lattimore2019explorationoptimisationpartialmonitoring}.

We can represent OCP without feedback as a partial monitoring game in the following way:
\begin{enumerate}
    \item We assume that we have a classifier $C : X \to \Delta(Y)$, so that $C(x)$ is a distribution over $Y$. For convenience, let $C(x, y) = C(x)_y$.
    \item We discretise $[0,1]$ into a finite set of \emph{prediction thresholds} $M \subset [0,1]$.
    \item On each round, we observe an $x_t$. If we make a query, we also observe $y_t$. Otherwise, we pick a prediction threshold $m_t \in M$, and let 
    $$
    S_t = \{y \in Y : C(y, x_t) \geq \max_{y^\star} C(y^\star, x_t) - m_t\}.
    $$
\end{enumerate}
We must also specify the action space, observation space, and state space. We could let the states $\mathcal{S}$ represent $x$ and $y$ directly, but this becomes very unwieldy, so we will instead let the state at time $t$ correspond to the \emph{optimal prediction threshold} at time $t$. Specifically, let
\begin{enumerate}
    \item $\mathcal{A} = M \cup \{\texttt{query}\}$,
    \item $\mathcal{S} = M$,
    \item $\Omega = M \cup \{\bot\}$.
\end{enumerate}
Using this representation, if $s_t = m$, then this should be interpreted as $m$ being the smallest value in $M$ such that $C(y_t, x_t) \geq \max_{y^\star} C(y^\star, x_t) - m$, where $y_t$ is the correct label at time $t$. If we make a query then we find out what this optimal threshold would have been, and otherwise we get a null observation $\bot$. 
Note that while the data stream is \enquote{actually} a sequence over $(X \times Y)^T$, we can w.l.o.g.\ represent it as a sequence over $M^T$ for a given classifier $C$ in the resulting game. Similarly, when the learner makes a query, it would normally observes $y_t$, but this is equivalent to observing the corresponding optimal prediction threshold $m_t$ for the purposes of solving this game.
Representing the states, actions, and observations in this way will make things cleaner and easier to work with.

The observation matrix $\Phi$ is given by $\Phi(\texttt{query}, m) = m$ for each $m$, and $\Phi(a, m) = \bot$ for each $a \neq \texttt{query}$. 
We can use different kinds of reward functions, but for now, let the reward correspond to the \emph{auxiliary reward function} (Definition~\ref{def:auxiliary_reward}). 
Specifically, let $R(\texttt{query}, m) = 0$ for all $m$, and let $R(m_a, m_s) = 0$ whenever $m_a < m_s$. Otherwise, let $R(m_a, m_s) = 1 - m_a \cdot (1-\beta)$ for a given baseline parameter $\beta$.
The matrices below show what $\Phi$ and $R$ look like when $M = \{0.0, 0.2, 0.4, 0.6, 0.8, 1.0\}$ and $\beta = 0.75$:
\begin{equation*}
R = 
\begin{array}{|c|c|c|c|c|c|c|} 
    \hline & 0.0 & 0.2 & 0.4 & 0.6 & 0.8 & 1.00\\ 
    \hline 0.0 & 1 & 0 & 0 & 0 & 0 & 0\\ 
    \hline 0.2 & 0.95 & 0.95 & 0 & 0 & 0 & 0\\
    \hline 0.4 & 0.90 & 0.90 & 0.90 & 0 & 0 & 0\\ 
    \hline 0.6 & 0.85 & 0.85 & 0.85 & 0.85 & 0 & 0\\ 
    \hline 0.8 & 0.80 & 0.80 & 0.80 & 0.80 & 0.80 & 0\\ 
    \hline 1.0 & 0.75 & 0.75 & 0.75 & 0.75 & 0.75 & 0.75\\ 
    \hline \mathrm{query} & 0 & 0 & 0 & 0 & 0 & 0\\ 
    \hline 
\end{array}
\end{equation*}
\begin{equation*}
\Phi =
\begin{array}{|c|c|c|c|c|c|c|} 
    \hline & 0.0 & 0.2 & 0.4 & 0.6 & 0.8 & 1.0\\ 
    \hline 0.00 & \bot & \bot & \bot & \bot & \bot & \bot\\ 
    \hline 0.2 & \bot & \bot & \bot & \bot & \bot & \bot\\
    \hline 0.4 & \bot & \bot & \bot & \bot & \bot & \bot\\
    \hline 0.6 & \bot & \bot & \bot & \bot & \bot & \bot\\
    \hline 0.8 & \bot & \bot & \bot & \bot & \bot & \bot\\
    \hline 1.0 & \bot & \bot & \bot & \bot & \bot & \bot\\
    \hline \mathrm{query} & 0.0 & 0.2 & 0.4 & 0.6 & 0.8 & 1.0\\
    \hline 
\end{array}
\vspace{0.2cm}
\end{equation*}
Let this partial monitoring game be called the \emph{Online Conformal Prediction Without Feedback} (OCPWF) game. Note that we can vary $M$ and $\beta$, and the resulting game will still be considered to be an instance of OCPWF. Using this construction, any algorithm for playing partial monitoring games can be used to do online conformal prediction with partial feedback, similar to our Algorithm~\ref{algorithm:OCP_with_queries}.

\emph{Exploration by Optimisation} (\texttt{ExO}) is a state-of-the-art algorithm for general partial monitoring games, due to \citet{lattimore2019explorationoptimisationpartialmonitoring}. In brief, this algorithm maintains a probability distribution $Q_t$ over the actions $\mathcal{A}$, which can be viewed as an \enquote{exploitation policy} that is only based on the estimated reward of each action, and not its value of information. To generate its action distribution, \texttt{ExO} then solves a certain optimisation problem to find a distribution $P_t$ over $\mathcal{A}$ and a function $G_t \in \mathcal{A} \times \Omega \times \mathcal{A} \to \mathbb{R}$, where $P_t$ is used to sample the action at time $t$ and $G_t$ is used to estimate the reward of each action at time $t$ (with these estimates then being used together with $Q_t$ to compute $Q_{t+1}$). This optimisation problem is designed in such a way that $P_t$ will balance exploration and exploitation, and such that $G_t$ will create accurate reward estimates in expectation. The optimisation problem does not have an analytic solution, but it is convex, and contains $|\mathcal{A}| + |\mathcal{A}|^2\cdot |\Omega|$ variables. For details, see \citet{lattimore2019explorationoptimisationpartialmonitoring}.

In Figure~\ref{fig:algorithm_1_vs_ExO_on_synthetic_data}, we compare the empirical performance of \texttt{OCPQ} against \texttt{ExO} in the OCPWF game using a number of synthetic data streams. As we can see, \texttt{ExO} consistently suffers less regret than \texttt{OCPQ} (meaning that \texttt{ExO} is better). However, this difference is not huge, and \texttt{OCPQ} is substantially faster than \texttt{ExO} (since the action distribution in \texttt{OCPQ} can be computed directly, whereas \texttt{ExO} has to solve a moderately complex optimisation problem at each time step). 
Moreover, for \texttt{OCPQ}, it is easy to calculate an exact value for the upper bound on the regret, and thereby get an exact bound on the coverage rate (Theorem~\ref{thm:simple_learner_regret} and Corollary~\ref{corollary:coverage_rate_bound}). By contrast, the regret bound of \texttt{ExO} is expressed in terms of a constant $\mathrm{opt}_*(\eta)$ which is in general difficult to estimate and infeasible to calculate analytically \citep[see][their Theorem 5]{lattimore2019explorationoptimisationpartialmonitoring}. 
Furthermore, it is also easy to bound the number of queries that \texttt{OCPQ} makes, since they are simply sampled i.i.d.\ from a fixed, known distribution. By contrast, the querying behaviour of \texttt{ExO} is an entirely emergent behaviour stemming from the optimisation problem that lies at the core of the \texttt{ExO} algorithm, and it seems like it would be very difficult to bound the resulting query rate analytically.

Using the results in \citet{lattimore2018cleaningneighborhoodclassificationadversarial}, it is straightforward to show that OCPWF is a \enquote{hard} game, meaning that no algorithm can get a worst-case expected regret that grows slower than $O(T^{2/3})$ (unless $\beta = 1$, in which case we can trivially get $0$ regret by letting $m_t = 1$ every round). Since the worst-case expected regret \texttt{OCPQ} grows as $O(T^{2/3})$ (Theorem~\ref{thm:simple_learner_regret}), this means that \texttt{OCPQ} is asymptotically optimal in OCPWF.

\begin{proposition}\label{prop:OCPPF_is_a_hard_game}
The optimal worst-case expected regret in OCPWF is $\Theta(T^{2/3})$, unless $\beta = 1$, in which case we get 0 regret by picking $a_t = 1$ every round.
\end{proposition}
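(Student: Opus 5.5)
The upper bound $O(T^{2/3})$ follows directly from Theorem~\ref{thm:simple_learner_regret}, since OCPWF is exactly the game that Algorithm~\ref{algorithm:OCP_with_queries} plays. The case $\beta=1$ is immediate. Every row $R(m,\cdot)$ is then pointwise dominated by the row of $m=1$, which equals $1$ in every column, so always playing $1$ attains the maximum reward on every round and has zero regret. The substance is therefore the lower bound $\Omega(T^{2/3})$ for $\beta<1$. I would obtain it from the classification theorem of \citet{lattimore2018cleaningneighborhoodclassificationadversarial}: a finite partial monitoring game has minimax regret $\Theta(T^{2/3})$ exactly when it is \emph{globally observable} but not \emph{locally observable}. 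So the plan is to verify these two conditions.

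\emph{Global observability.} Consider the query row $\Phi(\texttt{query},\cdot)$. It takes pairwise distinct values across all states, so a single query reveals the state exactly. For any pair of actions $a,b$, the loss difference $R(a,\cdot)-R(b,\cdot)$ is then a function of $\Phi(\texttt{query},s)$. It therefore lies in the span of the observation indicators of the query action, which is the global observability condition. This step is routine.

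\emph{Failure of local observability.} This is the main obstacle. I need to identify the Pareto-optimal actions and their cells, and exhibit a pair of neighbouring Pareto-optimal actions whose loss difference cannot be estimated from their own observations. Since thresholds observe only $\bot$, the observation matrices restricted to such a pair are constant. Local observability would then require $R(a,\cdot)-R(b,\cdot)$ to be constant in $s$, so it suffices to find neighbours whose reward difference is non-constant.

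When $\beta<1$, the action $m=0$ is uniquely optimal in state $s=0$, because $1>1-m(1-\beta)$ for every $m>0$. The action $m=1$ is uniquely optimal in state $s=1$, since it is the only action with positive reward there. Hence at least two actions are Pareto optimal. The cells partition the simplex, and the query action is dominated everywhere (it has zero reward and is only weakly optimal), so there must be a pair of distinct Pareto-optimal thresholds $m_a<m_b$ that are neighbours, meaning their cells share a facet of codimension one. Their reward difference is non-constant: at $s=0$ it equals $(m_b-m_a)(1-\beta)>0$, while at $s=m_b$ it equals $-(1-m_b(1-\beta))<0$. Since every threshold row of $\Phi$ is $\bot$, this difference is not in the span of the observations of $\{m_a,m_b\}$, so the game is not locally observable.

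The delicate part is the neighbourhood formalism. Lattimore and Szepesv\'ari define neighbours via the dimension of $C_a\cap C_b$, and I must make sure that some adjacent pair of thresholds, rather than a pair separated only by degenerate actions, is actually neighbouring. I would handle this with an explicit family of distributions: mix states $0$ and $m_b$ with weights chosen so that $m_a$ and $m_b$ are tied and all other thresholds are strictly worse. Perturbing this mixture in the remaining $|M|-2$ directions then shows that the intersection has dimension $|M|-2$. Combining the two observability facts with their theorem yields the $\Omega(T^{2/3})$ lower bound, and hence $\Theta(T^{2/3})$ overall.
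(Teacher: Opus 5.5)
Your proposal is correct and follows essentially the same route as the paper: the Lattimore--Szepesv\'ari classification theorem, global observability because the \texttt{query} row identifies the state, and failure of local observability because \texttt{query} has an empty cell (so it lies in no neighbourhood), which leaves only $\bot$ observations and would force the reward difference of two neighbouring thresholds to be constant, together with the existence of neighbouring thresholds when $\beta<1$. One small slip in your explicit neighbour check: on a mixture with weight $p$ on state $0$ and $1-p$ on state $m_b$, threshold $0$ earns $p$ while $m_a$ earns only $p\,(1-m_a(1-\beta))$, so it is $0$ rather than $m_a$ that ties with $m_b$ unless $m_a=0$; instead mix states $m_a$ and $m_b$ with weight $p=(1-m_b(1-\beta))/(1-m_a(1-\beta))$ on $m_a$, or simply take $m_a=0$.
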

\begin{proof}
The optimal worst-case expected regret of a partial monitoring game $G=(R, \Phi)$ satisfies
\begin{enumerate}
    \item 0 if $G$ has no pairs of neighbouring actions,
    \item $\Theta(T^{1/2})$ if $G$ is locally observable and has neighbouring actions,
    \item $\Theta(T^{2/3})$ if $G$ is globally observable but not locally observable, and
    \item $\Theta(T)$ otherwise,
\end{enumerate}
see Theorem 1 in \citet{lattimore2018cleaningneighborhoodclassificationadversarial}.
This proof will use terminology from \citet{lattimore2018cleaningneighborhoodclassificationadversarial}, but for the sake of brevity we will not define this terminology here, and instead refer to \citet{lattimore2018cleaningneighborhoodclassificationadversarial} for the relevant formal definitions.

Theorem~\ref{thm:simple_learner_regret} already implies that OCPWF must be globally observable, but for the sake of completeness, let us also spell out a direct argument. We must show that for any two pareto-optimal actions $a_1, a_2 \in \mathcal{A}$, there exists a function $f : \mathcal{A} \times \Omega \to \mathbb{R}$ such that, for any state $s$, we have that
$$
\sum_{a \in \mathcal{A}} f(a, \Phi(a, s)) = R(a_1, s) - R(a_2, s).
$$
For OCPWF, we can find an $f$ with this property by simply letting
$$
f(\texttt{query}, m) = R(a_1, m) - R(a_2, m).
$$
for each $m \in M$, and letting $f(a, \bot) = 0$ for all $a \neq \texttt{query}$ (note that this construction works for all actions in $\mathcal{A}$, not just the pareto-optimal actions).
This shows that OCPWF is globally observable.

A game is locally observable if we can additionally let $f(a,\omega) = 0$ unless $C_{a_1} \cap C_{a_2} \subseteq C_a$. In the case of OCPWF, we have that $R(\texttt{query}, s) = 0$ for all $s$. However, for all states $s$ we have that there exists an action $a$ such that $R(a, s) > 0$ (in particular, we can let $a = s$). This means that $C_\texttt{query} = \varnothing$, and so $C_{a_1} \cap C_{a_2} \not\subseteq C_\texttt{query}$ for all pareto-optimal actions $a_1, a_2$. As a consequence, we need an $f$ where $f(\texttt{query}, m) = 0$ for all $m \in M$. However, since $\Phi(a, m) = \bot$ for all states $m$ and actions $a \neq \texttt{query}$, this would imply that there is a constant $c \in \mathbb{R}$ such that
$$
\sum_{a \in \mathcal{A}} f(a, \Phi(a, s)) = c
$$
regardless of the state $s$. However, there are states $m_1, m_2$ such that $R(a_1, m_1) - R(a_2, m_1) \neq R(a_1, m_2) - R(a_2, m_2)$ (in particular, we could let $m_1 = s_1$ and $m_2 = s_2$). This means that OCPWF is not locally observable.

Finally, it is only possible for a game to not have any neighbouring actions if there is a single action $a$ that is (at least weakly) optimal for all states $s$. If $\beta = 1$ then $a = 1$ satisfies this condition. If $\beta < 1$ then $R(1,0) = \beta$ and $R(0,0) = 1$, meaning that $1$ is not (weakly) optimal for all states. Similarly, $R(\texttt{query}, 1) = 0$ but $R(1,1) = 1$, and so \texttt{query} is not weakly optimal for all states. Finally, for any $m \not\in \{1, \texttt{query}\}$ we have that $R(m,1) = 0$ but $R(1,1) = 1$, and so $m$ is likewise not weakly optimal for all states. Thus, unless $\beta = 1$, OCPWF must have some neighbouring actions. 
\end{proof}

As we have noted, \citet{label_efficient_prediction} have also introduced two algorithms for a setting with a feedback structure that is similar to that of the OCPWF game. In particular, \citet{label_efficient_prediction} consider a multi-armed bandit problem setting with adversarial rewards, and where the learner does not observe the reward of any action unless it makes a query. This setting can be viewed as a special case of a partial monitoring game.
\citet{label_efficient_prediction} do not study OCP, but they consider a feedback structure that is very similar to the one we are interested in, with the minor difference that they allow the learner to both make a query and select an arm on the same round, whereas we only allow the learner to either pick an action or make a query on a given round (but not both).
The update rule for the weights $\hat{Z}$ in \texttt{OCPQ} is an adaptation of the update rule in \citet{label_efficient_prediction} for this modified feedback structure.

\begin{figure*}[htp]
    \centering
    \begin{subfigure}[b]{0.24\textwidth}
        \includegraphics[width=\linewidth]{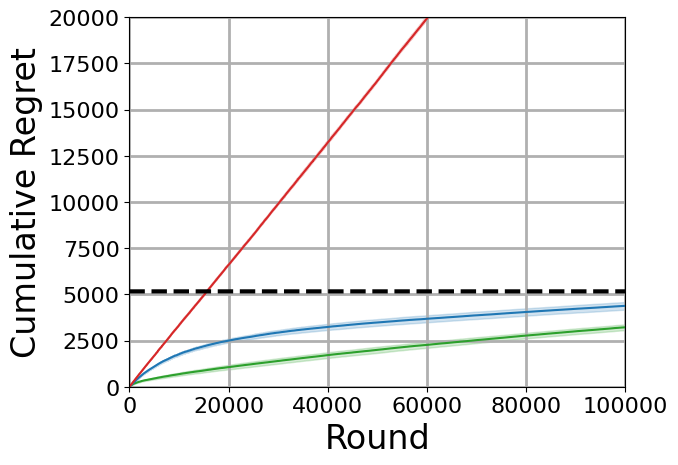}
        \caption{Random Adversary}
    \end{subfigure}
    \hfil
    \begin{subfigure}[b]{0.24\textwidth}
        \includegraphics[width=\linewidth]{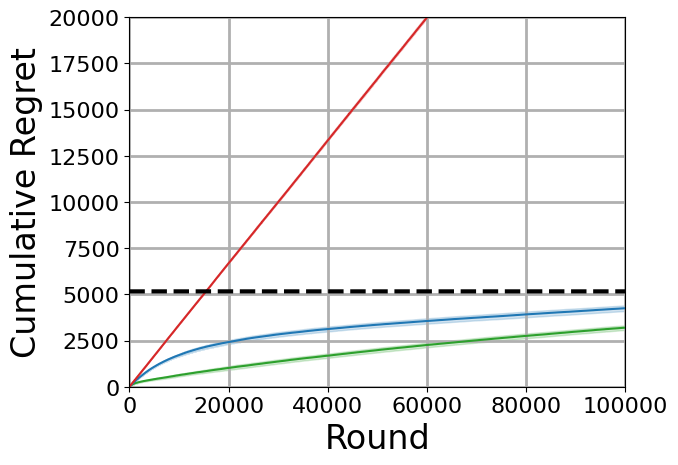}
        \caption{Alternating Adversary}
    \end{subfigure}
    \hfil
    \begin{subfigure}[b]{0.24\textwidth}
        \includegraphics[width=\linewidth]{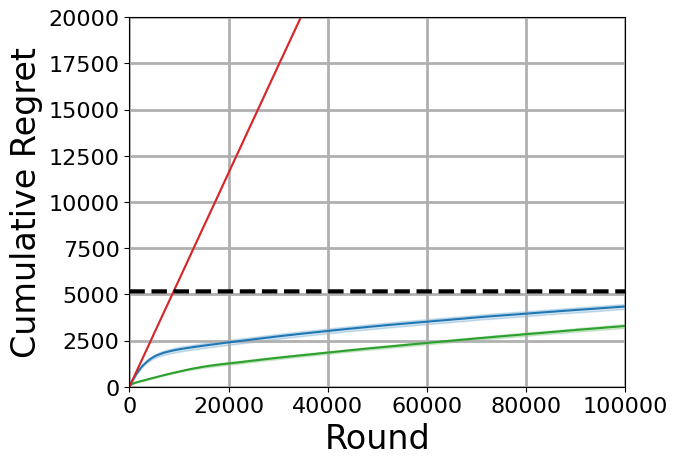}
        \caption{Deterministic Adversary}
    \end{subfigure}
    \hfil
    \begin{subfigure}[b]{0.24\textwidth}
        \includegraphics[width=\linewidth]{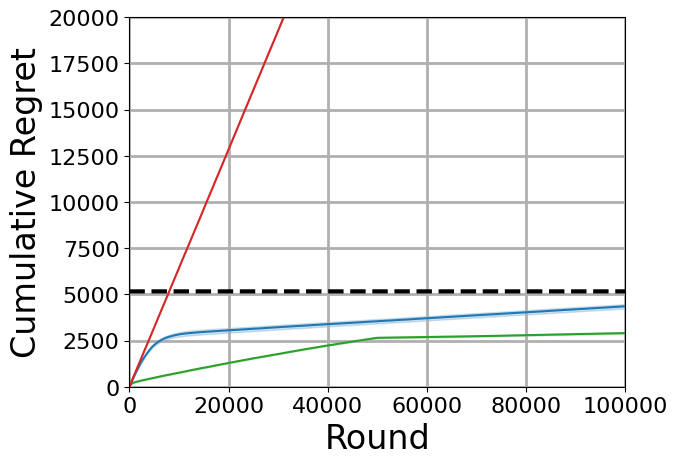}
        \caption{Hard Shift Adversary}
    \end{subfigure}
    \caption{Synthetic data for the OCPPF game with $\beta=0.75$ and $M = \{0.0, 0.2, 0.4, 0.6, 0.8, 1.0\}$, using \texttt{ExO}, \texttt{OCPQ}, and an agent that picks an action uniformly at random every round. The plots show regret rather than reward, so a lower value is better. The \enquote{random adversary} picks a random state $s_t$ each round, the \enquote{alternating adversary} cycles through the states in $\mathcal{S}$ sequentially, the \enquote{deterministic adversary} lets $s_t = 0.8$ every round, and the \enquote{hard shift adversary} lets $s_t = 1.0$ for $t \leq T/2$ and $s_t = 0.0$ for $t > T/2$. Each line shows the regret for each type of agent (green for \textcolor{teal}{\textbf{ExO}}, blue for \textcolor{blue}{\textbf{OCPQ}}, and red for the \textcolor{red}{\textbf{random agent}}) averaged over 10 runs, with error bars corresponding to the average distance to the average value (i.e., the mean absolute deviation). Note that the mean absolute deviation is so small that the error bars are barely visible in the plot. The dashed line shows the analytic bound on the expected worst-case regret of \texttt{OCPQ}, using the relevant value of $T$.}
    \label{fig:algorithm_1_vs_ExO_on_synthetic_data}
\end{figure*}

\section{Constructing the Reward Function}\label{appendix:reward_function_construction}

In order to represent OCP as a bandit problem where the arms correspond to different prediction thresholds, we must first associate each prediction threshold with a \emph{reward} (or, equivalently, a \emph{loss}). This reward function (or loss function) has to incentivise the learner to both achieve a high coverage and to achieve a high efficiency, and thereby control the trade-off between the two. We have used the \emph{auxiliary reward function} described in Definition~\ref{def:auxiliary_reward}, but this is not the only option. For example, \citet{yang2026onlineconformalpredictionadversarial} use a bandit-like algorithm for OCP, whose loss function is defined as
$$
\ell_t(\pi, \alpha) = d_t(\pi, \alpha) + a_t(\pi)
$$
where $\pi \in [0,1]$ is a prediction threshold, $1-\alpha$ is a target coverage rate, $a_t(\pi)$ is some function that penalises large prediction sets, and $d_t(\pi, \alpha)$ is
\begin{align*}
    &m_t(\pi)\\
    + &\mathds{1}\{m_t(\pi) = 0\}(\alpha + \alpha(1-\alpha))\\
    + &\mathds{1}\{m_t(\pi) = 1\}(-\alpha(1-\alpha)),
\end{align*}
where $m_t(\pi)$ is an indicator variable that is equal to $1$ if prediction threshold $\pi$ has coverage on round $t$, and which is equal to $0$ otherwise. If we simplify slightly, we can see that $d_t(\pi, \alpha)$ is equal to $\alpha(2-\alpha)$ when $\pi$ has coverage, and $1 - \alpha + \alpha^2$ otherwise. They show that any learning algorithm which suffers a regret of at most $\mathrm{Reg}(T)$ with respect to this reward function satisfies
$$
\mathrm{MC}(T) - \alpha \leq \frac{1}{T} \cdot \mathrm{Reg}(T) + C_{MC}(T),
$$
where $\mathrm{MC}(T)$ is the \emph{miscoverage} of the algorithm (i.e., $\mathrm{MC}(T) = 1-p_{cover}$ in our terminology), and $C_{MC}(T)$ is a constant term of order $O(T^{-1})$. We can thus rewrite the bound of \citet{yang2026onlineconformalpredictionadversarial} as
$$
p_{cover} \geq 1-\alpha - \frac{\mathrm{Reg}(T)}{T} - O(T^{-1}).
$$
By contrast, we show that the reward function in Definition~\ref{def:auxiliary_reward} induces the very similar bound given by
$$
p_{cover} \geq \beta - \frac{\mathrm{Reg}(T)}{T},
$$
where $\beta$ corresponds to $1-\alpha$ (see Theorem~\ref{thm:regret_coverage_connection}). Our reward function has the benefit of being easier to construct, and the corresponding bound on the coverage rate is also easier to express. Note that since we can calculate the value of $\mathrm{Reg}(T)$ and $C_{MC}(T)$, there is no sense in which either bound gives us a \enquote{tighter} bound on $p_{cover}$ --- we can simply pick the value of $\beta$ or $\alpha$ that gives the right-hand side of the inequality the bound we desire for the coverage rate. Nonetheless, different reward functions can be used with Algorithm~\ref{algorithm:OCP_with_queries} (and similar algorithms), and it is possible that some reward functions lead to better behaviour than the one we have chosen.

To use Algorithm~\ref{algorithm:OCP_with_queries} with a different reward function, note that $\hat{Z}_{m,t+1} = \hat{Z}_{m,t} + 1 - \frac{1-R(m, t)}{\epsilon}$ if the algorithm makes a query at time $t$, and otherwise, $\hat{Z}_{m,t+1} = \hat{Z}_{m,t} + 1$. Any reward function can be used in place of $R$, and both Theorem~\ref{thm:simple_learner_regret} and \ref{thm:high_probability_bound} will still hold, as long as $R(\texttt{query}, t) = 0$ for all $t$. Theorem~\ref{thm:regret_coverage_connection} holds for any reward function such that $R(\texttt{query}, t) = 0$ for all $t$, such that $R(m,t) = 0$ if $y_t \not\in S_t$, and such that $R(m,t) \geq \beta$ if $y_t \in S_t$. In particular, the reward does not have to be linearly interpolated between $1$ and $\beta$ --- it could instead follow a convex (or concave) curve, and thereby give the algorithm a greater (or smaller) penalty for using a larger prediction threshold. In Figure~\ref{fig:accuracy_and_efficiency_with_alternative_reward}, we show the results of an experiment in which we compare the reward of Definition~\ref{def:auxiliary_reward} against two different convex reward functions. Based on this experiment, there does not appear to be a big difference between these three reward functions.

It is also worth noting that the quantiles of a distribution can be estimated using the \emph{pinball loss}, defined as
\begin{equation*}
    L_q(\hat{\tau}, \tau) =
    \begin{cases}
        q\cdot(\tau - \hat{\tau}) &\text{ if } \tau \geq \hat{\tau},\\
        (1-q)\cdot(\tau - \hat{\tau}) &\text{ otherwise}.
    \end{cases}
\end{equation*}
Here $q \in [0,1]$ is a target quantile, $\hat{\tau}$ is an estimate, and $\tau \in \mathbb{R}$ is a random variable sampled from some distribution $D$. The estimate $\hat{\tau}$ which minimises the expectation of the pinball loss is the $q$'th quantile of $D$. This loss function can be used in the context of conformal prediction; see, e.g., \citet{gibbs2021adaptiveconformalinferencedistribution, bhatnagar2023improvedonlineconformalprediction}, etc, or see \citet{ramalingam2025relationshipnoregretlearningonline} for a more extensive discussion. In particular, we can treat $\hat{\tau} \in [0,1]$ as a prediction threshold and use this to construct a prediction set
\begin{equation}\label{equation:pinball_loss_prediction_set}
    S_t = \{y \in \mathcal{Y} : s(x,y) \leq \tau\}
\end{equation}
where $s : \mathcal{X} \times \mathcal{Y} \to [0,1]$ is an arbitrary function referred to as a \emph{non-conformity score}. For example, note that we can let $s(x,y) = \max_{y^\star \in \mathcal{Y}} p_C(y^\star \mid x) - p_C(y \mid x)$, in which case Equation~\ref{equation:pinball_loss_prediction_set} is the same as Equation~\ref{equation:prediction_set}. Moreover, given a data point $z_t = (x_t, y_t)$, we let $\tau = s(x_t, y_t)$, and we let $q$ be a target coverage rate. If each $(x_t, y_t)$ is sampled i.i.d.\ from some fixed distribution, then this loss function will be minimised by the smallest prediction threshold $\hat{\tau}$ that leads to a coverage rate of at least $q$. By letting $R(m,t) \propto -L_q(m, s_t)$, this loss function can also be used as a reward function. However, while regret with respect to the pinball loss is closely related to the coverage rate in the i.i.d.\ setting, this connection is more complicated in the adversarial setting --- for details, see \citet{ramalingam2025relationshipnoregretlearningonline}.

\section{Generalisations}
In this appendix we describe a number of small ways to generalise Algorithm~\ref{algorithm:OCP_with_queries}. 
These generalisations are relatively simple, but may still be helpful to point out explicitly.

\subsection{Delayed Observations}
Algorithm~\ref{algorithm:OCP_with_queries} assumes that the learner receives information immediately whenever it makes a query. However, it is easy to imagine situations where this information could be delayed, so that if the learner makes a query on round $t$, then it gets to observe $y_t$ on round $t+n$ for some $n > 0$. Fortunately, it is relatively straightforward to generalise Algorithm~\ref{algorithm:OCP_with_queries} to this setting. In particular, suppose that the observations can never be delayed by more than $n$ rounds. In this case, we can run $n$ different copies of Algorithm~\ref{algorithm:OCP_with_queries} sequentially, such that the $i$'th copy is run on round $t$ if $t \equiv i + 1\mod{n}$ (so, for example, the 2nd copy is run on round $2$, $n+2$, $2n+2$, etc). Effectively, this splits the data stream into $n$ different interleaving data streams, each of which has length $T/n$, and each of which is solved by a separate instance of Algorithm~\ref{algorithm:OCP_with_queries}. Since each separate instance of Algorithm~\ref{algorithm:OCP_with_queries} now is guaranteed to make an observation between the rounds of its own sub-stream, our existing theoretical guarantees will apply to each sub-stream. If the expected regret of Algorithm~\ref{algorithm:OCP_with_queries} on a data stream of length $T$ is at most $T^{2/3} \cdot (2 \sqrt{\ln(|M|)} + 1)$, then the resulting (total) expected regret will be at most
\begin{align*}
    &n \cdot (T/n)^{2/3} \cdot \left(2 \sqrt{\ln(|M|)} + 1\right)\\
    = \hspace{0.2cm} &n^{1/3} \cdot T^{2/3} \cdot \left(2 \sqrt{\ln(|M|)} + 1\right).
\end{align*}
A scheme similar to this setup is used by \citet{cherenson2026staggeredintegralonlineconformal}. Note that this requires an upper bound $n$ on the delay of the observations --- if the delay is random and without any reasonable upper bound, then more sophisticated techniques may be required \citep[see, e.g,][]{NEURIPS2019_ae2a2db4}.

\subsection{Infinite or Unknown Time Horizons}
Algorithm~\ref{algorithm:OCP_with_queries} is defined relative to a finite time horizon $T$, at least if we wish to use values of $\eta$ and $\epsilon$ that lead to as low expected regret as possible.
However, in practice it is often desirable to have an \emph{anytime} algorithm. Fortunately, we can convert Algorithm~\ref{algorithm:OCP_with_queries} into an anytime algorithm using the well-known \emph{doubling trick} \citep[see, e.g.,][]{besson2018doublingtrickscantmultiarmed}. In brief, we first execute Algorithm~\ref{algorithm:OCP_with_queries} using $T=1$, and then reset it and execute it using $T=2$, and then $T=4$, then $T=8$, etc, until the actual time horizon is reached. Since the expected regret of Algorithm~\ref{algorithm:OCP_with_queries} for a known time horizon $T$ is $T^{2/3} \cdot (2 \sqrt{\ln(|M|)} + 1)$, the resulting expected regret of the modified algorithm for an arbitrary unknown time horizon $T$ will be at most
\begin{align*}
    &\sum_{i=0}^{\log_2{T}} \left(2^i\right)^{2/3} \cdot \left(2 \sqrt{\ln(|M|)} + 1\right)\\ 
    = \hspace{0.2cm} &\left(\frac{2^{2/3} \cdot T^{2/3} - 1}{2^{2/3}-1}\right)  \cdot  \left(2 \sqrt{\ln(|M|)} + 1\right)\\
    \leq \hspace{0.23cm} &2.71 \cdot T^{2/3} \cdot \left(2 \sqrt{\ln(|M|)} + 1\right).
\end{align*}

\subsection{Set-valued Classifiers}
We have assumed that the classifier $C$ outputs a distribution over the label space $Y$, and that each instance $x_t$ only has a single true label $y_t$. However, what if each instance $x_t$ is instead associated with a \emph{set} of labels $Y_t \subseteq Y$, and $C$ assigns an independent probability to each label (so $C$ has the type signature $X \to [0,1]^Y$)? It is straightforward to adapt Algorithm~\ref{algorithm:OCP_with_queries} to this setting -- the only change that is required is that we let $s_t$ be the smallest value in $M$ such that $C(y, x_t) \geq \max_{y^\star} C(y^\star, x_t) - s_t$ for \emph{all} $y \in Y_t$. For the theoretical guarantees, simply redefine the coverage rate to require that $Y_t \subseteq S_t$ instead of $y_t \in S_t$, and let the auxiliary reward function be $0$ whenever $Y_t \not\subseteq S_t$, and all theoretical guarantees will still hold.

\subsection{Infinite Label Spaces}
We have assumed that the label space $Y$ is finite. However, what if $Y$ instead consists of an interval? It is fairly easy to adapt Algorithm~\ref{algorithm:OCP_with_queries} to this setting, at least if $Y$ is bounded. In particular, suppose $Y$ is a bounded interval $I \subset \mathbb{R}$. We then let $M$ be a finite subset of $[0, |I|]$ such that $0 \in M$ and $|I| \in M$. If Algorithm~\ref{algorithm:OCP_with_queries} picks prediction threshold $m$ on round $t$, then we let $S_t = [C(x) - m, C(x) + m]$. In the specification of Algorithm~\ref{algorithm:OCP_with_queries}, we should let $s_t$ be the smallest value in $M$ such that  $y_t \in [C(x) - s_t, C(x) + s_t]$. Moreover, when the algorithm makes a query, we should let $\hat{Z}_{i, t+1} = \hat{Z}_{i,t} +  1 - \frac{(m/|I|) \cdot (1-\beta)}{\epsilon}$ instead of $\hat{Z}_{i,t} +  1 - \frac{m \cdot (1-\beta)}{\epsilon}$ for all $i \in M$ such that $i \geq s_t$. We should also modify the definition of the auxiliary reward function so that $R(m, t) = 1 - (m/|I|) \cdot (1 - \beta)$ when $y_t \in [C(x) - m, C(x) + m]$. With these changes, all the theoretical guarantees we have derived will still hold.

\section{Additional Experiments}\label{appendix:additional_experiments}

In this appendix, we provide the results of several additional experiments that we were not able to fit in the main text. First of all, in Figure~\ref{fig:M_vs_coverage_and_set_size}, we show how the choice of $M$ impacts the coverage rate and efficiency of \texttt{OCPQ} on a number of real-world data sets. As we can see, larger sizes of $M$ seems to generally lead to better performance, but only up to some asymptote. 
In Figure~\ref{fig:pareto_frontiers} we explore the pareto trade-off between coverage and efficiency, and compare the performance of \texttt{OCPQ} to the \enquote{trivial} conformal predictor that outputs $\mathcal{Y}$ with probability $p$ and $\varnothing$ with probability $1-p$. As we can see, \texttt{OCPQ} improves the pareto frontier compared to the trivial predictor (as we should expect).
In Figure~\ref{fig:accuracy_and_efficiency_over_time} we show how the coverage rate and inefficiency of \texttt{OCPQ} evolve over time for a number of real-world data sets (for some specific hyperparameter values). As in our other experiments, we can see that the (empirical) coverage rate of \texttt{OCPQ} typically is quite a bit higher than the analytic bound.
In Figure~\ref{fig:beta_vs_average_threshold} we measure how the average prediction threshold $m$ used by \texttt{OCPQ} depends on $\beta$, showing that it generally follows a sigmoidal shape with the same inflexion point that we observe in Figure~\ref{fig:beta_vs_coverage_and_set_size}. 
In Figure~\ref{fig:accuracy_and_reward_per_m} we show the coverage rate of each prediction threshold in $M$ for a number of data sets, as well as the reward and expected reward of each prediction threshold for a specific example value of $\beta$. As we can see from these plots, many values of $\beta$ will lead to either of the two \enquote{trivial} prediction thresholds $m=0$ or $m=1$ being the prediction threshold that maximises the expected reward. This reaffirms the impression that picking a good value of $\beta$ may be a fairly delicate matter in practice.
In Figure~\ref{fig:accuracy_and_efficiency_with_alternative_reward} we use \texttt{OCPQ} with three different reward functions --- namely, the reward function given in Definition~\ref{def:auxiliary_reward}, and two convex reward functions that are designed to give the algorithm a greater penalty (compared to the reward of Definition~\ref{def:auxiliary_reward}) for using a larger prediction threshold. As we can see, all three reward functions seem to induce similar behaviour in \texttt{OCPQ} (so, in particular, they are not effective for making the empirical cover rate of \texttt{OCPQ} closer to the analytic bound).

\begin{figure*}[htp]
    \centering
    \begin{subfigure}[b]{0.24\textwidth}
        \includegraphics[width=\linewidth]{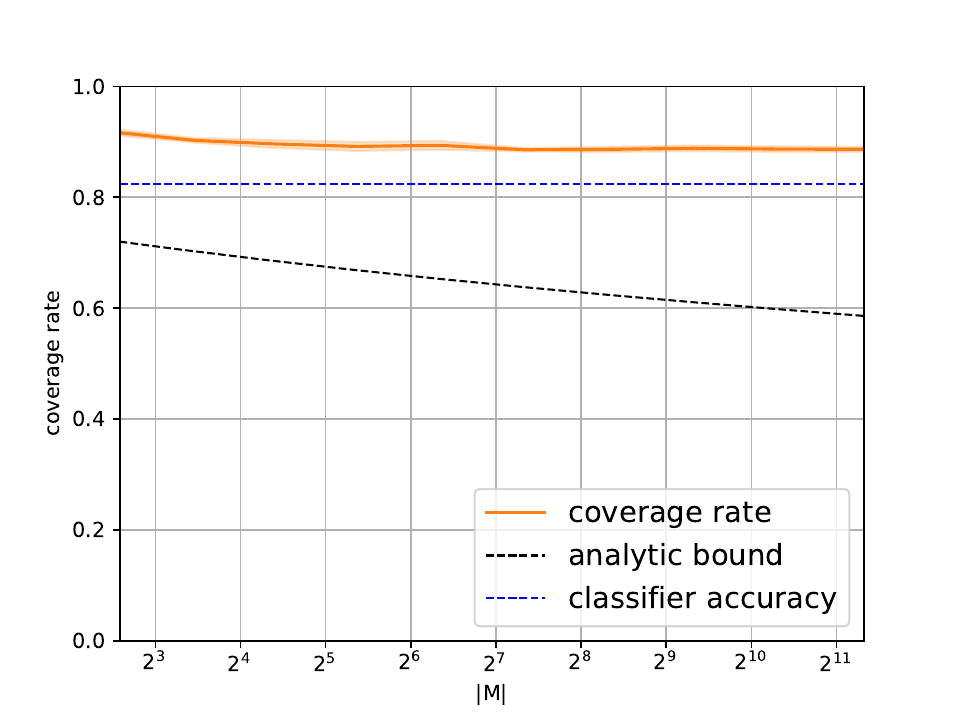}
        \caption{Coverage on USPS.}
    \end{subfigure}
    \hfil
    \begin{subfigure}[b]{0.24\textwidth}
        \includegraphics[width=\linewidth]{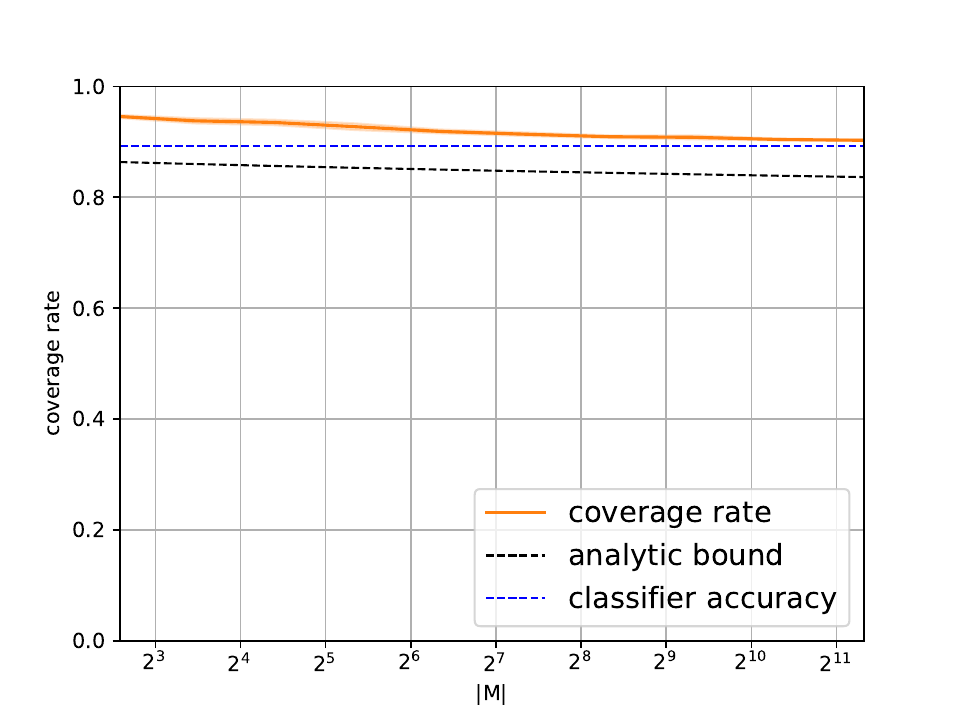}
        \caption{Coverage on MNIST-C.}
    \end{subfigure}
    \hfil
    \begin{subfigure}[b]{0.24\textwidth}
        \includegraphics[width=\linewidth]{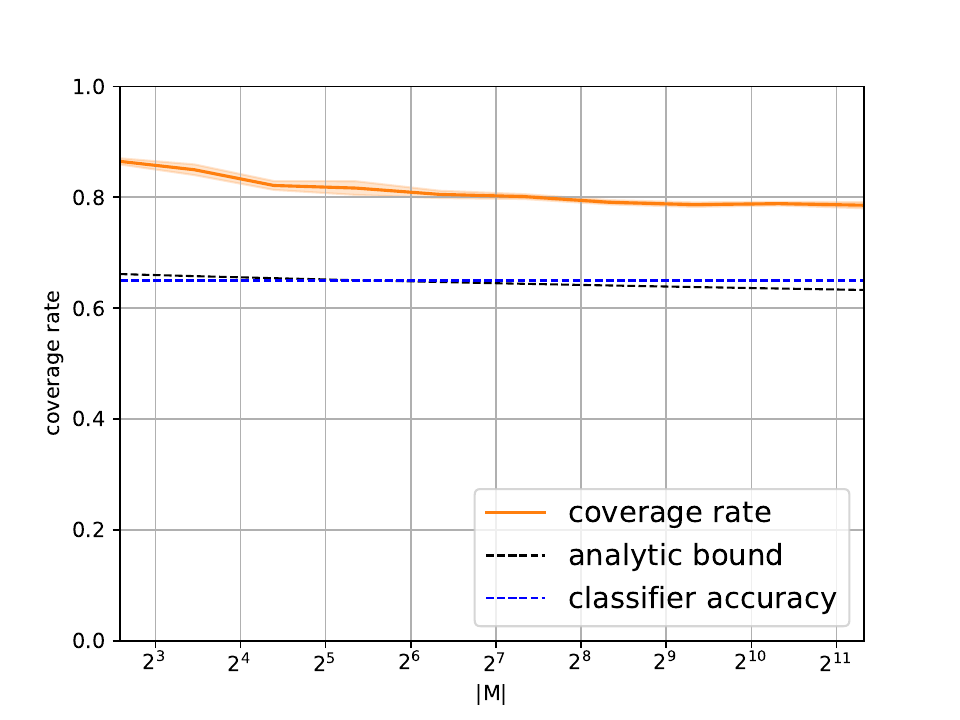}
        \caption{Coverage on CIFAR-10-C.}
    \end{subfigure}
    \hfil
    \begin{subfigure}[b]{0.24\textwidth}
        \includegraphics[width=\linewidth]{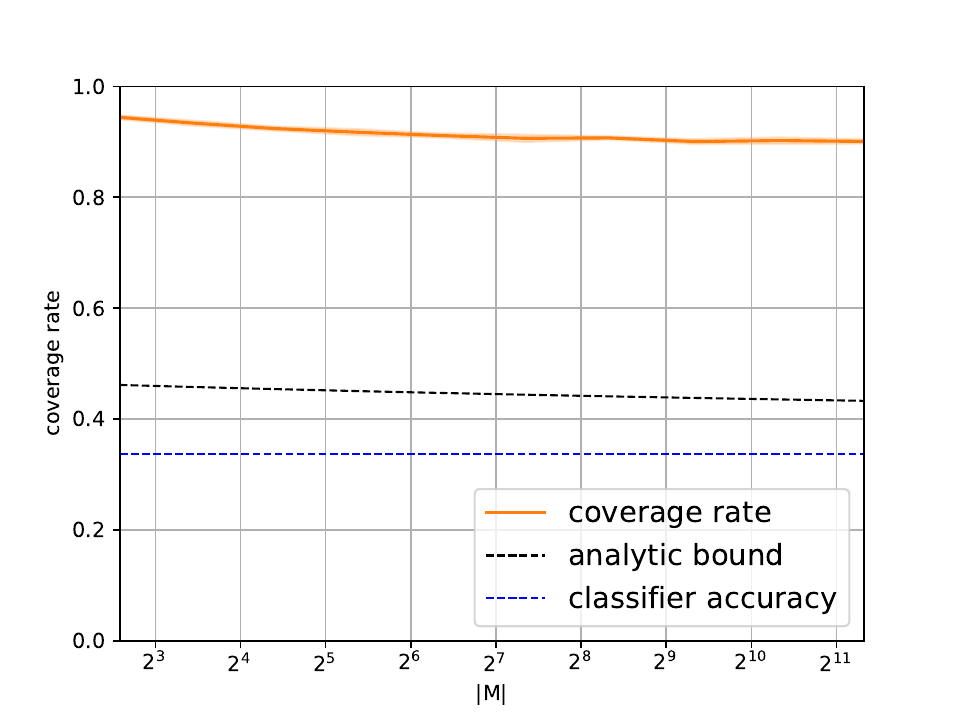}
        \caption{Coverage on CIFAR-100-C.}
    \end{subfigure}

    \begin{subfigure}[b]{0.24\textwidth}
        \includegraphics[width=\linewidth]{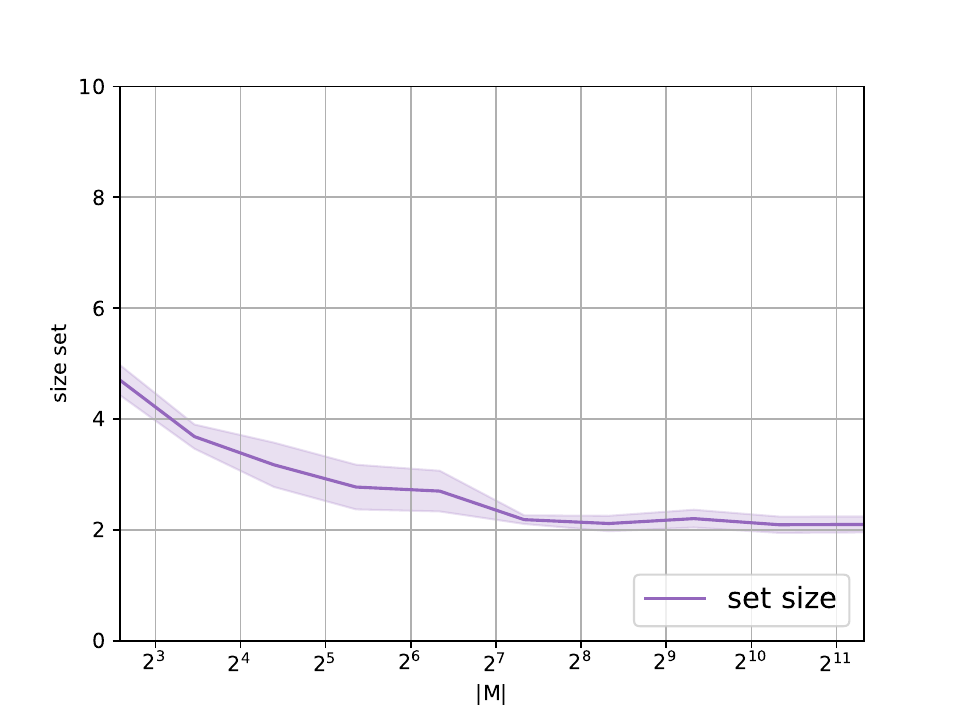}
        \caption{Efficiency on USPS.}
    \end{subfigure}
    \hfil
    \begin{subfigure}[b]{0.24\textwidth}
        \includegraphics[width=\linewidth]{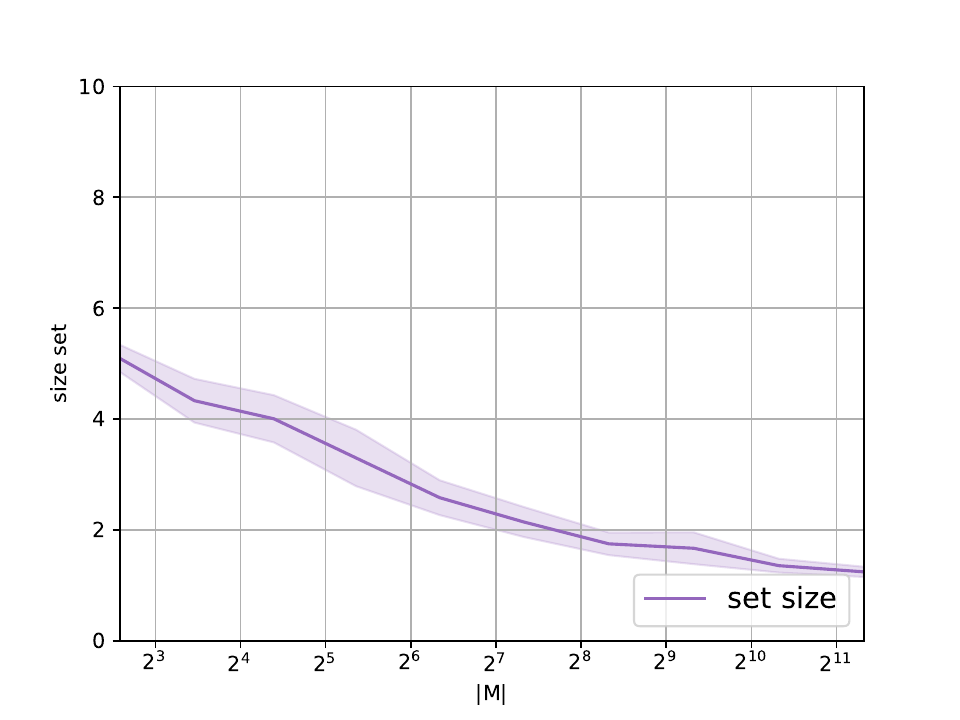}
        \caption{Efficiency on MNIST-C.}
    \end{subfigure}
    \hfil
    \begin{subfigure}[b]{0.24\textwidth}
        \includegraphics[width=\linewidth]{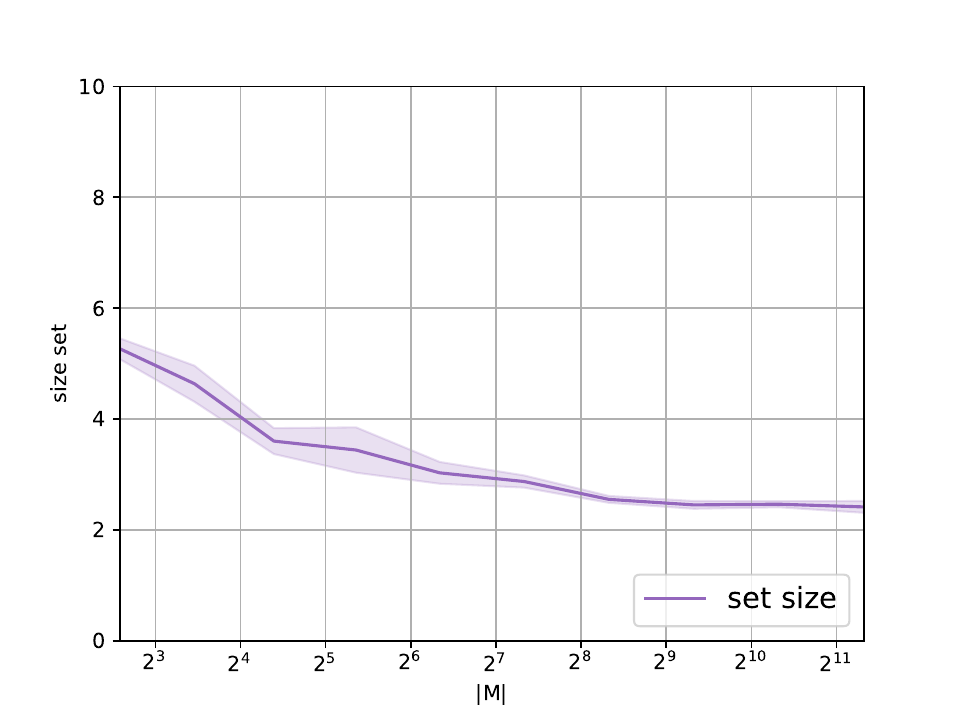}
        \caption{Efficiency on CIFAR-10-C.}
    \end{subfigure}
    \hfil
    \begin{subfigure}[b]{0.24\textwidth}
        \includegraphics[width=\linewidth]{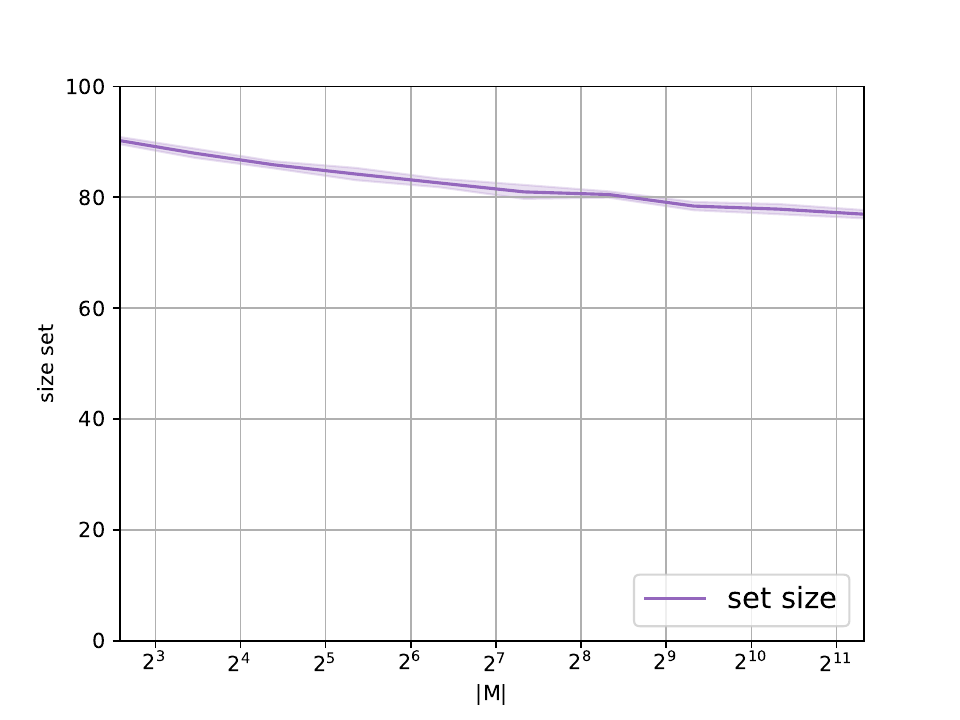}
        \caption{Efficiency on CIFAR-100-C.}
    \end{subfigure}
    \caption{These graphs show how the choice of $|M|$ affects the \textcolor{orange}{\textbf{coverage rate}} and \textcolor{violet}{\textbf{efficiency}} on a number of real-world data sets. We use a neural network that is trained on one data set but evaluated on a different data set (MNIST to USPS or MNIST-C, CIFAR-10 to CIFAR-10-C, or CIFAR-100 to CIFAR-100-C). We let $M$ consist of evenly spaced prediction thresholds, vary the total number of these thresholds, run \texttt{OCPQ} 10 times for each value of $|M|$, and record the resulting coverage rates and average prediction set sizes. 
    We use $\beta = 0.9$ for MNIST-C and USPS, $\beta = 0.7$ for CIFAR-10-C, and $\beta = 0.5$ for CIFAR-100-C -- these choices were informed by the data shown in Figure~\ref{fig:beta_vs_coverage_and_set_size}.
    The graphs show the average of these values across 10 runs, with error bars corresponding to the average distance to the average value (i.e., the mean absolute deviation). Note that the mean absolute deviation is so small that the error bars are barely visible in the plot. Also note that the x-axes are logarithmic. For the graphs displaying the coverage rates, the black dashed line corresponds to the \textbf{analytic lower bound} on the expected coverage rate, and the blue line corresponds to the baseline \textbf{\textcolor{blue}{accuracy}} of the network on the relevant data set.}
    \label{fig:M_vs_coverage_and_set_size}
\end{figure*}

\begin{figure*}[htp]
    \centering
    \begin{subfigure}[b]{0.24\textwidth}
        \includegraphics[width=\linewidth]{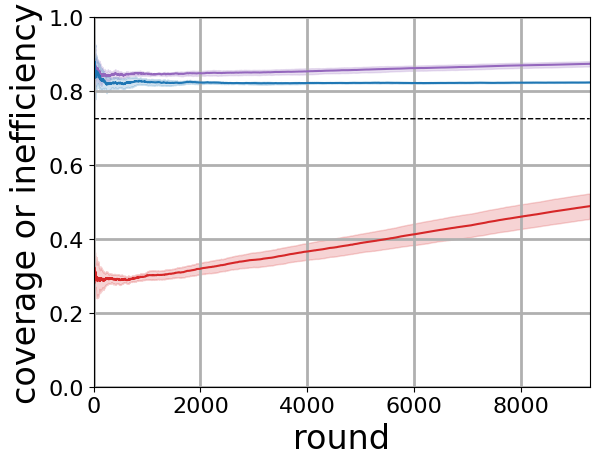}
        \caption{USPS}
    \end{subfigure}
    \hfil
    \begin{subfigure}[b]{0.24\textwidth}
        \includegraphics[width=\linewidth]{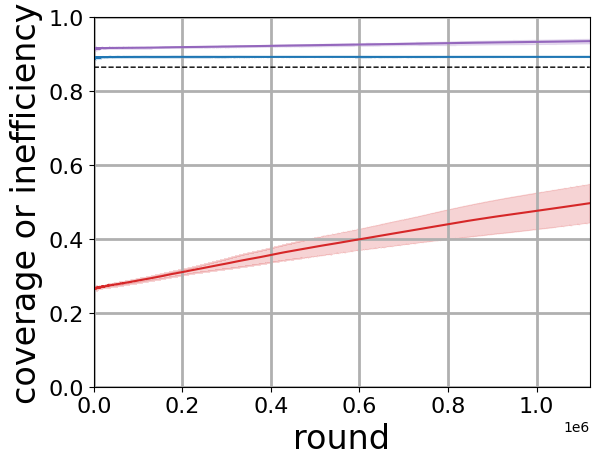}
        \caption{MNIST-C}
    \end{subfigure}
    \hfil
    \begin{subfigure}[b]{0.24\textwidth}
        \includegraphics[width=\linewidth]{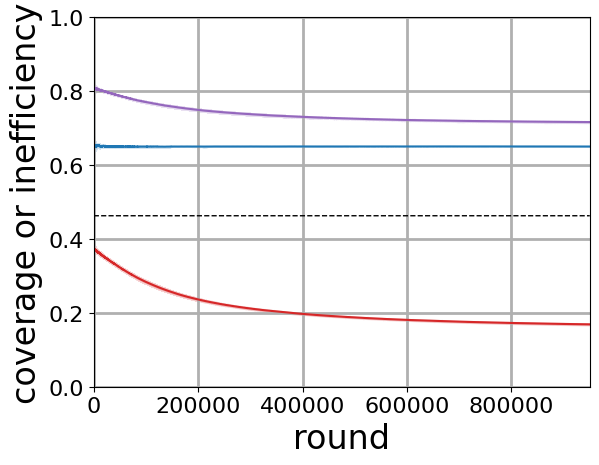}
        \caption{CIFAR-10-C}
    \end{subfigure}
    \hfil
    \begin{subfigure}[b]{0.24\textwidth}
        \includegraphics[width=\linewidth]{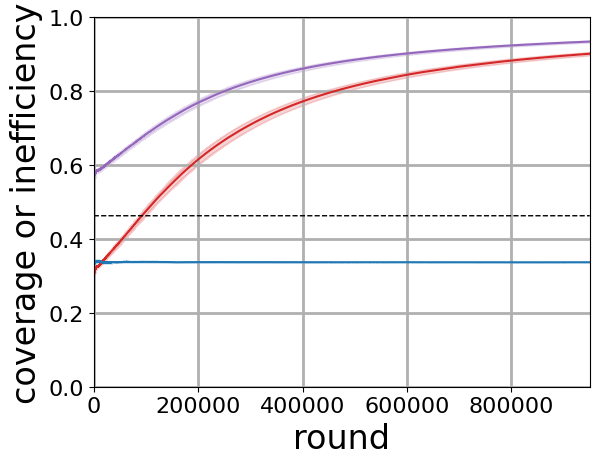}
        \caption{CIFAR-100-C}
    \end{subfigure}
    \caption{These graphs show how the long-run coverage rate and efficiency evolves over time on a number of real-world data sets. We use a neural network that is trained on one data set but evaluated on a different data set (MNIST to USPS or MNIST-C, CIFAR-10 to CIFAR-10-C, or CIFAR-100 to CIFAR-100-C). We let $M = \{0.0, 0.2, 0.4, 0.6, 0.8, 1.0\}$, $\beta = 0.9$ for USPS and MNIST-C, and $\beta = 0.5$ for CIFAR-10-C and CIFAR-100-C. \texttt{OCPQ} is run 10 times for each data set, with the plot showing the average and the mean absolute deviation for each value over these 10 runs. The purple line shows the \textcolor{violet}{\textbf{coverage rate}}, the red line shows the \textcolor{red}{\textbf{inefficiency}}, the blue line shows the \textbf{\textcolor{blue}{accuracy}} of the underlying classifier, and the dashed black line shows the \textbf{analytic bound} on the (final) cover rate. The efficiency has been normalised to lie between 0 and 1.} 
    \label{fig:accuracy_and_efficiency_over_time}
\end{figure*}

\begin{figure*}[htp]
    \centering
    \begin{subfigure}[b]{0.24\textwidth}
        \includegraphics[width=\linewidth]{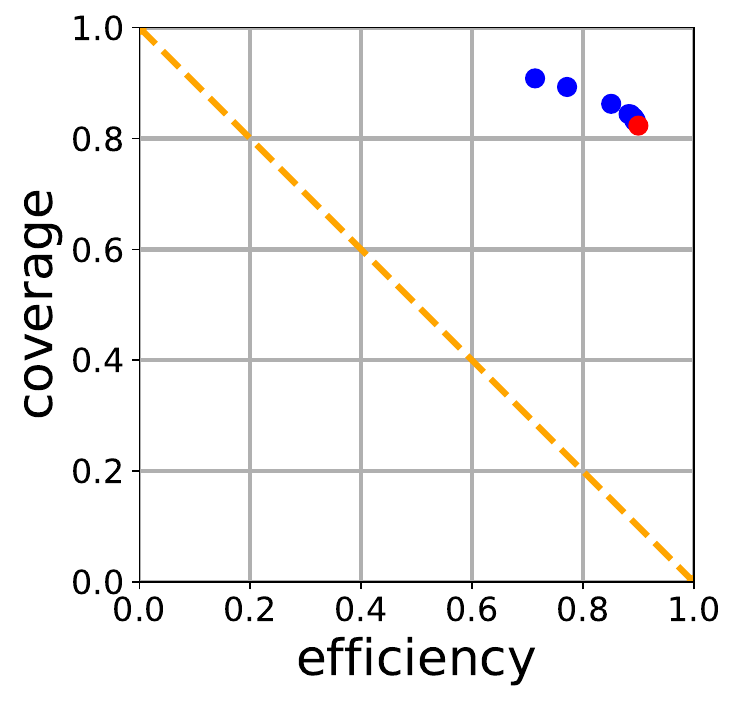}
        \caption{USPS}
    \end{subfigure}
    \hfil
    \begin{subfigure}[b]{0.24\textwidth}
        \includegraphics[width=\linewidth]{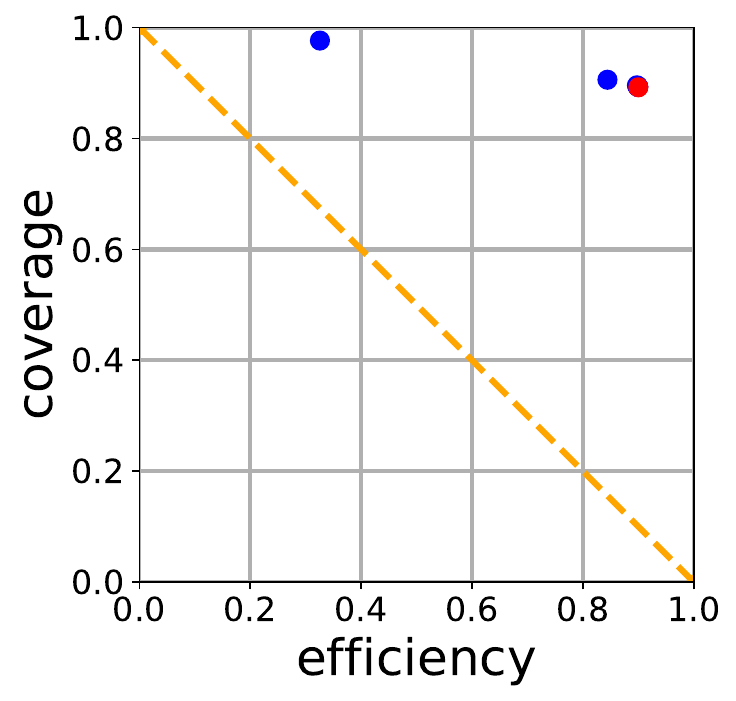}
        \caption{MNIST-C}
    \end{subfigure}
    \hfil
    \begin{subfigure}[b]{0.24\textwidth}
        \includegraphics[width=\linewidth]{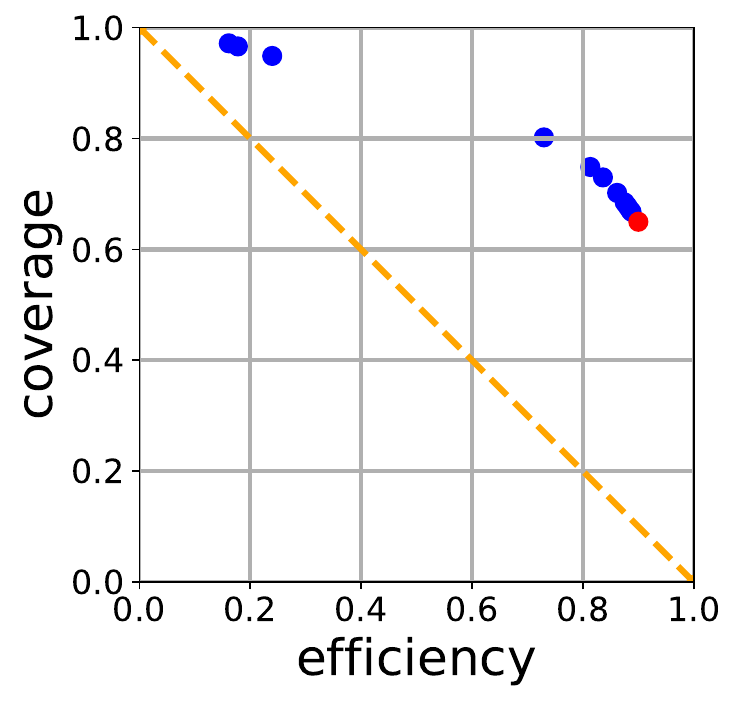}
        \caption{CIFAR-10-C}
    \end{subfigure}
    \hfil
    \begin{subfigure}[b]{0.24\textwidth}
        \includegraphics[width=\linewidth]{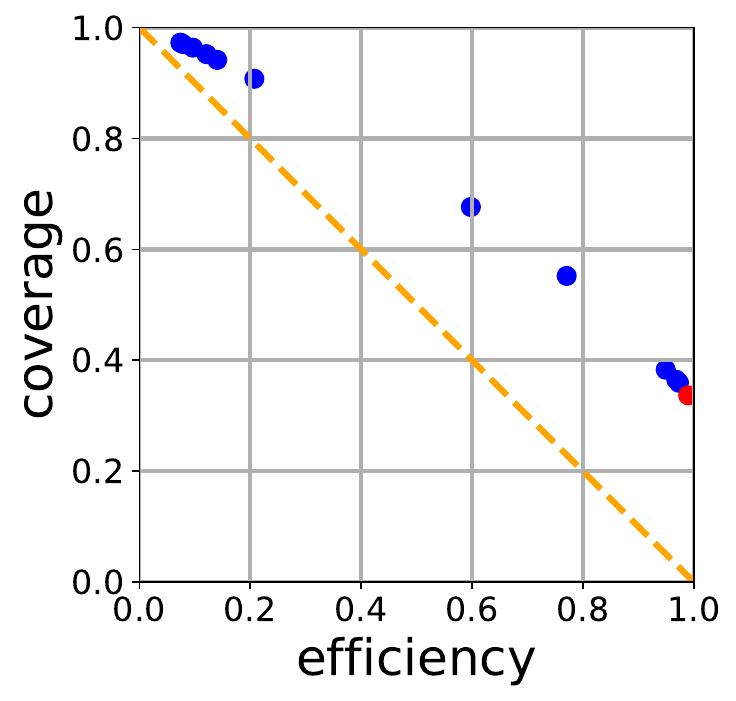}
        \caption{CIFAR-100-C}
    \end{subfigure}
    \caption{These graphs show the pareto frontier over the coverage rate and the efficiency. We use a neural network that is trained on one data set but evaluated on a different data set (MNIST to USPS or MNIST-C, CIFAR-10 to CIFAR-10-C, or CIFAR-100 to CIFAR-100-C). Each blue dot corresponds to a run of \textbf{\textcolor{blue}{OCPQ}}, where $\beta$ has been varied from $0$ to $1$ in increments of $0.1$, and the red dot corresponds to the underlying \textbf{\textcolor{red}{classifier}} without conformal prediction. The orange line corresponds to the \textbf{\textcolor{orange}{trivial}} conformal predictor that simply outputs $Y$ with probability $p$ and $\varnothing$ with probability $1-p$. In these plots, we have normalised the efficiency to lie between $0$ ($S_t = \mathcal{Y}$) and $1$ ($S_t = \varnothing$), meaning that a higher value is better.} 
    \label{fig:pareto_frontiers}
\end{figure*}

\begin{figure*}[htp]
    \centering
    \begin{subfigure}[b]{0.24\textwidth}
        \includegraphics[width=\linewidth]{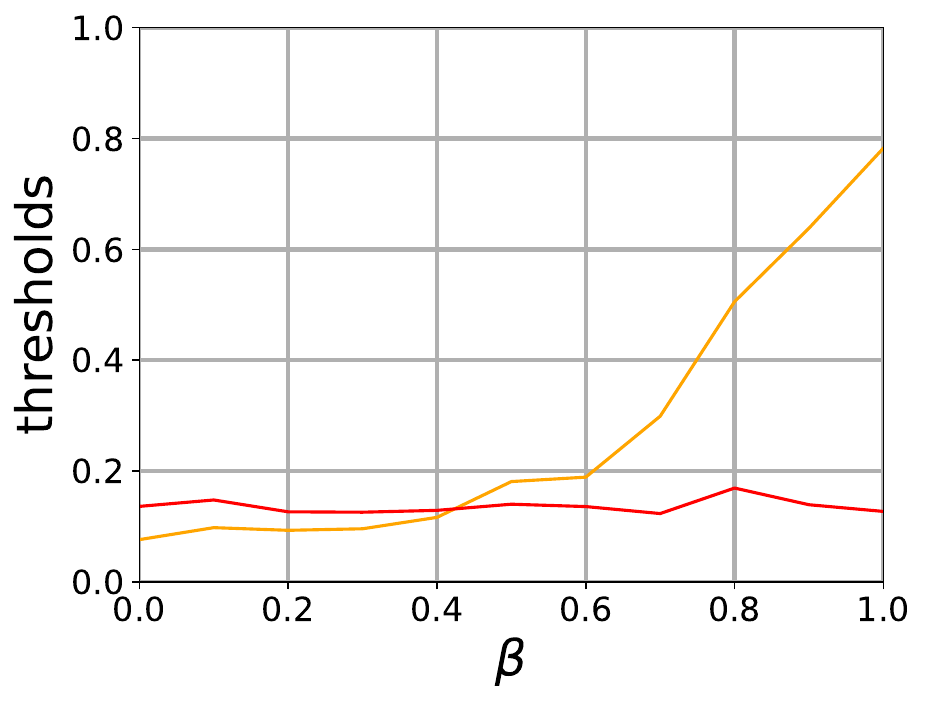}
        \caption{USPS}
    \end{subfigure}
    \hfil
    \begin{subfigure}[b]{0.24\textwidth}
        \includegraphics[width=\linewidth]{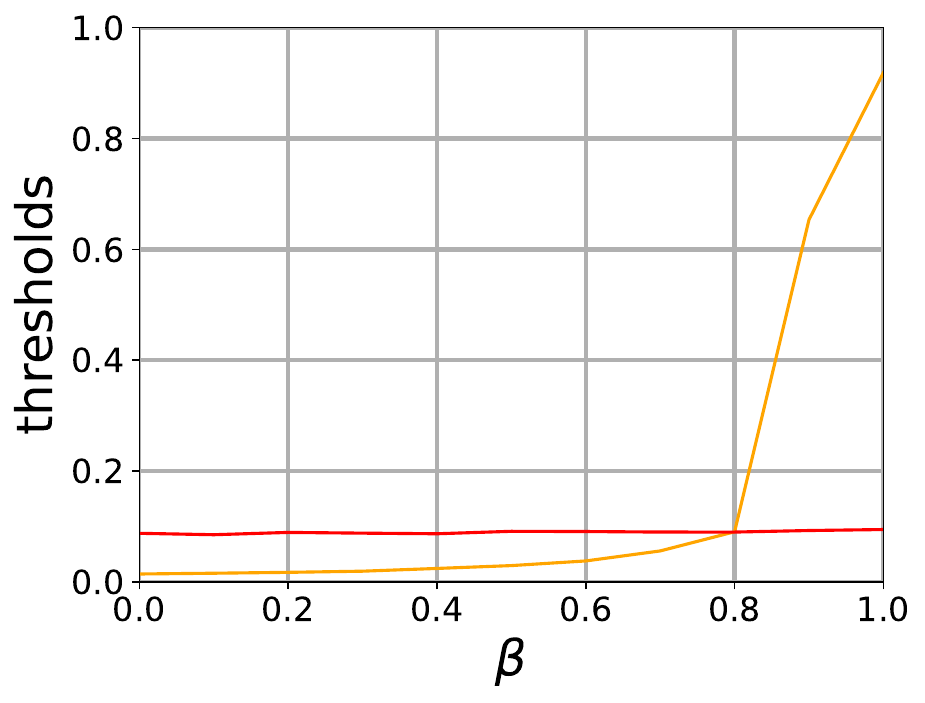}
        \caption{MNIST-C}
    \end{subfigure}
    \hfil
    \begin{subfigure}[b]{0.24\textwidth}
        \includegraphics[width=\linewidth]{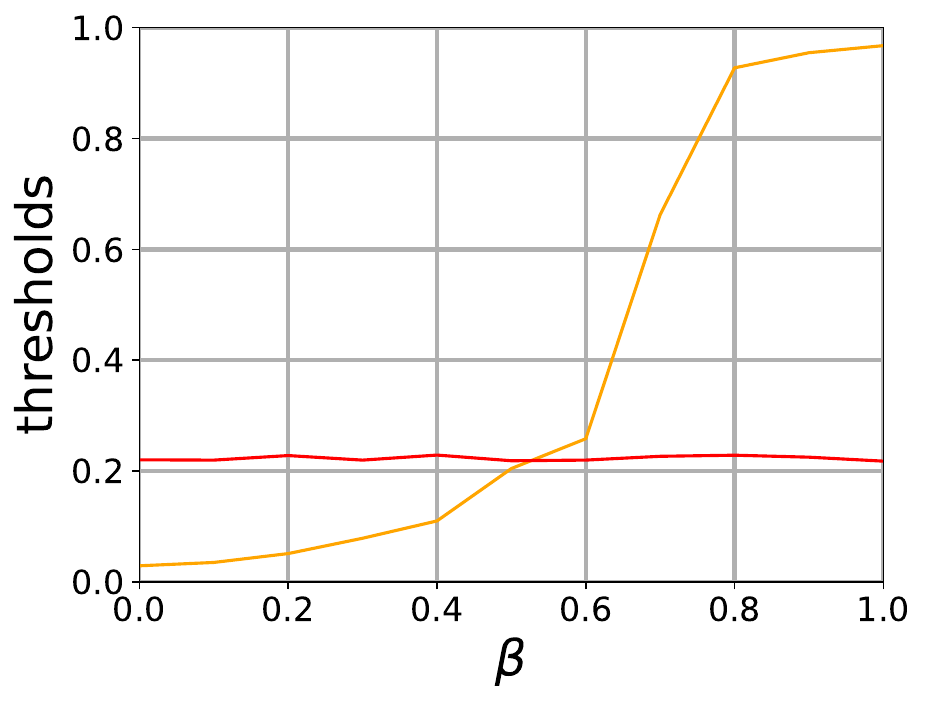}
        \caption{CIFAR-10-C}
    \end{subfigure}
    \hfil
    \begin{subfigure}[b]{0.24\textwidth}
        \includegraphics[width=\linewidth]{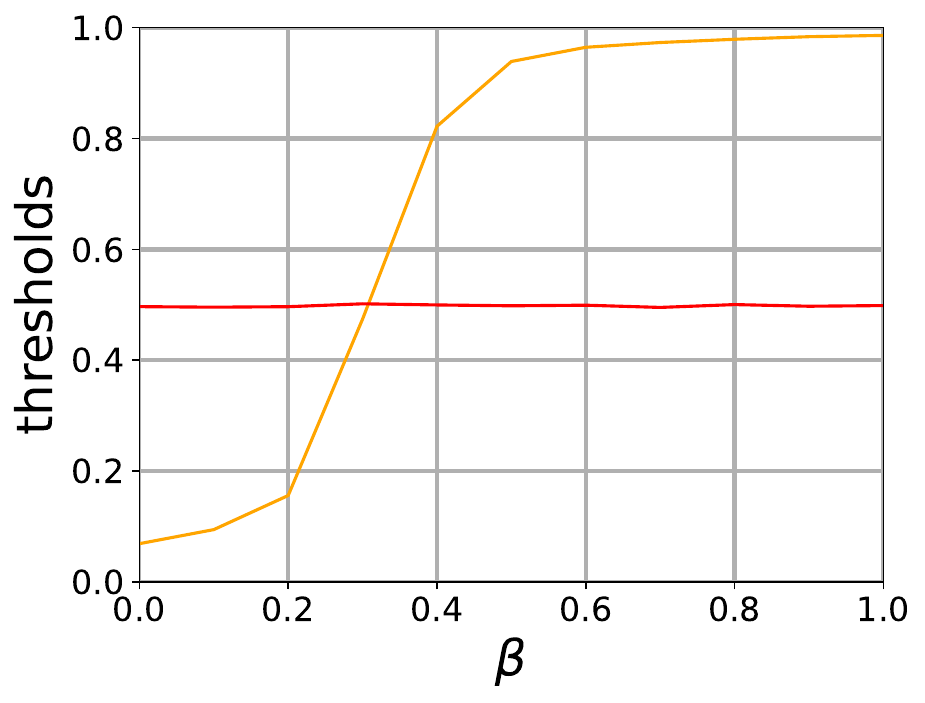}
        \caption{CIFAR-100-C}
    \end{subfigure}
    \caption{These graphs show how the hyperparameter $\beta$ affects the average prediction threshold on a number of real-world data sets. We use a neural network that is trained on one data set but evaluated on a different data set (MNIST to USPS or MNIST-C, CIFAR-10 to CIFAR-10-C, or CIFAR-100 to CIFAR-100-C). We vary $\beta$ from $0$ to $1$ in increments of $0.1$, run \texttt{OCPQ}, and record the resulting average prediction threshold $m$ (over the full data set). The orange line corresponds to the \textbf{\textcolor{orange}{average prediction threshold}} used by \texttt{OCPQ}, and the red line to the average of the \textcolor{red}{\textbf{tightest prediction threshold with coverage}}.} 
    \label{fig:beta_vs_average_threshold}
\end{figure*}

\begin{figure*}[htp]
    \centering
    \begin{subfigure}[b]{0.24\textwidth}
        \includegraphics[width=\linewidth]{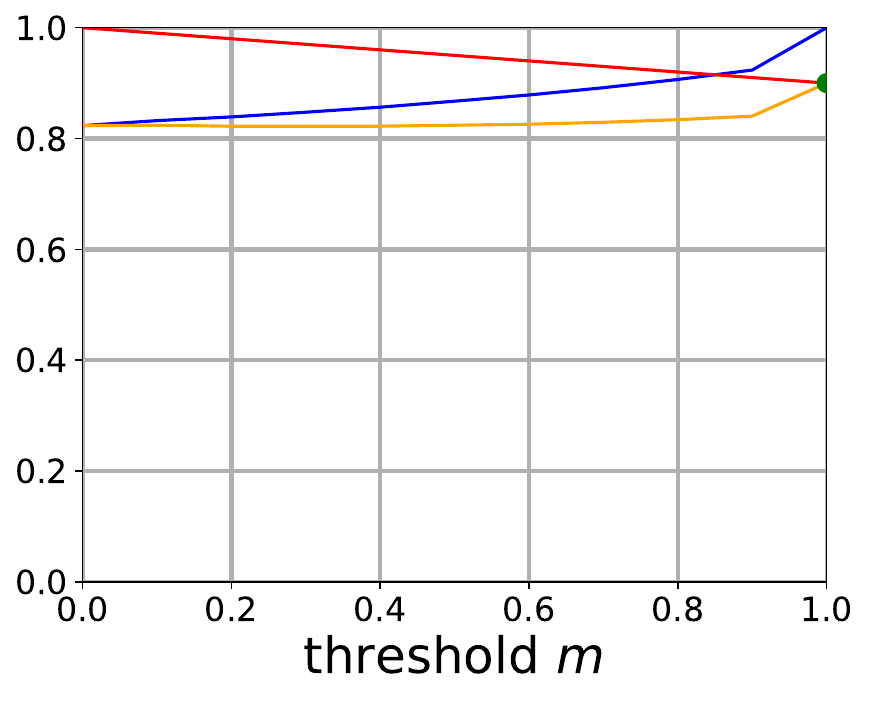}
        \caption{USPS}
    \end{subfigure}
    \hfil
    \begin{subfigure}[b]{0.24\textwidth}
        \includegraphics[width=\linewidth]{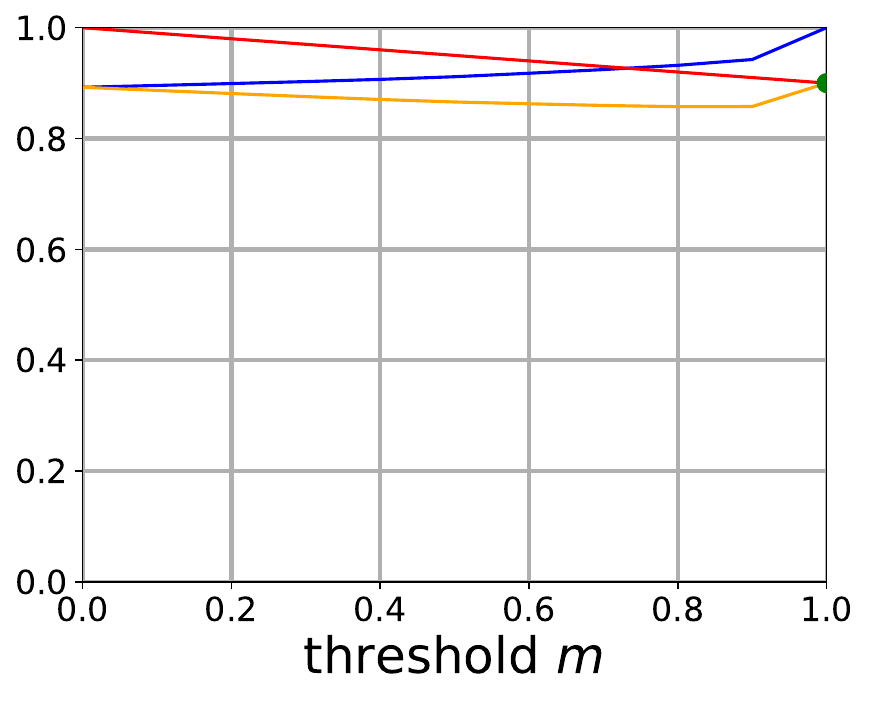}
        \caption{MNIST-C}
    \end{subfigure}
    \hfil
    \begin{subfigure}[b]{0.24\textwidth}
        \includegraphics[width=\linewidth]{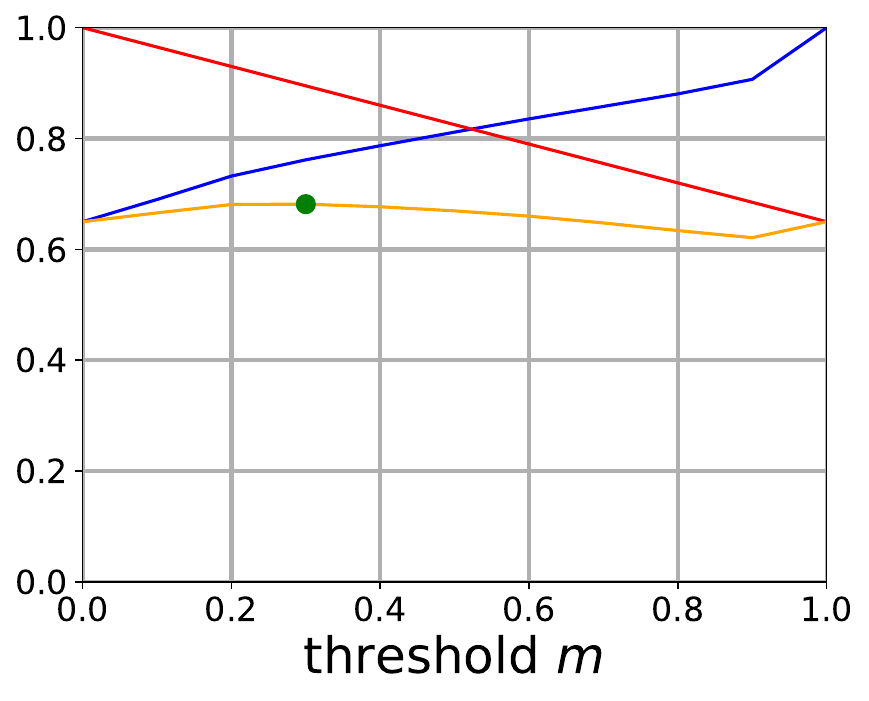}
        \caption{CIFAR-10-C}
    \end{subfigure}
    \hfil
    \begin{subfigure}[b]{0.24\textwidth}
        \includegraphics[width=\linewidth]{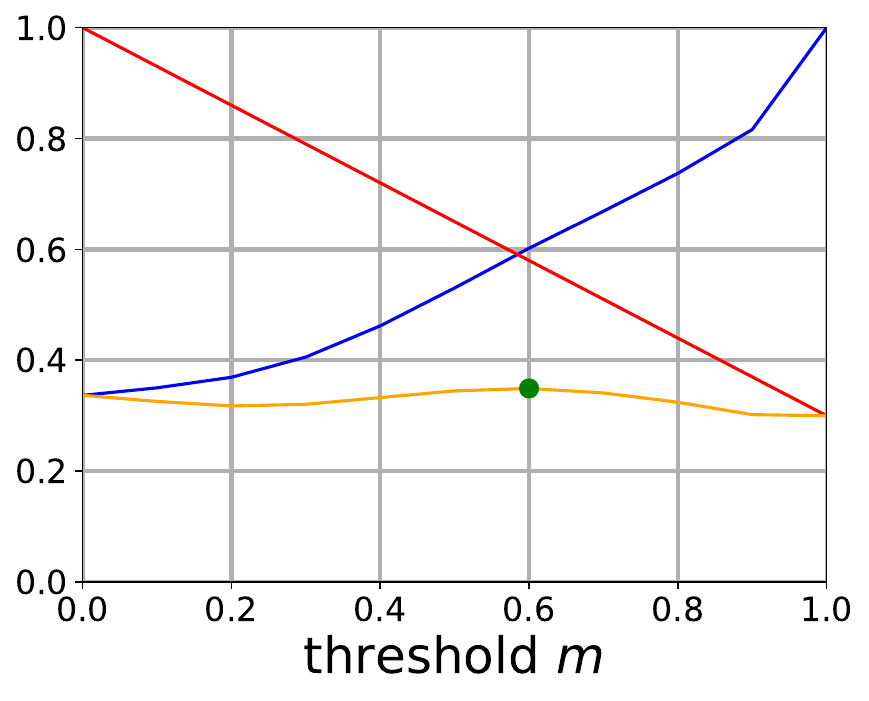}
        \caption{CIFAR-100-C}
    \end{subfigure}
    \caption{These graphs show the accuracy for each prediction threshold on a number of real-world data sets. We use a neural network that is trained on one data set but evaluated on a different data set (MNIST to USPS or MNIST-C, CIFAR-10 to CIFAR-10-C, or CIFAR-100 to CIFAR-100-C). We vary $m$ from 0 to 1 in increments of 0.1, and measure what \textbf{\textcolor{blue}{fraction}} of the time the true label $y_t$ is contained in the resulting prediction set $S_t$ if this value of $m$ is used on every round. For reference, we also plot an example of the auxiliary \textbf{\textcolor{red}{reward}} function using different values of $\beta$ ($0.9$ for MNIST-C and USPS, $0.65$ for CIFAR-10-C, and $0.3$ for CIFAR-100-C), as well as the \textbf{\textcolor{orange}{expected total reward}} of each prediction threshold for this reward function. The \textbf{\textcolor{teal}{green}} dot indicates the prediction threshold with the highest expected reward for the relevant choice of $\beta$, meaning that this is the threshold that \texttt{OCPQ} will converge towards (for the given data set, classifier, and choice of $\beta$).} 
    \label{fig:accuracy_and_reward_per_m}
\end{figure*}

\begin{figure*}[htp]
    \centering
    \begin{subfigure}[b]{0.3\textwidth}
        \includegraphics[width=\linewidth]{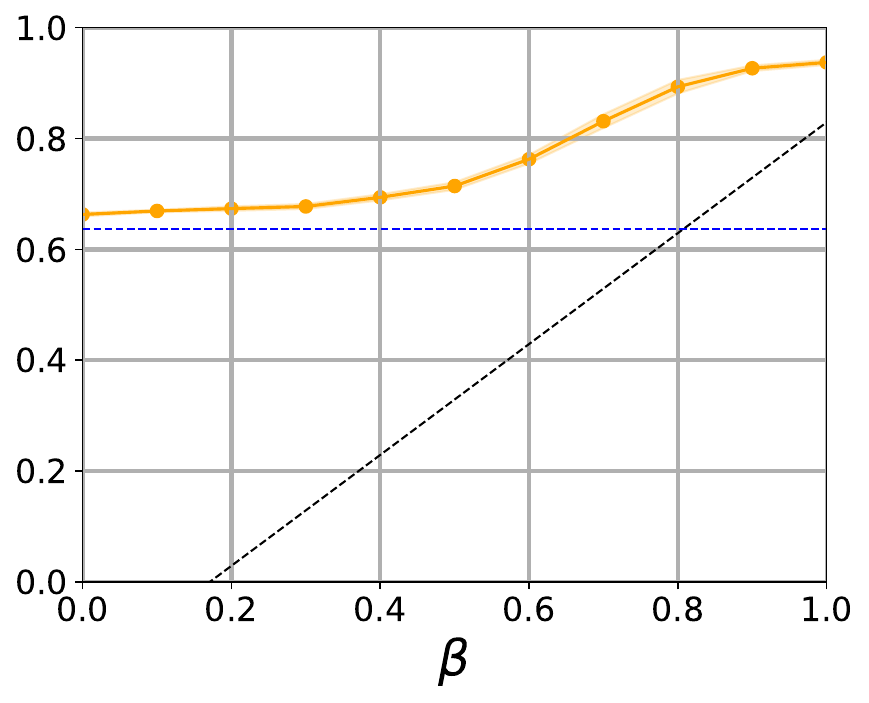}
        \caption{Coverage, linear reward.}
    \end{subfigure}
    \hfil
    \begin{subfigure}[b]{0.3\textwidth}
        \includegraphics[width=\linewidth]{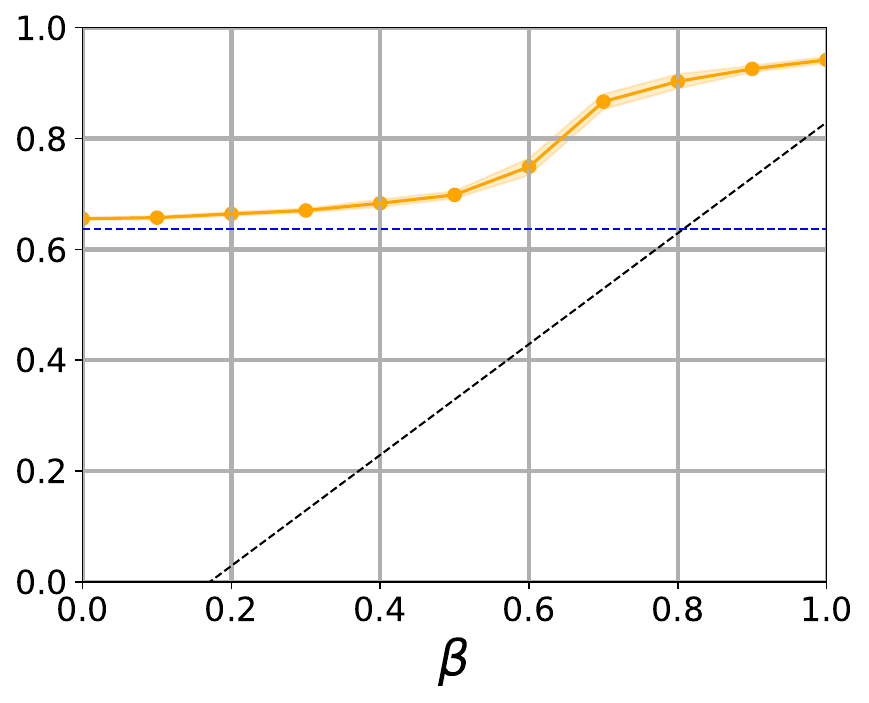}
        \caption{Coverage, quadratic reward.}
    \end{subfigure}
    \hfil
    \begin{subfigure}[b]{0.3\textwidth}
        \includegraphics[width=\linewidth]{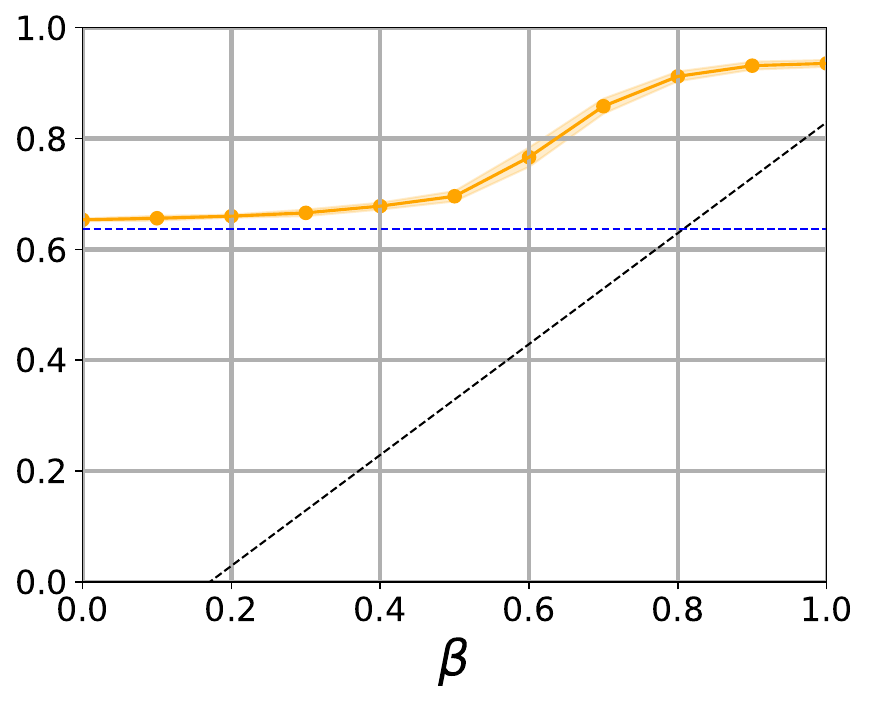}
        \caption{Coverage, quartic reward.}
    \end{subfigure}

    \begin{subfigure}[b]{0.3\textwidth}
        \includegraphics[width=\linewidth]{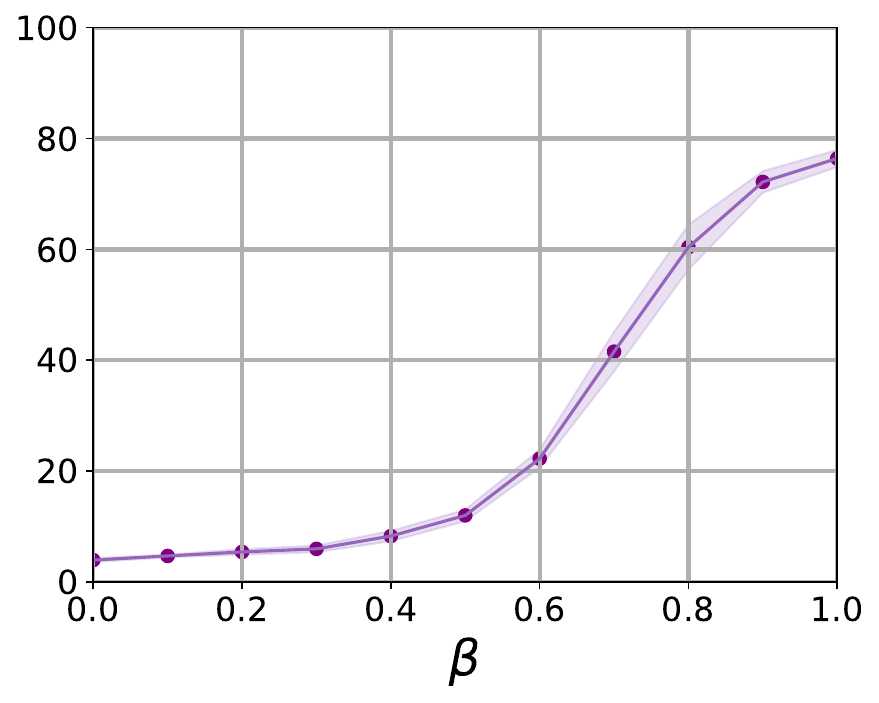}
        \caption{Efficiency, linear reward.}
    \end{subfigure}
    \hfil
    \begin{subfigure}[b]{0.3\textwidth}
        \includegraphics[width=\linewidth]{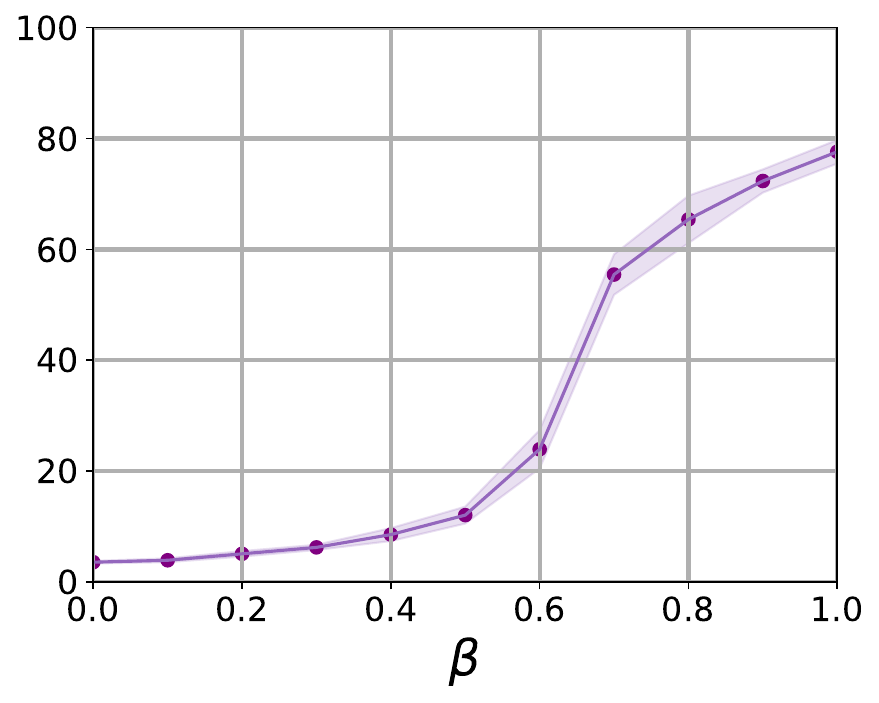}
        \caption{Efficiency, quadratic reward.}
    \end{subfigure}
    \hfil
    \begin{subfigure}[b]{0.3\textwidth}
        \includegraphics[width=\linewidth]{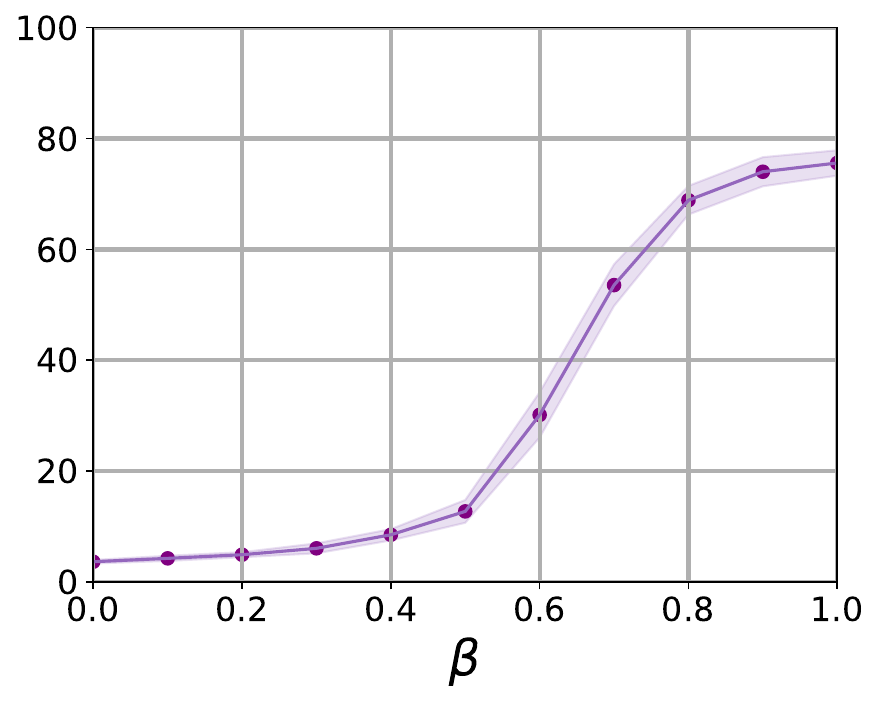}
        \caption{Efficiency, quartic reward.}
    \end{subfigure}

    \begin{subfigure}[b]{0.3\textwidth}
        \includegraphics[width=\linewidth]{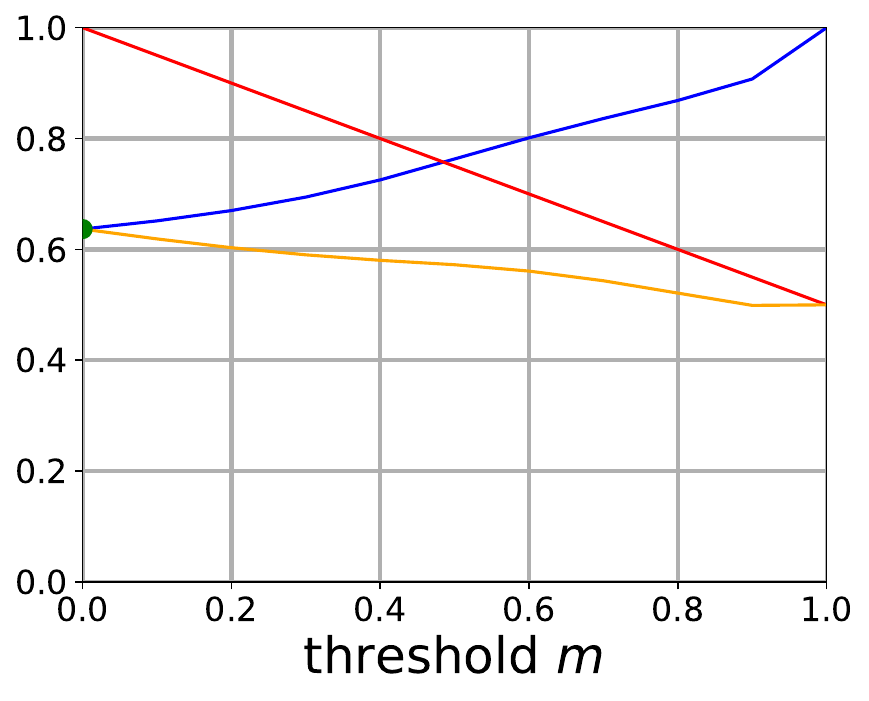}
        \caption{Reward per action, linear reward.}
    \end{subfigure}
    \hfil
    \begin{subfigure}[b]{0.3\textwidth}
        \includegraphics[width=\linewidth]{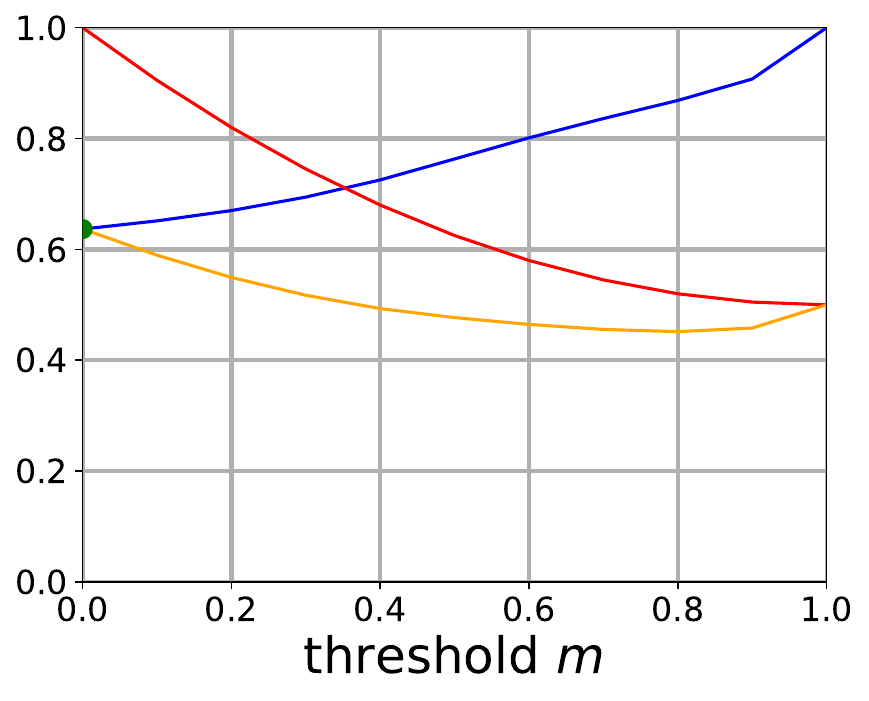}
        \caption{Reward per action, quadratic reward.}
    \end{subfigure}
    \hfil
    \begin{subfigure}[b]{0.3\textwidth}
        \includegraphics[width=\linewidth]{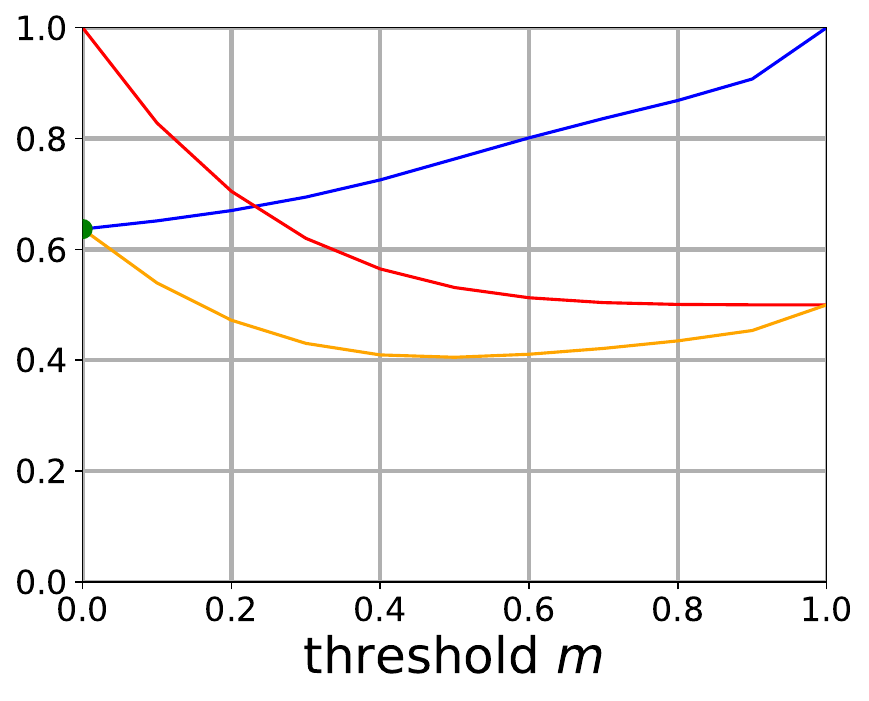}
        \caption{Reward per action, quartic reward.}
    \end{subfigure}
    \caption{These graphs show the coverage rate and average prediction set size for \texttt{OCPQ} using three different reward functions. The linear reward function is the reward function described in Definition~\ref{def:auxiliary_reward}. The quadratic reward function is equivalent to the linear reward function except that $R(m,t) = \beta + (1-\beta)\cdot(m-1)^2$ instead of $1-m\cdot(1-\beta)$ when $y_t \in S_t$. Similarly, the quartic reward function is equal to $\beta + (1-\beta)\cdot(m-1)^4$ when $y_t \in S_t$.
    We use a neural network that is trained on the CIFAR-100 training set, and the data stream is the CIFAR-100 test set. We let $M = \{0.0, 0.2, 0.4, 0.6, 0.8, 1.0\}$, $\epsilon = T^{-1/3}$, and $\eta = T^{-2/3} \cdot \sqrt{\ln|M|}$, where $T$ is the size of the data set. We vary $\beta$ from $0$ to $1$ in increments of $0.1$, run \texttt{OCPQ} 10 times for each value of $\beta$, and record the resulting coverage and efficiency. The graphs show the average of these values across 10 runs, with error bars corresponding to the average distance to the average value. For the graphs displaying the coverage rates, the black dashed line corresponds to the lower bound from Corollary~\ref{corollary:coverage_rate_bound}, and the blue line corresponds to the accuracy of the network on the CIFAR-100 test set when no conformal prediction is used. The bottom plots show the  \textbf{\textcolor{blue}{fraction}} of the time that the true label $y_t$ is contained in the resulting prediction set $S_t$ for each value of $m$. For reference, the bottom plots also show the \textbf{\textcolor{red}{reward}} for each value of $m$ conditional on $y_t \in S_t$, as well as the \textbf{\textcolor{orange}{expected reward}} for each value of $m$ over the entire data set, for each of the three reward functions (using $\beta = 0.5$). The \textbf{\textcolor{teal}{green}} dot shows the prediction threshold with the highest expected reward, which is $0$ for all three reward functions when $\beta = 0.5$.}
    \label{fig:accuracy_and_efficiency_with_alternative_reward}
\end{figure*}

\section{WildGuardMix Experimental Details}
\label{app:wildguardmix}

\paragraph{Dataset and task.}
We construct the LLM-safety monitoring task from WildGuardMix
\citep{wildguard2024}. We formulate prompt-risk detection as 14-way
classification, with one benign class and 13 unsafe risk subcategories
covering privacy, misinformation, harmful language, and malicious use.
Examples assigned to the dataset's \texttt{others} category are excluded
because they do not carry a consistent fine-grained risk label.

We operate at the prompt level. Prompts are lowercased, stripped, and
normalized by collapsing consecutive whitespace, after which exact
duplicates are removed. The resulting 37,441 WildGuardTrain prompts are
divided into training, model-selection, and online-monitoring partitions
using a $50\%/10\%/40\%$ split. Splitting uses seed 0 and is stratified jointly
by risk label and the vanilla/adversarial indicator.

\begin{table}[t]
    \centering
    \small
    \begin{tabular}{lrrr}
        \toprule
        Partition & Vanilla & Adversarial & Total \\
        \midrule
        Training       & 10,719 & 7,997 & 18,716 \\
        Model selection & 2,141 & 1,596 &  3,737 \\
        Online stream   & 8,584 & 6,404 & 14,988 \\
        \bottomrule
    \end{tabular}
    \caption{WildGuardTrain partitions after filtering and exact
    prompt-level deduplication. Classifier fitting uses only the vanilla
    portion of the training partition.}
    \label{tab:wildguard-splits}
\end{table}

\paragraph{Base classifier.}
We initialize a 14-way sequence-classification model from
\texttt{distilbert-base-uncased}
\citep{sanh2020distilbertdistilledversionbert} and fine-tune it using only
the 10,719 vanilla prompts in the training partition. Inputs are dynamically
padded within each batch. The complete training configuration is reported in
Table~\ref{tab:wildguard-classifier-config}.

\begin{table}[t]
    \centering
    \small
    \begin{tabular}{ll}
        \toprule
        Hyperparameter & Value \\
        \midrule
        Pretrained encoder & \texttt{distilbert-base-uncased} \\
        Number of classes & 14 \\
        Maximum sequence length & 384 tokens \\
        Training epochs & 3 \\
        Batch size & 128 \\
        Optimizer & AdamW \\
        Learning rate & $5\times10^{-5}$ \\
        Weight decay & $0.01$ \\
        Learning-rate schedule & Linear warm-up and decay \\
        Warm-up fraction & $0.06$ \\
        Gradient clipping & $1.0$ \\
        Random seed & 0 \\
        Checkpoint criterion & Calibration macro-F1 \\
        Selected checkpoint & Final epoch \\
        \bottomrule
    \end{tabular}
    \caption{Training configuration for the WildGuardMix base classifier.}
    \label{tab:wildguard-classifier-config}
\end{table}

The checkpoint with the highest macro-F1 on the complete model-selection
partition is retained; this is the checkpoint from the final epoch.

We separately evaluate the classifier on the provided WGTest split after
applying the same label filtering and exact-deduplication procedure. The
resulting test set contains 834 vanilla and 793 adversarial prompts. The
classifier obtains overall accuracy $0.6908$ and macro-F1 $0.3915$. Its
accuracy is $0.7854$ on the vanilla subset and $0.5914$ on the adversarial
subset.

\paragraph{Online monitoring protocol.}
The online experiments use the distinct monitoring partition in
Table~\ref{tab:wildguard-splits}: 8,584 vanilla prompts and 6,404 adversarial
prompts. Vanilla and adversarial prompts are evaluated as separate static
streams. Each run traverses every prompt in the relevant partition exactly
once, without replacement, in a seeded random order. We report means over ten
independently shuffled trajectories, with mean absolute deviations as the
uncertainty measure.

The classifier produces a softmax probability vector $p_t$. For threshold
$q$, its nested prediction set is
\[
    C_q(x_t)
    =
    \left\{k:\max_j p_{t,j}-p_{t,k}\leq q\right\}.
\]
For the principal experiment, we use
\[
    M=\{0,0.2,0.4,0.6,0.8,1\}
\]
and sweep $\beta\in\{0,0.1,\ldots,1\}$. At horizon $T$, OCPQ uses
\[
    \epsilon=T^{-1/3},
    \qquad
    \eta=T^{-2/3}\sqrt{\ln |M|}.
\]
The corresponding expected query rates are $0.04884$ for the vanilla stream
and $0.05385$ for the adversarial stream.

For the threshold-grid sensitivity experiment, we fix $\beta=0.75$ and use
uniformly spaced grids with
\[
    |M|\in\{8,16,32,64,128,256,512,1024,2048\}.
\]
We report overall coverage, under which query rounds count as uncovered.
Average prediction-set size is calculated only over rounds on which OCPQ
outputs a prediction set.

\section{Proofs}\label{appendix:proofs}

\setcounter{proposition}{0}
\setcounter{theorem}{0}
\setcounter{lemma}{0}
\setcounter{corollary}{0}

In this appendix, we provide the proofs of our theorems. However, before we can do so, we will first introduce a few new definitions and a few lemmas. First of all, in this section, we will generally consider the \enquote{actions} of the algorithm to be taken from $M \cup \{\texttt{query}\}$, and use $a_t$ to denote the action on round $t$. This is in slight contrast to our original problem formulation, where we define the action space $\mathcal{A}$ as $\mathcal{P}(Y) \cup \{\texttt{query}\}$. Since our notion of regret is defined relative to the reward function of Definition~\ref{def:auxiliary_reward}, and since this reward function is defined over $M \cup \{\texttt{query}\}$ rather than $\mathcal{P}(Y) \cup \{\texttt{query}\}$, it will be easier to treat the actions as elements of $M \cup \{\texttt{query}\}$. This is just a terminological detail which will make the proofs easier to read, and it should regardless always be clear from the context, but it may be worth keeping this in mind.

Moreover, for each $m \in M$ and $t \leq T$, we define a random variable $\hat{X}_{m,t}$ that is equal to $1$ if $a_t \neq \texttt{query}$, and otherwise, $\hat{X}_{m,t} = 1 - \frac{1-R(m, t)}{\epsilon}$. Note that this means that $\hat{Z}_{m, t+1} = \hat{Z}_{m,t} + \hat{X}_{m,t}$. Moreover, $\hat{X}_{m,t}$ acts as an estimator for $R(m,t)$. Its slightly convoluted definition is required to ensure that $\hat{X}_{m,t} \leq 1$ with probability 1, which is important in one step of the proof of the regret bound. Before we give the main proof, we will first show that $\hat{X}_{m,t}$ and $\hat{Z}_{m,t}$ are unbiased estimators of $R(m, t)$ and $\sum_{j=1}^t R(m, j)$:
\begin{lemma}\label{lemma:the_estimators_are_unbiased}
We have that $\mathbb{E}[\hat{X}_{m,t}] = R(m, t)$ and $\mathbb{E}[\hat{Z}_{m,t}] = \sum_{j=1}^{t-1} R(m, j)$, where the expectation is over the randomness in the learning algorithm.
\end{lemma}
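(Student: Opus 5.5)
The proof is a direct computation with the law of total expectation, conditioning on whether round $t$ is a query. The key observation is that the query decision on line 6 of Algorithm~\ref{algorithm:OCP_with_queries} is an independent Bernoulli($\epsilon$) draw, made afresh on each round. Because the adversary is oblivious, $z^T$ is fixed in advance. The quantity $R(m,t)$ depends only on $x_t$, $y_t$, $m$, $C$ and $\beta$, so it is a deterministic number and not a random variable. In particular, it does not depend on the learner's past actions or on the weights.

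For the first claim, I split on the event $\{a_t = \texttt{query}\}$, which has probability $\epsilon$. On its complement, $\hat{X}_{m,t} = 1$. On the event itself, $\hat{X}_{m,t} = 1 - (1-R(m,t))/\epsilon$. Hence
\[
\mathbb{E}[\hat{X}_{m,t}] = (1-\epsilon)\cdot 1 + \epsilon\left(1 - \frac{1-R(m,t)}{\epsilon}\right) = 1 - (1 - R(m,t)) = R(m,t).
\]
The only point needing care is that the query probability is exactly $\epsilon$ regardless of history. This holds because line 6 never depends on $\hat{Z}$ or on the past. The same computation is valid conditional on the history $h_{t-1}$, which will be useful later, since it makes $\hat{X}_{m,t} - R(m,t)$ a martingale difference.

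For the second claim, note that $\hat{Z}_{m,1} = 0$ and $\hat{Z}_{m,t+1} = \hat{Z}_{m,t} + \hat{X}_{m,t}$ under both branches of the algorithm. A telescoping sum or a simple induction on $t$ therefore gives $\hat{Z}_{m,t} = \sum_{j=1}^{t-1} \hat{X}_{m,j}$. Linearity of expectation, together with the first claim applied to each $j$, then yields $\mathbb{E}[\hat{Z}_{m,t}] = \sum_{j=1}^{t-1} R(m,j)$. No independence between rounds is needed for this step.

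I do not expect a real obstacle here. The one subtle step is justifying that $R(m,t)$ is non-random, which relies on the oblivious-adversary assumption. Under an adaptive adversary, I would instead condition on $h_{t-1}$ and $z_t$ and use the tower property. The argument still goes through because the query coin is independent of everything else.
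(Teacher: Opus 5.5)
Your proof is correct and matches the paper's argument: condition on the Bernoulli($\epsilon$) query event to get $\mathbb{E}[\hat{X}_{m,t}] = (1-\epsilon)\cdot 1 + \epsilon\bigl(1 - \frac{1-R(m,t)}{\epsilon}\bigr) = R(m,t)$, then apply linearity of expectation to $\hat{Z}_{m,t} = \sum_{j=1}^{t-1}\hat{X}_{m,j}$. Your bookkeeping is cleaner than the paper's written proof, whose first displayed line swaps the two event labels and whose closing sentence writes the upper limit as $t$ rather than $t-1$.
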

\begin{proof}
The value of $\hat{X}_{m,t}$ depends on whether or not the algorithm makes a query at time $t$. We can thus calculate its expectation as follows:
\begin{align*}
\mathbb{E}[\hat{X}_{m,t}] = &\mathbb{P}(a_t = \texttt{query}) \cdot 1 \\
&+ \mathbb{P}(a_t \neq \texttt{query})\cdot \left(1 - \frac{1-R(m,t)}{\epsilon}\right)\\
= &(1-\epsilon) \cdot 1 + \epsilon \cdot \left(1 - \frac{1-R(m,t)}{\epsilon}\right)\\
= &1 - \epsilon + \epsilon - (1 - R(m, t))\\
= &R(m, t)
\end{align*}
We thus have that $\mathbb{E}[\hat{X}_{m,t}] = R(m, t)$. From this, and the linearity of expectation, it also follows that $\mathbb{E}[\hat{Z}_{m,t}] = \sum_{j=1}^t R(m, j)$.
\end{proof}

We will also type out the following calculation, which will be used at one step in the proof (but which is otherwise not very interesting, so the main proof can be kept more readable by listing this calculation separately):

\begin{lemma}\label{lemma:simple_calculus_lemma}
Let $\epsilon \in (0, 0.5]$ and 
$f(r) = \epsilon\left(1-\frac{1-r}{\epsilon}\right)^2$. Then $f(r) \leq \frac{1}{\epsilon} + \epsilon - 2$ for $r \in [0,1]$.
\end{lemma}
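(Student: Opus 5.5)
The plan is to exploit that $f$ is a convex quadratic in $r$. Write $u(r) = 1 - \frac{1-r}{\epsilon}$, so $f(r) = \epsilon\, u(r)^2$, and $u$ is affine and increasing in $r$. On $[0,1]$, $u$ ranges over the interval $[1 - \tfrac{1}{\epsilon},\, 1]$. Since $u \mapsto u^2$ is convex, its maximum over an interval is attained at an endpoint. So I would reduce the claim to comparing $f(0)$ and $f(1)$.

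Next I evaluate both endpoints. At $r=1$ we have $u=1$, so $f(1)=\epsilon$. At $r=0$ we have $u = 1-\tfrac{1}{\epsilon} = -\tfrac{1-\epsilon}{\epsilon}$, so
\[
f(0) = \epsilon \cdot \frac{(1-\epsilon)^2}{\epsilon^2} = \frac{(1-\epsilon)^2}{\epsilon} = \frac{1}{\epsilon} - 2 + \epsilon .
\]
This is exactly the claimed bound. It therefore remains to check that $f(1) \leq f(0)$, i.e.\ that $\epsilon \leq \frac{1}{\epsilon} + \epsilon - 2$. This is equivalent to $\frac{1}{\epsilon} \geq 2$, which holds precisely because $\epsilon \le 0.5$.

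Hence $\max_{r\in[0,1]} f(r) = f(0) = \frac{1}{\epsilon}+\epsilon-2$, which proves the lemma. There is no real obstacle here. The only point requiring care is recognising that the hypothesis $\epsilon \in (0,0.5]$ is what makes the $r=0$ endpoint dominate. For $\epsilon > 1/2$ the maximum would instead be $f(1) = \epsilon$, and the stated bound would fail.
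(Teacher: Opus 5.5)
Your proof is correct and follows essentially the same route as the paper. Both arguments reduce the claim to comparing $f(0)=\frac{1}{\epsilon}+\epsilon-2$ with $f(1)=\epsilon$, and use $\epsilon\le 0.5$ to show that $f(0)$ dominates; the only difference is that the paper expands the quadratic and locates the critical point $r=1-\epsilon$ (where $f=0$), whereas you appeal to convexity directly, which is slightly cleaner.
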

\begin{proof}
Let us first expand the terms in $f$:
\begin{align*}
f(r) &= \epsilon\left(1-\frac{1-r}{\epsilon}\right)^2\\
&= \epsilon \left(1 - 2 \left(\frac{1-r}{\epsilon}\right) + \frac{(1-r)^2}{\epsilon^2}\right)\\
&= \epsilon - 2 + 2r + \frac{1 - 2r + r^2}{\epsilon}\\
&= \epsilon - 2 + 2r + \frac{1}{\epsilon} - \frac{2}{\epsilon}r + \frac{1}{\epsilon}r^2\\
&= \left(\frac{1}{\epsilon}\right)r^2 + \left(2-\frac{2}{\epsilon}\right) r + \left(\frac{1}{\epsilon} + \epsilon - 2\right).
\end{align*}
From this, we have that $f'(r) = \left(\frac{2}{\epsilon}\right)r + \left(2-\frac{2}{\epsilon}\right)$. Thus, if $f'(r) = 0$ then $r = 1 - \epsilon$. We can also straightforwardly determine that
\begin{enumerate}
    \item $f(0) = \frac{1}{\epsilon} + \epsilon - 2$
    \item $f(1) = \epsilon$ 
    \item $f(1-\epsilon) = 0$
\end{enumerate}
If $\frac{1}{\epsilon} + \epsilon - 2 = \epsilon$ then $\epsilon = 0.5$, and for $\epsilon \in (0, 0.5)$ we have that $\frac{1}{\epsilon} + \epsilon - 2 < \epsilon$. This means that $f(r) \leq \frac{1}{\epsilon} + \epsilon - 2$ for $r \in [0,1]$.
\end{proof}

We can now prove the main regret bound:

\begin{theorem}
    For any data stream $\xi$ and classifier $C$, and any $\eta \in (0,1]$ and $\epsilon \in (0,0.5]$, the expected  regret of Algorithm~\ref{algorithm:OCP_with_queries} with the auxiliary reward is at most
    $$
    \frac{\ln(|M|)}{\eta} + \epsilon \cdot T + \left(\frac{\eta}{\epsilon}\right) \cdot T.
    $$
    If we set $\epsilon = T^{-1/3}$ and $\eta = T^{-2/3} \cdot \sqrt{\ln(|M|)}$ then this simplifies to
    $$
    T^{2/3} \cdot (2 \sqrt{\ln(|M|)} + 1).
    $$
\end{theorem}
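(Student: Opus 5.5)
The argument is the standard potential-function analysis of the exponentially weighted forecaster, applied to the estimated rewards $\hat{X}_{m,t}$ and then converted to true rewards using Lemma~\ref{lemma:the_estimators_are_unbiased}. Define $W_t = \sum_{m\in M}\exp(\eta \hat{Z}_{m,t})$, so that $W_1 = |M|$. Let $p_{m,t}$ be the exponential-weights distribution at round $t$; it is well defined on every round, even though it is only sampled from on prediction rounds.

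\textbf{Lower bound on the potential.} We have
\[
\ln(W_{T+1}/W_1) \ge \eta \max_m \hat{Z}_{m,T+1} - \ln|M|.
\]

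\textbf{Upper bound on each step.} We have
\[
\ln(W_{t+1}/W_t) = \ln \sum_m p_{m,t} e^{\eta \hat{X}_{m,t}}.
\]
Since $\hat{X}_{m,t}\le 1$ and $\eta\le 1$, we have $\eta\hat{X}_{m,t}\le 1$. On that range we may use $e^x \le 1 + x + x^2$, and then $\ln(1+u)\le u$, to get
\[
\ln(W_{t+1}/W_t) \le \eta\sum_m p_{m,t}\hat{X}_{m,t} + \eta^2\sum_m p_{m,t}\hat{X}_{m,t}^2.
\]
The estimator $\hat{X}_{m,t}$ can be very negative, but this inequality holds for all $x\le 1$, so no lower bound on $\hat{X}_{m,t}$ is needed. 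This is exactly why the estimator was shifted to satisfy $\hat{X}\le 1$.

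\textbf{Combining and taking expectations.} Summing over $t$, dividing by $\eta$ and taking expectations gives
\[
\max_m \mathbb{E}[\hat{Z}_{m,T+1}] - \frac{\ln|M|}{\eta} \le \sum_t \mathbb{E}\Big[\sum_m p_{m,t}\hat{X}_{m,t}\Big] + \eta \sum_t \mathbb{E}\Big[\sum_m p_{m,t}\hat{X}_{m,t}^2\Big].
\]
Here we use $\mathbb{E}\max \ge \max \mathbb{E}$. The key structural fact is that $p_{\cdot,t}$ depends only on the past, while the query decision at round $t$ is an independent $\epsilon$-coin. Conditioning on the past, $\mathbb{E}[\hat{X}_{m,t}\mid \mathcal{F}_{t-1}] = R(m,t)$ by the computation in Lemma~\ref{lemma:the_estimators_are_unbiased}. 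The left side is therefore $\max_m\sum_t R(m,t)-\ln|M|/\eta$, and the first sum on the right is $\sum_t\mathbb{E}[\sum_m p_{m,t}R(m,t)]$.

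\textbf{Relating to the algorithm's reward.} The algorithm's expected reward at round $t$ is $(1-\epsilon)\sum_m p_{m,t}R(m,t)$, because query rounds earn $0$. Hence
\[
\sum_t \mathbb{E}\Big[\sum_m p_{m,t}R(m,t)\Big] \le \mathbb{E}\Big[\sum_t R(a_t,t)\Big] + \epsilon T,
\]
using $R\le 1$.

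\textbf{Second-moment term.} For the variance term,
\[
\mathbb{E}[\hat{X}_{m,t}^2\mid\mathcal{F}_{t-1}] = (1-\epsilon) + \epsilon\big(1-\tfrac{1-R}{\epsilon}\big)^2 \le (1-\epsilon) + \tfrac1\epsilon + \epsilon - 2 \le \tfrac1\epsilon,
\]
where the middle inequality is Lemma~\ref{lemma:simple_calculus_lemma}. This is where $\epsilon\le 0.5$ is used. So the second-moment contribution is at most $\eta T/\epsilon$.

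\textbf{Assembling and the parameter choice.} Putting the pieces together yields
\[
\max_m \sum_t R(m,t) - \mathbb{E}\Big[\sum_t R(a_t,t)\Big] \le \frac{\ln|M|}{\eta} + \epsilon T + \frac{\eta}{\epsilon}T,
\]
which is the claimed bound. For the simplification, substituting $\epsilon=T^{-1/3}$ and $\eta=T^{-2/3}\sqrt{\ln|M|}$ gives the three terms $T^{2/3}\sqrt{\ln|M|}$, $T^{2/3}$ and $T^{2/3}\sqrt{\ln|M|}$. Their sum is $T^{2/3}(2\sqrt{\ln|M|}+1)$.

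\textbf{Main obstacle.} The delicate step is the one-step potential inequality. The estimates $\hat{X}_{m,t}$ are unbounded below, of order $-1/\epsilon$, so I must make sure the quadratic bound on $e^x$ is applied only in the direction that needs $x\le 1$. I also need the second moment to scale as $1/\epsilon$ rather than $1/\epsilon^2$; Lemma~\ref{lemma:simple_calculus_lemma} supplies exactly that.
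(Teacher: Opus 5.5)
Your proposal is correct and follows essentially the same route as the paper. Both use the exponential-weights potential with $e^x\le 1+x+x^2$ (valid since $\eta\hat{X}_{m,t}\le 1$), unbiasedness to pass from estimated to true rewards, an $\epsilon T$ loss from the $(1-\epsilon)$ factor between the exponential-weights distribution and the algorithm's actual action distribution, and Lemma~\ref{lemma:simple_calculus_lemma} to bound the second moment by $1/\epsilon$. The remaining differences are cosmetic: you condition on $\mathcal{F}_{t-1}$ where the paper argues independence of $P_{m,t}$ and $\hat{X}_{m,t}$, which is a slightly tidier justification of the same step, and you use $\mathbb{E}\max\ge\max\mathbb{E}$ where the paper fixes $m$ first, which is immaterial.
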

\begin{proof}
This proof is similar to a standard regret bound proof for \texttt{EXP3} or \texttt{Hedge}, but with a few minor modifications. First, let $w_{m,t} = \exp \left(\eta \cdot \hat{Z}_{m,t}\right)$ and $W_t = \sum_{m \in M} w_{m,t}$. This means that if the algorithm is \emph{not} making a query at time $t$, then the probability that it will pick a given prediction threshold $m$ is equal to $w_{t,m}/W_t$. Also recall that $Z_{m,t}$ is an unbiased estimate of $\sum_{j=1}^{t-1} R(m,j)$ (see Lemma~\ref{lemma:the_estimators_are_unbiased}). In particular, $Z_{m,T+1}$ is an unbiased estimate of $\sum_{t=1}^{T} R(m,t)$. The basic proof strategy relies on establishing that, roughly speaking,
\begin{enumerate}
    \item if some threshold $m$ gets a lot of reward, then $\hat{Z}_{m,T+1}$, and hence $w_{m,T+1}$, and hence $W_{T+1}$ will be large in expectation, and
    \item if the learner does not get a lot of reward in expectation, then $W_{T+1}$ will be small in expectation.
\end{enumerate}
Together, this can be used to bound the expected reward of the learning algorithm in terms of the total reward of the best individual prediction threshold.

In this proof, it will be convenient to distinguish between the action probabilities that are suggested by the exponential weights $\{w_{i,t}\}_{i \in M}$ and the action probabilities that the agent is actually using. Therefore, let $P_t$ be the action probabilities of the exponential weights, so that
$$
\mathbb{P}_{A \sim P_t}(A = m) = \frac{w_{m,t}}{W_t}
$$
for $m \in M$, and $\mathbb{P}_{A \sim P_t}(A = \texttt{query}) = 0$. Note that, from the definition of the weights $\{w_{m,t}\}_{m \in M}$, this is equivalent to applying a softmax function with temperature $\eta$ to $\{\hat{Z}_{m,t}\}_{m \in M}$. Similarly, let $Q_t$ be the action probabilities that Algorithm~\ref{algorithm:OCP_with_queries} will use at time $t$, so that $\mathbb{P}_{A \sim Q_t}(A = \texttt{query}) = \epsilon$, and
$$
\mathbb{P}_{A \sim Q_t}(A = m) = (1-\epsilon) \cdot \left(\frac{w_{m,t}}{W_t}\right)
$$
for $m \in \mathcal{A}$. In the proof, we will first bound the regret in terms of the action probabilities $P_t$, but pretending that the weights $\{w_{m,t}\}_{m \in \mathcal{A}}$ are distributed based on actions drawn from $Q_t$. We will then show that the regret of actions drawn from $Q_t$ is not much larger than this quantity.

We now provide the proof. First note that for any $m \in M$,
\begin{align*}
    \exp \left(\eta \cdot \hat{Z}_{m,T+1}\right) &= w_{m,T+1}\\
    &\leq W_{T+1}
\end{align*}
since $W_{T+1} = \sum_{m \in M} w_{m, T+1}$. Also note that
\begin{align*}
    W_{t+1} &= \sum_{m \in M} w_{m, t+1}\\
    &= \sum_{m \in M} \exp \eta \hat{Z}_{m,t+1}\\
    &= \sum_{m \in M} \exp \eta (\hat{Z}_{m,t} + \hat{X}_{m,t})\\
    &= \sum_{m \in M} \exp \eta \hat{Z}_{m,t} \cdot \exp \eta\hat{X}_{m,t}\\
    &= \sum_{m \in M} w_{m,t} \cdot \exp{\eta \hat{X}_{m,t}}\\
    &= \sum_{m \in M} W_t \cdot \frac{w_{m,t}}{W_t} \cdot \exp{\eta \hat{X}_{i,t}}\\
    &= W_t \cdot \sum_{m \in M} \mathbb{P}_{A_t \sim P_t}(A_t = m) \cdot \exp{\eta \hat{X}_{m,t}}\\
    &=  W_t \cdot \mathbb{E}_{A_t \sim P_t}\left[\exp{\eta \hat{X}_{A_t,t}}\right].
\end{align*}
We can chain this statement to obtain that
\begin{align*}
W_{T+1} &= W_1 \cdot \Pi_{t=1}^T \mathbb{E}_{A_t \sim P_t}\left[\exp{\eta \hat{X}_{A_t,t}}\right]\\
&= |M| \cdot \Pi_{t=1}^T \mathbb{E}_{A_t \sim P_t}\left[\exp{\eta \hat{X}_{A_t,t}}\right]
\end{align*}
noting that $W_1 = \sum_{m \in M} \exp \eta \hat{Z}_{m,1} = \sum_{m \in M} \exp 0 = |M|$. Putting these two things together, we have thus established that
$$
\exp{\eta \hat{Z}_{m,T+1}} \leq |M| \cdot \Pi_{t=1}^T \mathbb{E}_{A_t \sim P_t}\left[\exp{\eta \hat{X}_{A_t,t}}\right]
$$
Taking the natural logarithm of both sides and dividing by $\eta$ gives
$$
\hat{Z}_{m,T+1} \leq \frac{\ln |M|}{\eta} + \left(\frac{1}{\eta}\right)\sum_{t=1}^T \ln \mathbb{E}_{A_t \sim P_t}\left[\exp{\eta \hat{X}_{A_t,t}}\right].
$$
Note that $\hat{Z}_{m,T+1}$ is an unbiased estimate of the total reward of threshold $m$ over all $T$ rounds, and the rightmost term looks quite close to being an estimate for the expected reward of Algorithm~\ref{algorithm:OCP_with_queries}. However, intuitively, the $\exp$ and $\ln$ functions are getting in the way, so we need to find a way to unpack this expression and make it more linear.

Note that $e^x \leq 1+x+x^2$ when $x \leq 1$, and that $\eta \hat{X}_{A_t,t} \leq 1$ with probability 1 (this is why it was important to define the estimators $\hat{X}_{i,t}$ such that they are bounded by 1). Therefore:
\begin{align*}
&\sum_{t=1}^T \ln \mathbb{E}_{A_t \sim P_t}\left[\exp{\eta \hat{X}_{A_t,t}}\right]\\
\leq &\sum_{t=1}^T \ln \mathbb{E}_{A_t \sim P_t}\left[1 + \eta\hat{X}_{A_t,t} + \eta^2\hat{X}_{A_t,t}^2\right]\\
= &\sum_{t=1}^T \ln \left(1 + \mathbb{E}_{A_t \sim P_t}\left[\eta\hat{X}_{A_t,t}\right]+ \mathbb{E}_{A_t \sim P_t}\left[\eta^2\hat{X}_{A_t,t}^2\right]\right).
\end{align*}
Note that $1+x \leq e^x$ for all $x \in \mathbb{R}$.
We can thus derive that
\begin{align*}
\leq &\sum_{t=1}^T \ln \exp\left(\mathbb{E}_{A_t \sim P_t}\left[\eta\hat{X}_{A_t,t}\right] +  \mathbb{E}_{A_t \sim P_t}\left[\eta^2\hat{X}_{A_t,t}^2\right]\right)\\
= &\sum_{t=1}^T \left(\mathbb{E}_{A_t \sim P_t}\left[\eta\hat{X}_{A_t,t}\right] +  \mathbb{E}_{A_t \sim P_t}\left[\eta^2\hat{X}_{A_t,t}^2\right]\right)\\
= &\sum_{t=1}^T \eta \cdot \mathbb{E}_{A_t \sim P_t}\left[\hat{X}_{A_t,t}\right] +  \eta^2 \cdot \sum_{t=1}^T\mathbb{E}_{A_t \sim P_t}\left[\hat{X}_{A_t,t}^2\right].
\end{align*}
Putting this back into the previous inequality, and rearranging, gives us that
\begin{align*}
&\hat{Z}_{m,T+1} - \sum_{t=1}^T\mathbb{E}_{A_t \sim P_t}\left[\hat{X}_{A_t,t}\right]\\
\leq &\frac{\ln |M|}{\eta} + \eta\sum_{t=1}^T\mathbb{E}_{A_t \sim P_t}\left[\hat{X}_{A_t,t}^2\right].
\end{align*}
At this point, is worth reflecting on the fact that we have not yet used any probabilities or randomness to arrive at this statement. More specifically, the statement above holds for any arbitrary collection of numbers $\{\hat{X}_{m,t}\}_{m \in M, t \in \{1 \dots T\}}$, where $\hat{Z}_{m,T+1}$ and $P_t$ are derived from these numbers in the way we specified. In fact, the only thing that we have actually assumed about the numbers $\{\hat{X}_{m,t}\}_{m \in M, t \in \{1 \dots T\}}$ is that they are bounded above by 1. To proceed, we will now treat the numbers $\{\hat{X}_{m,t}\}$ as \emph{random variables} that are generated during a run of Algorithm~\ref{algorithm:OCP_with_queries}, and take the expectation of both sides of the inequality with respect to these random variables. 
For clarity, we will also replace $\mathbb{E}_{A_t \sim P_t}$ with $\sum_{m \in M} P_{m,t}$ to emphasise the fact that the values of $P_{m,t}$ are also random variables dependent on $\{\hat{X}_{m,t}\}$, and that the overall expectation of the expression is with respect to \emph{these} probabilities (not the probabilities $P_t$). Note that $P_{m,t} = w_{m,t}/W_t$.
The left-hand side of the inequality then becomes
\begin{align*}
    &\mathbb{E} \left[\hat{Z}_{m,T+1} - \sum_{t=1}^T\sum_{m \in M} P_{m,t} \cdot \hat{X}_{m,t}\right]\\
    = \hspace{0.3cm}&\mathbb{E} \Big[\hat{Z}_{m,T+1}\Big] - \sum_{t=1}^T\sum_{m \in M}\mathbb{E}\left[P_{m,t} \cdot \hat{X}_{m,t}\right]\\
    = \hspace{0.3cm}&\sum_{t=1}^T R(m,t) - \sum_{t=1}^T\sum_{m \in M}\mathbb{E}\Big[P_{m,t}\Big] \cdot \mathbb{E}\Big[\hat{X}_{m,t}\Big]\\
    = \hspace{0.3cm}&\sum_{t=1}^T R(m,t) - \sum_{t=1}^T\sum_{m \in M} \mathbb{E}\Big[P_{m,t}\Big] \cdot R(m, t)\\
    = \hspace{0.3cm}&\sum_{t=1}^T R(m,t) - \sum_{t=1}^T\mathbb{E}_{A_t \sim P_t}\left[R(A_t, t)\right]
\end{align*}
We are here using Lemma~\ref{lemma:the_estimators_are_unbiased}. 
Also note that $P_{m,t}$ and $\hat{X}_{A_t,t}$ are independent, since the distribution of $\hat{X}_{A_t,t}$ only depends on $\epsilon$, $x_t$, $y_t$, and $C$, none of which are random variables. This means that $\mathbb{E}\left[P_{m,t} \cdot \hat{X}_{m,t}\right] = \mathbb{E}\Big[P_{m,t}\Big] \cdot \mathbb{E}\Big[\hat{X}_{m,t}\Big]$.

Note that this expression is the expected regret with respect to the probability distributions $P_t$, but where the estimators $\hat{X}_{m,t}$ are coming from a run of Algorithm~\ref{algorithm:OCP_with_queries}, which is not sampling its actions from $P_t$. What we are actually interested in the expected regret with respect to the probability distributions $Q_t$, since this is what Algorithm~\ref{algorithm:OCP_with_queries} actually uses. However, we can extract this regret with a simple rewrite:
\begin{align*}
    &\sum_{t=1}^T R(m,t) - \sum_{t=1}^T\mathbb{E}_{A_t \sim P_t}\left[R(A_t, t)\right]\\
    = \hspace{0.3cm}&\sum_{t=1}^T R(m,t) - \sum_{t=1}^T\mathbb{E}_{A_t \sim Q_t}\left[R(A_t, t)\right]\\
    &- \epsilon \cdot \sum_{t=1}^T\mathbb{E}_{A_t \sim P_t}\left[R(A_t, t)\right]\\
    \geq \hspace{0.3cm}&\sum_{t=1}^T R(m,t) - \sum_{t=1}^T\mathbb{E}_{A_t \sim Q_t}\left[R(A_t, t)\right] - \epsilon\cdot T
\end{align*}
We are here using the fact that
$$
\mathbb{E}_{A_t \sim Q_t}[R(A_t,t)] = (1-\epsilon) \cdot \mathbb{E}_{A_t \sim P_t}[R(A_t,t)],
$$
which we can of course rewrite as
\begin{align*}
&\mathbb{E}_{A_t \sim P_t}[R(A_t,t)]\\
= \hspace{0.3cm}&\mathbb{E}_{A_t \sim Q_t}[R(A_t,t)] + \epsilon \cdot \mathbb{E}_{A_t \sim P_t}[R(A_t,t)].
\end{align*}
For the last step we are also using the fact that $\epsilon \cdot \sum_{t=1}^T\mathbb{E}_{A_t \sim P_t}[R(A_t,t)] \leq \epsilon \cdot T$, which is a straightforward consequence of the fact that $R(m,t) \in [0,1]$.

Let us now turn to the right-hand side of the inequality. This side of the inequality is equal to
\begin{align*}
     &\mathbb{E}\left[\frac{\ln |M|}{\eta} + \eta\sum_{t=1}^T \sum_{m \in M} P_{m,t} \cdot \hat{X}_{m,t}^2\right]\\
     = \hspace{0.3cm}&\frac{\ln |M|}{\eta} + \eta\sum_{t=1}^T \mathbb{E}\left[ \sum_{m \in M} P_{m,t} \cdot \hat{X}_{m,t}^2\right]
\end{align*}
where the expectation is with respect to the parameters $\{\hat{X}_{m,t}\}_{m \in M,t \in \{1 \dots T\}}$.
Let $\mathcal{A}^\star = M \cup \{\texttt{query}\}$. We can expand the expression inside the expectation:
\begin{align*}
    &\mathbb{E}\left[ \sum_{m \in M} P_{m,t} \cdot \hat{X}_{m,t}^2\right]\\
    = &\hspace{0.3cm}\sum_{j \in \mathcal{A}^\star} \mathbb{P}_{A_t \sim Q_t}(A_t = j)\sum_{m \in M} P_{m,t} \left[\hat{X}_{m,j}^2 \hspace{0.1cm}\Big|\hspace{0.1cm} A_t = j\right]\\
    = &\hspace{0.3cm}\mathbb{P}_{A_t \sim Q_t}(A_t \neq \texttt{query}) \sum_{m \in M} P_{m,t} \cdot 1^2 +\\
    &\mathbb{P}_{A_t \sim Q_t}(A_t = \texttt{query}) \sum_{m \in M} P_{m,t} \left(1 - \frac{1-R(m,t)}{\epsilon}\right)^2\\
    = &\hspace{0.3cm} (1-\epsilon) + \epsilon \cdot \sum_{m \in M} P_{m,t} \cdot \left(1 - \frac{1-R(m,t)}{\epsilon}\right)^2\\
    \leq &\hspace{0.3cm} 1 - \epsilon +  \sum_{m \in M} P_{m,t} \left(\frac{1}{\epsilon} + \epsilon - 2\right)\\
    = &\hspace{0.3cm} \frac{1}{\epsilon} - 1\\
    \leq &\hspace{0.3cm} \frac{1}{\epsilon}
\end{align*}
where the first inequality comes from Lemma~\ref{lemma:simple_calculus_lemma}. Putting all of this together, we thus have that
\begin{align*}
    &\sum_{t=1}^T R(m,t) - \sum_{t=1}^T\mathbb{E}_{A_t \sim Q_t}\left[R(A_t, t)\right] - \epsilon \cdot T\\
    \leq \hspace{0.3cm}&\frac{\ln |M|}{\eta} + \eta \sum_{t=1}^T \frac{1}{\epsilon}\\
    = \hspace{0.3cm}&\frac{\ln |M|}{\eta} + \left(\frac{\eta}{\epsilon}\right)\cdot T
\end{align*}
Moving $\epsilon \cdot T$ to the right-hand side gives us that
\begin{align*}
    &\sum_{t=1}^T R(m,t) - \sum_{t=1}^T\mathbb{E}_{A_t \sim Q_t}\left[R(A_t, t)\right]\\
    \leq \hspace{0.3cm}&\frac{\ln |M|}{\eta} + \epsilon \cdot T + \left(\frac{\eta}{\epsilon}\right)\cdot T
\end{align*}
Letting $m$ be the prediction threshold that maximises $\sum_{t=1}^T R(m,t)$ completes the proof.
\end{proof}

We next move on to Theorem~\ref{thm:regret_coverage_connection}, which is fairly straightforward to prove:

\begin{theorem}
    Given a data stream $\xi$, classifier $C$, and sequence of actions $\mu = \langle a_1 \dots a_T \rangle \in (M \cup \{\texttt{query}\})^T$, if the regret of $\mu$ with the auxiliary reward of Definition~\ref{def:auxiliary_reward} is less than $N$, then the coverage of $\mu$ is at least $\beta - N/T$.
\end{theorem}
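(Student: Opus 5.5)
The plan is to compare the learner's total reward against a single fixed threshold whose cumulative reward is known to be at least $\beta T$, namely the trivial threshold $m=1\in M$, which the paper requires to be in $M$. For every $t$ we have $\Gamma_C(x_t,1)=\mathcal{Y}$, since $p_C(y\mid x_t)\geq 0\geq \max_{y^\star}p_C(y^\star\mid x_t)-1$. So $y_t\in\Gamma_C(x_t,1)$ always holds, and Definition~\ref{def:auxiliary_reward} gives $R(1,t)=1-(1-\beta)=\beta$. Summing over the rounds, $\max_{m\in M}\sum_{t=1}^T R(m,t)\geq \sum_{t=1}^T R(1,t)=\beta T$.

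Next, I bound the learner's own reward by the number of covered rounds. For each $t$, $R(a_t,t)\leq \mathds{1}\{a_t\neq\texttt{query}\}\cdot\mathds{1}\{y_t\in S_t\}$. To see this, split into three cases:
\begin{itemize}
\item if $a_t=\texttt{query}$, the reward is $0$;
\item if $a_t=m\in M$ and $y_t\notin\Gamma_C(x_t,m)$, the reward is $0$;
\item if $a_t=m\in M$ and $y_t\in\Gamma_C(x_t,m)$, the reward is $1-m(1-\beta)\leq 1$, because $m\geq 0$ and $\beta\leq 1$.
\end{itemize}
Summing gives $\sum_{t=1}^T R(a_t,t)\leq T\cdot p_{\text{cover}}$.

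Combining the two steps with the regret hypothesis $\text{reg}(a^T)\leq N$:
\[
\beta T - T\, p_{\text{cover}} \;\leq\; \max_{m\in M}\sum_{t=1}^T R(m,t)-\sum_{t=1}^T R(a_t,t) \;=\; \text{reg}(a^T) \;\leq\; N.
\]
Dividing by $T$ yields $p_{\text{cover}}\geq \beta-N/T$. The same argument applies whether the hypothesis is stated as $\leq N$ or $<N$.

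There is no real obstacle here; the only point that needs care is that the comparator $m=1$ lies in $M$ and covers every label. This step is exactly where the assumption $1\in M$, together with the constraint $\beta\leq 1$ (which keeps $R\leq 1$), enters the argument.
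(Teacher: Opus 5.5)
Your proof is correct and follows the paper's argument essentially step for step. Like the paper, you compare against the threshold $m=1$, whose reward is $\beta$ on every round, and bound the learner's per-round reward by its coverage indicator; you also make explicit why $\Gamma_C(x_t,1)=\mathcal{Y}$ and why $R\leq 1$, which the paper leaves implicit.
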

\begin{proof}
By assumption, we have that
$$
\max_{a^\star \in A} \sum_{t=1}^T R(a^\star, t) - \sum_{t=1}^T R(a_t, t) \leq N.
$$
Note that $R(1, t) = \beta$ for all $t$, which means that $\beta \cdot T \leq \max_{a^\star \in A} \sum_{t=1}^T R(a^\star, t)$. Combining this with the inequality above gives us that
$$
\beta \cdot T - \sum_{t=1}^T R(a_t, t) \leq N.
$$
Rearranging this expression, and dividing both sides by $T$, gives us that
$$
\beta - N/T \leq \frac{\sum_{t=1}^T R(a_t, t)}{T}.
$$
Note that $R(a_t, t) = 0$ if $\beta$ lacks coverage on round $t$, and that the reward per round can never exceed $1$. This means that $\sum_{t=1}^T R(a_t, t)/T \leq p_{\text{cover}}$. Substituting this into the inequality above completes the proof.
\end{proof}

Using this, we also obtain the following:

\begin{corollary}
    On any data stream $\xi$ and for any classifier $C$, the coverage rate $p_{\text{cover}}$ of Algorithm~\ref{algorithm:OCP_with_queries} satisfies
        $$
        \mathbb{E}\left[p_{\text{cover}}\right] \geq \beta - \left(\frac{2 \sqrt{\ln(|M|)} + 1}{\sqrt[3]{T}}\right)
        $$    
    provided that $\eta = T^{-2/3} \cdot \sqrt{\ln(|M|)}$ and $\epsilon = T^{-1/3} \leq 0.5$.
\end{corollary}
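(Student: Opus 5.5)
The plan is to combine Theorem~\ref{thm:simple_learner_regret} with the \emph{pathwise} argument in the proof of Theorem~\ref{thm:regret_coverage_connection}, taking expectations only at the end. We cannot apply Theorem~\ref{thm:regret_coverage_connection} directly to a high-probability event, because Theorem~\ref{thm:simple_learner_regret} only controls the \emph{expected} regret. However, the proof of Theorem~\ref{thm:regret_coverage_connection} actually gives a deterministic inequality for every realised action sequence $a^T$.

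\textbf{Step 1: a pathwise coverage inequality.} Since $R(a_t,t)\le 1$, and $R(a_t,t)=0$ whenever $a_t=\texttt{query}$ or $y_t\notin\Gamma_C(x_t,a_t)$, every realisation satisfies
\[
T\cdot p_{\text{cover}} \;\geq\; \sum_{t=1}^T R(a_t,t).
\]
Moreover, $R(1,t)=\beta$ for all $t$ because $\Gamma_C(x_t,1)=\mathcal{Y}$. Therefore
\[
\max_{m\in M}\sum_{t=1}^T R(m,t)\;\geq\;\beta T,
\]
and this holds deterministically, since the data stream is fixed by the oblivious adversary. Combining the two, we get
\[
\sum_{t=1}^T R(a_t,t) \;\geq\; \beta T - \text{reg}(a^T)
\]
pathwise.

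\textbf{Step 2: take expectations.} Taking expectations over the algorithm's randomness and using linearity gives
\[
\mathbb{E}[p_{\text{cover}}] \;\geq\; \beta - \mathbb{E}[\text{reg}(a^T)]/T.
\]
The max term is non-random, so the expected regret is exactly the quantity bounded in the proof of Theorem~\ref{thm:simple_learner_regret}: the best fixed threshold's cumulative reward minus $\sum_t \mathbb{E}_{A_t\sim Q_t}[R(A_t,t)]$. Here the $Q_t$ are the algorithm's actual action distributions, which include the query with probability $\epsilon$.

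\textbf{Step 3: plug in the parameters.} With $\epsilon=T^{-1/3}\le 0.5$ and $\eta=T^{-2/3}\sqrt{\ln|M|}$, the expected regret is at most $T^{2/3}(2\sqrt{\ln|M|}+1)$. Dividing by $T$ gives the claimed bound $\beta-(2\sqrt{\ln|M|}+1)/\sqrt[3]{T}$.

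\textbf{Checks needed.} The only real care point is confirming that the preconditions of Theorem~\ref{thm:simple_learner_regret} hold, namely $\eta\le 1$ and $\epsilon\le 0.5$. The latter is assumed. For the former, $\eta\le 1$ whenever $\ln|M|\le T^{4/3}$. This is implied by $T\ge 8$, which follows from $\epsilon\le 0.5$, together with any reasonable $|M|$. If necessary, this can be noted as a mild implicit assumption. If $\eta>1$, the bound is vacuous anyway, because then
\[
\frac{2\sqrt{\ln|M|}+1}{T^{1/3}} \;>\; 2T^{1/3} \;\ge\; 1.
\]
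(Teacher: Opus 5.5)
Your proof is correct and follows the paper's route: combine the expected-regret bound of Theorem~\ref{thm:simple_learner_regret} with the regret-to-coverage inequality of Theorem~\ref{thm:regret_coverage_connection}. Your version is more careful than the paper's one-line argument in two ways. First, you note that $T\,p_{\text{cover}} \ge \beta T - \text{reg}(a^T)$ holds pathwise and is linear, so it survives taking expectations; Theorem~\ref{thm:regret_coverage_connection} as stated only applies to a deterministic regret bound. Second, you check the $\eta \le 1$ precondition, which the paper leaves implicit, and show the bound is vacuous when it fails.
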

\begin{proof}
Immediate from Theorems~\ref{thm:simple_learner_regret} and \ref{thm:regret_coverage_connection}. Specifically, if $\epsilon = T^{-1/3} \leq 0.5$ and $\eta = T^{-2/3} \cdot \sqrt{\ln(|M|)}$, then the expected regret of Algorithm~\ref{algorithm:OCP_with_queries} is no larger than $T^{2/3} \cdot (2 \sqrt{\ln(|M|)} + 1)$. Theorem~\ref{thm:regret_coverage_connection} then implies that the expected coverage rate of Algorithm~\ref{algorithm:OCP_with_queries} must be at least 
$$
\beta - \frac{ T^{2/3} \cdot (2 \sqrt{\ln(|M|)} + 1)}{T},
$$
which simplifies to 
$$
\beta - \left(\frac{2 \sqrt{\ln(|M|)} + 1}{\sqrt[3]{T}}\right).
$$
\end{proof}

We also prove our bound on the size of the prediction sets, based on the value of $\max_{y \in \mathcal{Y}} p_C(y \mid x_t)$:

\begin{proposition}
    For any classifier $C$, any $x_t \in \mathcal{X}$, and any threshold $m_t \in [0,1]$, let $S_t$ be constructed as in Equation~\ref{equation:prediction_set}.
    Let $\max_{y \in \mathcal{Y}} p_C(y \mid x_t) = p_\text{max}$. 
    If $m_t < p_\text{max}$ then $|S_t| \leq 1 + \frac{1-p_\text{max}}{p_\text{max} - m_t}$, and if $m_t \geq p_\text{max}$ then $|S_t| = |\mathcal{Y}|$.
\end{proposition}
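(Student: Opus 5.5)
The plan is to run a simple counting argument on the probability mass of $C(x_t)$, separating the two regimes according to the sign of $p_\text{max} - m_t$. Write $p_y = p_C(y \mid x_t)$, and recall from Equation~\ref{equation:prediction_set} that $S_t = \{y : p_y \geq p_\text{max} - m_t\}$. Since $p_C(\cdot \mid x_t)$ is a distribution, $p_y \geq 0$ for all $y$ and $\sum_{y} p_y = 1$.

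\textbf{Case $m_t \geq p_\text{max}$.} Here the threshold $p_\text{max} - m_t$ is at most $0$. Every label satisfies $p_y \geq 0 \geq p_\text{max} - m_t$, so every label is included and $S_t = \mathcal{Y}$, giving $|S_t| = |\mathcal{Y}|$.

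\textbf{Case $m_t < p_\text{max}$.} Now the threshold $\tau := p_\text{max} - m_t$ is strictly positive. Fix a maximiser $y^\star$, which lies in $S_t$ because $p_{y^\star} = p_\text{max} \geq \tau$. Every other member $y \in S_t \setminus \{y^\star\}$ has $p_y \geq \tau$. Summing over these, and noting that all labels other than $y^\star$ together carry mass $1 - p_\text{max}$, gives
\[
(|S_t| - 1)\,\tau \;\leq\; \sum_{y \in S_t \setminus \{y^\star\}} p_y \;\leq\; 1 - p_\text{max}.
\]
Dividing by $\tau > 0$ yields $|S_t| \leq 1 + \frac{1 - p_\text{max}}{p_\text{max} - m_t}$.

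There is no real obstacle in this argument. The only points needing care are that we must exclude exactly one copy of the maximiser, so ties at $p_\text{max}$ cause no problem (other maximisers are simply counted in the sum), and that the division is legitimate only because $\tau > 0$. That strict positivity is precisely why the statement splits the cases at $m_t = p_\text{max}$.
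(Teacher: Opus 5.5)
Your proof is correct and follows essentially the same argument as the paper: the maximiser contributes $p_\text{max}$ and every other member of $S_t$ contributes at least $p_\text{max}-m_t$, so $p_\text{max} + (|S_t|-1)(p_\text{max}-m_t) \leq 1$, and every label qualifies when $m_t \geq p_\text{max}$. Your explicit handling of ties, and your check that $p_\text{max}-m_t>0$ before dividing, add a little rigor the paper leaves implicit.
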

\begin{proof}
    Note that $p_\text{max} + (|S_t|-1) \cdot (p_\text{max} - m_t) \leq 1$, since at least one $y \in S_t$ has probability mass $p_\text{max}$, since all other $y \in S_t$ have a probability mass of at least $p_\text{max} - m_t$ under $p_C$, and since these numbers sum to a value less than one. By rearranging this expression, we obtain that
    $$
    |S_t| \leq 1 + \frac{1-p_\text{max}}{p_\text{max} - m_t}
    $$
    if $m_t < p_\text{max}$. Moreover, if $m_t \geq p_\text{max}$ then we of course have that $S_t = \mathcal{Y}$, since the condition in Equation~\ref{equation:prediction_set} will be satisfied by all $y \in \mathcal{Y}$.
\end{proof}

We next prove our high-probability bound on the regret of Algorithm~\ref{algorithm:OCP_with_queries}. The theorem statement below is more general than the version given in the main text, with the version in the main text being a special case of this version (however, the version in the main text is more readable, which is why we have delegated the more general version to the Appendix).

\begin{theorem}
    For any $\epsilon \in (0,0.5]$, $\eta \in (0,1]$, $n > 0$ and $\lambda > 0$, we have that the regret of Algorithm~\ref{algorithm:OCP_with_queries} is at most
    $$
    2\left(\frac{n}{\epsilon}\right) + 2\lambda T + \epsilon T + \eta T + \left(\frac{\eta}{\epsilon}\right)T + \left(\frac{\eta}{\epsilon^2}\right)n + \frac{\ln(|M|)}{\eta}
    $$
    with probability at least 
    \begin{align*}
        &(1 - 3\exp(-2n^2/T)) \cdot \\
        &(1 - \exp(-2T\lambda^2) - \exp(-2\lambda^2(\epsilon T - n)))
    \end{align*}
    If we let $\epsilon = T^{-1/3}$, $\eta = T^{-2/3} \cdot \sqrt{\ln(|M|)}$, $n = T^{(1+\delta)/2}$ for some $\delta \in (0, 1/3)$, and $\lambda = T^{-1/4}$, then we have that regret is at most
    \begin{align*}
    &2 \cdot T^{5/6 + \delta/2} + 2 \cdot T^{3/4} + (1 + 2\sqrt{\ln(|M|)}) \cdot T^{2/3}\\
    + &\sqrt{\ln(|M|)} \cdot T^{1/3} + \sqrt{\ln(|M|)} \cdot T^{(1+\delta)/2}
    \end{align*}
    with probability at least
    \begin{align*}
    &(1-3\exp(-2T^\delta))\cdot\\
    &(1-\exp(-2T^{1/2}) - \exp(-2(T^{1/6} - T^{\delta/2}))).
    \end{align*}
\end{theorem}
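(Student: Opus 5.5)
The plan is to rerun the exponential-weights argument from the proof of Theorem~\ref{thm:simple_learner_regret}, but to replace each step that takes an expectation with a concentration inequality. The first step is deterministic and carries over verbatim. As observed in that proof, for every fixed realisation of the estimators $\{\hat{X}_{m,t}\}$ (which are all bounded above by $1$, so that $\eta\hat{X}_{m,t}\le 1$) and every $m\in M$,
\begin{equation*}
\hat{Z}_{m,T+1}-\sum_{t=1}^T\sum_{i\in M}P_{i,t}\hat{X}_{i,t}\le\frac{\ln|M|}{\eta}+\eta\sum_{t=1}^T\sum_{i\in M}P_{i,t}\hat{X}_{i,t}^2 .
\end{equation*}
I take $m=m^\star$, the best fixed threshold in hindsight, which is deterministic because the adversary is oblivious. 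The randomness then enters through three sums, and I control each one with an Azuma--Hoeffding bound at deviation level $n$. Together these produce the factor $1-3\exp(-2n^2/T)$, via a union bound.

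The three concentration steps are as follows.
\begin{enumerate}
\item The sum $\hat{Z}_{m^\star,T+1}=\sum_t\hat{X}_{m^\star,t}$ consists of independent terms, because the query coins are i.i.d. Each term lies in $[1-1/\epsilon,1]$, an interval of width $1/\epsilon$, and has mean $R(m^\star,t)$ by Lemma~\ref{lemma:the_estimators_are_unbiased}. Hoeffding therefore gives $\hat{Z}_{m^\star,T+1}\ge\sum_tR(m^\star,t)-n/\epsilon$.
\item The increments $\sum_iP_{i,t}(\hat{X}_{i,t}-R(i,t))$ form a martingale difference sequence. This holds because $P_t$ is determined by the past, while the query coin at time $t$ is fresh. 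The increments have range $1/\epsilon$, so Azuma gives $\sum_t\sum_iP_{i,t}\hat{X}_{i,t}\le\sum_t\mathbb{E}_{A\sim P_t}[R(A,t)]+n/\epsilon$.
\item The quadratic term $\sum_iP_{i,t}\hat{X}_{i,t}^2$ has conditional mean at most $1/\epsilon$; this is the computation using Lemma~\ref{lemma:simple_calculus_lemma} in the proof of Theorem~\ref{thm:simple_learner_regret}. Its range is of order $1/\epsilon^2$, so Azuma gives a deviation of $n/\epsilon^2$. After multiplying by $\eta$, this contributes $(\eta/\epsilon)T+(\eta/\epsilon^2)n$, possibly together with a slack term $\eta T$ coming from the non-query part where $\hat{X}=1$.
\end{enumerate}
Combining the three bounds gives $\sum_tR(m^\star,t)-\sum_t\mathbb{E}_{P_t}[R(A,t)]\le 2n/\epsilon+\ln|M|/\eta+\eta T+(\eta/\epsilon)T+(\eta/\epsilon^2)n$.

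Next I pass from the $P_t$-expected reward to the realised reward of Algorithm~\ref{algorithm:OCP_with_queries}. I split the rounds into query rounds and prediction rounds. On query rounds the realised reward is $0$, so the loss relative to $\mathbb{E}_{P_t}[R]$ is at most the number of queries $Q$. Hoeffding on the i.i.d. query coins gives $Q\le(\epsilon+\lambda)T$ with probability at least $1-\exp(-2T\lambda^2)$. On prediction rounds, $m_t\sim P_t$ is drawn afresh, so the differences $\mathbb{E}_{P_t}[R]-R(m_t,t)$ are martingale differences bounded in $[-1,1]$. Conditioning on the query pattern, I apply Azuma along the prediction rounds only, with deviation $\lambda T$. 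The paper's exponent $2\lambda^2(\epsilon T-n)$ suggests that they additionally condition on a lower bound on the number of relevant rounds that comes from the first concentration event. I would organise it exactly that way: first condition on the three-event intersection, and then apply the second-stage bounds conditionally. This conditioning is what yields the product of the two probability factors. In total, this stage adds $2\lambda T+\epsilon T$ to the regret.

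Finally, I substitute $\epsilon=T^{-1/3}$, $\eta=T^{-2/3}\sqrt{\ln|M|}$, $n=T^{(1+\delta)/2}$ and $\lambda=T^{-1/4}$, and check term by term that the bound and the probability match the displayed expressions. For example, $2n/\epsilon=2T^{5/6+\delta/2}$ and $2n^2/T=2T^\delta$.

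I expect the main obstacle to be the quadratic term in step 3. A crude bound on its range is $1/\epsilon^2$, and that is what gives the $(\eta/\epsilon^2)n$ term. I also need to confirm that its conditional expectation is bounded in the same way as in the expected-regret proof, where $P_t$ is predictable. The other delicate point is making the conditioning in the second stage rigorous, so that the probabilities genuinely multiply rather than merely combine through a union bound.
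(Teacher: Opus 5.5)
Your argument reaches the stated regret bound and is, in places, more careful than the paper's, but it handles the random terms differently, and the probability bookkeeping needs two corrections.

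\textbf{Comparison of routes.} The paper keeps the realised reward $R_{T,\pi}$ in view throughout.
\begin{itemize}
\item It bounds the quadratic term \emph{deterministically}, via $\sum_m P_{m,t}\hat X_{m,t}^2\le 1+\mathds{1}\{a_t=\texttt{query}\}/\epsilon^2$. Only Hoeffding for $Q\le\epsilon T+n$ is then needed.
\item It treats $R_{T,m}-\hat R_{T,m}$ exactly as your step~1 does.
\item It writes $\tilde R_{T,\pi}-R_{T,\pi}=T(1-r_1)-(Q/\epsilon)(1-r_2)$ and uses $Q\ge\epsilon T-n$, which is the third $e^{-2n^2/T}$.
\item It bounds $T(r_2-r_1)\le T(\epsilon+2\lambda)$ by concentrating $r_1$ over $T$ rounds and $r_2$ over the $Q\ge\epsilon T-n$ query rounds. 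This is where $e^{-2\lambda^2(\epsilon T-n)}$ and the product form come from.
\end{itemize}
You instead centre $\tilde R_{T,\pi}$ at the predictable $\sum_t\mathbb{E}_{P_t}[R]$ by Azuma with increment width $1/\epsilon$. You then pay $Q\le(\epsilon+\lambda)T$ on query rounds and $\lambda T$ on prediction rounds.

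Your centring is the more rigorous choice. The paper applies Hoeffding to $r_1$ and $r_2$, whose summands are coupled through $P_t$. It also relies on $\mathbb{E}[r_1]=(1-\epsilon)\mathbb{E}[r_2]$, which is not justified for a ratio with random denominator $Q$. Your martingale formulation avoids both problems.

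Your step~3 is valid: the conditional mean is at most $1/\epsilon-1$ and the range is at most $(1/\epsilon-1)^2$. It even removes the $\eta T$ term. However, the paper's deterministic bound shows that this term is not the obstacle you anticipated.

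\textbf{First correction: the prediction-round step.} Do not use $[-1,1]$ as the range; that costs a factor $4$ in the exponent. Use instead the predictable interval $[\mathbb{E}_{P_t}[R]-1,\mathbb{E}_{P_t}[R]]$, which has width $1$.

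The cleanest justification is structural. The weights $\hat Z_{m,t}$ depend only on the query coins, never on the drawn $m_t$. So, given all coins, the $m_t$ on prediction rounds are independent draws from deterministic $P_t$. Plain Hoeffding then gives failure probability at most $e^{-2\lambda^2T^2/(T-Q)}\le e^{-2T\lambda^2}$.

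Your events from steps~1--3 are also functions of the coins. So this is the one place where conditioning really is free.

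\textbf{Second correction: the final probability.} Drop the plan to reproduce $e^{-2\lambda^2(\epsilon T-n)}$. Nothing in your argument uses a lower bound on $Q$. Moreover, conditioning on your three-event intersection does not automatically preserve the Hoeffding bound for $Q\le(\epsilon+\lambda)T$.

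Write $a=e^{-2n^2/T}$, $b=e^{-2T\lambda^2}$ and $c=e^{-2\lambda^2(\epsilon T-n)}$. What you actually prove, by a union bound, is probability at least $1-3a-2b$. This is at least $(1-3a)(1-b-c)$ exactly when $(1-3a)c\ge(1+3a)b$.

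For the specialised parameters this condition holds. There $T\ge 8$, since $\epsilon\le 1/2$, so $a<e^{-2}$ and hence $(1+3a)/(1-3a)<2.4$. Also $c/b\ge e^{2T^{\delta/2}}>e^2$. So you recover the main-text statement directly.

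In the general form the condition can fail when $3a$ is close to $1$. In that regime, however, the stated product is not a meaningful bound anyway. For example, when $3a>1$ and $b+c>1$ it is positive and can exceed $1$. You should therefore state explicitly the parameter range you prove.
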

\begin{proof}
To prove this, we first define the following quantities:
\begin{enumerate}
    \item Let $R_{T,m}$ be the total reward of prediction threshold $m$, i.e., $\sum_{t=1}^T R(m,t)$.
    \item Let $R_{T,\pi}$ be the (random) total reward of the learner, i.e., $\sum_{t=1}^T R(A_t,t)$.
    \item Let $\hat{R}_{T,m}$ be the total estimated reward of arm $i$, i.e., $\hat{Z}_{m,T+1}$.
    \item Let $\tilde{R}_{T,\pi}$ be the \enquote{pseudo-expected regret} of the learner, i.e.,
    $$
    \tilde{R}_{T,\pi} = \sum_{t=1}^T \sum_{i=1}^K P_{i,t} \cdot \hat{X}_{i,t},
    $$
    where $P_{m,t} = \exp \eta\hat{Z}_{m,t}/\sum_{i \in M} \exp \eta\hat{Z}_{i,t}$.
    \item Let $\text{REG}_m$ be the random regret that learner suffers with respect to prediction threshold $m$ (so, i.e., $\text{REG}_m = R_{T,m} - R_{T\pi}$).
\end{enumerate}
We then have that
\begin{align*}
    &\text{REG}_m\\
    = \hspace{0.2cm}&R_{T,m} - R_{T,\pi}\\
    = \hspace{0.2cm}&R_{T,m} - R_{T,\pi} + (\hat{R}_{T,m} - \hat{R}_{T,m}) + (\tilde{R}_{T,\pi} - \tilde{R}_{T,\pi})\\
    = \hspace{0.2cm}&(\hat{R}_{T,m} - \tilde{R}_{T,\pi}) +\\
    &(R_{T,m} - \hat{R}_{T,m}) + (\tilde{R}_{T,\pi} - R_{T,\pi})\\
    \leq \hspace{0.2cm}&\frac{\ln|M|}{\eta} + \eta \sum_{t=1}^T \sum_{m \in M} P_{m,t} \cdot \hat{X}^2_{m,t} +\\
    &(R_{T,m} - \hat{R}_{T,m}) + (\tilde{R}_{T,\pi} - R_{T,\pi}).
\end{align*}
The inequality is proven in the same way as in the proof of Theorem~\ref{thm:simple_learner_regret}.
Note that this holds with probability 1. To complete the proof, we will create high-probability bounds for the last three terms in this expression (note that $\ln|K|/\eta$ is not a random variable, and thus does not need to be bounded).

We start with the first term, which contains the square of the reward estimators: 
\begin{align*}
    &\sum_{m \in M} P_{m,t} \cdot \hat{X}^2_{m,t}\\
    = &\sum_{m \in M} P_{m,t} \cdot \Bigg(\mathds{1}\{a_t \neq \texttt{query}\} \cdot 1^2 +\\
    &\hspace{1cm}\mathds{1}\{a_t = \texttt{query}\}\left(1 - \left(\frac{1-R(m,t)}{\epsilon}\right)\right)^2 \Bigg)\\
    \leq &\mathds{1}\{a_t \neq \texttt{query}\} + \mathds{1}\{a_t = \texttt{query}\}\left(\frac{1}{\epsilon^2} + 1 - \frac{2}{\epsilon}\right)\\
    = &1 + \mathds{1}\{a_t = \texttt{query}\}\left(\frac{1}{\epsilon^2} - \frac{2}{\epsilon}\right)\\
    \leq &1 + \mathds{1}\{a_t = \texttt{query}\}\left(\frac{1}{\epsilon^2}\right)
\end{align*}
where we are using Lemma~\ref{lemma:simple_calculus_lemma} for the first inequality. Let $Q$ be the total number of queries that the agent makes. This implies that
\begin{align*}
    \eta \sum_{t=1}^T \sum_{m \in M} P_{m,t} \cdot \hat{X}^2_{m,t} &\leq \eta\left(T + \frac{Q}{\epsilon^2}\right)\\
    &= \eta \cdot T + \left(\frac{\eta}{\epsilon^2}\right) \cdot Q
\end{align*}
Note that the expected value of $Q$ is $\epsilon T$, and that the probability of making a query on a given round is i.i.d. By Hoeffding's inequality, we therefore have that for any $n$,
\begin{align*}
    &\mathbb{P}(Q - \epsilon T \geq n) \leq \exp(-2n^2/T)\\
    \implies &\mathbb{P}(Q \leq \epsilon T + n) \geq 1-\exp(-2n^2/T). 
\end{align*}
Combining this with the above, this gives us that
\begin{align*}
    \eta \sum_{t=1}^T \sum_{m \in M}^K P_{m,t} \cdot \hat{X}^2_{m,t} &\leq \eta \cdot T + \left(\frac{\eta}{\epsilon^2}\right) \cdot (\epsilon T + n)\\
    &= \eta \cdot T + \left(\frac{\eta}{\epsilon}\right)T + \left(\frac{\eta}{\epsilon^2}\right) n
\end{align*}
with probability at least $1-\exp(-2n^2/T)$. This gives us a bound on the magnitude of the first term.

Let us now move on to the next term, which measures the mismatch between the actual reward and the estimated reward of prediction threshold $m$:
\begin{align*}
    &R_{T,m} - \hat{R}_{T,m}\\
    = &R_{T,m} - \sum_{t=1}^T \Bigg( \mathds{1}\{a_t \neq \texttt{query}\} \cdot 1 +\\
    &\hspace{1cm}\mathds{1}\{a_t = \texttt{query}\} \left(1 - \frac{1-R(m,t)}{\epsilon}\right)\Bigg)\\
    = &R_{T,m} - T + \left(\frac{1}{\epsilon}\right) \sum_{t=1}^T \mathds{1}\{a_t = \texttt{query}\}(1-R(m,t)).
\end{align*}
Let $v_t = \mathds{1}\{a_t = \texttt{query}\} \cdot (1-R(m,t))$ and let $V_T = \sum_{t=1}^T v_t$. Now the variables $v_t$ are independent, and $v_t \in [0,1]$. Thus, by Hoeffding's inequality,
\begin{align*}
    &\mathbb{P}(V_t - \mathbb{E}\left[V_t\right] \geq n) \leq \exp(-2n^2/T)\\
    \implies &\mathbb{P}((V_t \leq \mathbb{E}\left[V_t\right] + n) \geq 1-\exp(-2n^2/T). 
\end{align*}
Moreover, $\mathbb{E}\left[V_t\right] = \sum_{t=1}^T \epsilon \cdot (1-R(m,t)) = \epsilon T - \epsilon R_{T,m}$. Thus, with probability at least $1-\exp(-2n^2/T)$, we have that
\begin{align*}
    R_{T,m} - \hat{R}_{T,m} &\leq R_{T,m} - T + \left(\frac{1}{\epsilon}\right)(\epsilon T - \epsilon R_{T,m} + n)\\
    &= R_{T,m} - T + T - R_{T,m} + \frac{n}{\epsilon}\\
    &= \frac{n}{\epsilon}
\end{align*}
This gives us a bound on the magnitude of the second term.

Let us now move on to the last term, which describes the discrepancy between the agent's \enquote{pseudo-expected reward} and its actual reward. We have:
\begin{align*}
    &\tilde{R}_{T,\pi} - R_{T,\pi}\\
    = &\sum_{t=1}^T \mathds{1}\{a_t \neq \texttt{query}\} \cdot \sum_{m \in M} P_{m,t} \cdot 1 +\\
    &\sum_{t=1}^T \mathds{1}\{a_t = \texttt{query}\} \cdot \sum_{m \in M} P_{m,t} \cdot\left(1 - \frac{1 - R(m,t)}{\epsilon}\right)\\
    &- R_{T,\pi}\\
    = &\sum_{t=1}^T \mathds{1}\{a_t \neq \texttt{query}\} +\\
    &\sum_{t=1}^T \mathds{1}\{a_t = \texttt{query}\} \left( 1 - \sum_{m \in M}^K P_{m,t} \cdot\frac{1 - R(m,t)}{\epsilon}\right)\\
    &- R_{T,\pi}\\
    = & T - R_{T, \pi} -\\
    &\left(\frac{1}{\epsilon}\right) \cdot \sum_{t=1}^T \mathds{1}\{a_t = \texttt{query}\}\left(1 - \sum_{i=1}^K P_{i,t} \cdot R(i,t)\right)\\
    = &T \cdot (1-r_1) - \left(\frac{Q}{\epsilon}\right) \cdot (1-r_2),
\end{align*}
where $Q$ is the total number of queries made by the learner, $r_1 = R_{T,\pi}/T$, and
$$
r_2 = \frac{\sum_{t=1}^T \mathds{1}\{a_t = \texttt{query}\} \sum_{m \in M} P_{m,t} \cdot R(m,t)}{Q}.
$$
Note that $r_1$ and $r_2$ both approximately track the expected reward of the learner, though we should expect $r_1$ to be smaller by a factor of $(1-\epsilon)$, due to the queries.

By Hoeffding's inequality, we have that for any $n$,
\begin{align*}
    &\mathbb{P}(\epsilon T - Q \geq n) \leq \exp(-2n^2/T)\\
    \implies &\mathbb{P}(\epsilon T - n \leq Q) \geq 1-\exp(-2n^2/T). 
\end{align*}
This means that with probability at least $1-\exp(-2n^2/T)$, we have that
\begin{align*}
    &\tilde{R}_{T,\pi} - R_{T,\pi}\\
    \leq &T \cdot (1-r_1) - \left(\frac{\epsilon T - n}{\epsilon}\right) \cdot (1-r_2)\\
    = &T - T \cdot r_1 - \left(T - \frac{n}{\epsilon}\right) \cdot (1-r_2)\\
    = &T - T \cdot r_1 - T + T\cdot r_2 + \left(\frac{n}{\epsilon}\right) - \left(\frac{n}{\epsilon}\right) \cdot r_2\\
    = &\left(\frac{n}{\epsilon}\right) \cdot (1-r_2) + T \cdot (r_2 - r_2)\\
    \leq &\left(\frac{n}{\epsilon}\right) + T \cdot (r_2 - r_2)
\end{align*}
Putting together everything we have so far, and using a simple union bound for the joint probability, we have that for any $n > 0$, the regret is at most
$$
2\left(\frac{n}{\epsilon}\right) + T\cdot(r_2 - r_1) + \eta T + \left(\frac{\eta}{\epsilon}\right) \cdot T + \left(\frac{\eta}{\epsilon^2}\right) \cdot n + \frac{\ln|M|}{\eta},
$$
and that $\epsilon \cdot T - n \leq Q \leq \epsilon \cdot T + n$, with probability at least
$$
1 - 3 \exp (-2n^2/T).
$$
All that remains is to bound $T \cdot (r_2 - r_1)$ by something that is less than $T$. By Hoeffding's Inequality, for any $\lambda$,
\begin{align*}
    &\mathbb{P}\left(\frac{\mathbb{E}[R_{T,\pi}] - R_{T,\pi}}{T} \geq \lambda \right) \leq \exp(-2T\lambda^2)\\
    \implies &\mathbb{P}\left(\frac{\mathbb{E}[R_{T,\pi}]}{T} - \lambda \geq \frac{R_{T,\pi}}{T} \right) \leq \exp(-2T\lambda^2)\\
    \implies &\mathbb{P}\left(\mathbb{E}[r_1] - \lambda \geq r_1 \right) \leq \exp(-2T\lambda^2)\\
    \implies &\mathbb{P}\left(\mathbb{E}[r_1] - \lambda \leq r_1 \right) \geq 1-\exp(-2T\lambda^2).
\end{align*}
Similarly, we also have that
\begin{align*}
    \mathbb{P}\left(r_2 - \mathbb{E}[r_2] \geq \lambda \right) &\leq \exp(-2Q\lambda^2)\\
    &\leq \exp(-2(\epsilon \cdot T - n) \cdot \lambda^2)\\
    \implies \mathbb{P}\left(r_2 \leq \mathbb{E}[r_2] + \lambda \right) &\geq 1-\exp(-2(\epsilon \cdot T - n) \cdot \lambda^2).
\end{align*}
Moreover, note that $\mathbb{E}[r_1] = (1-\epsilon)\cdot \mathbb{E}[r_2]$. This means that
\begin{align*}
    T \cdot (r_2 - r_1) &\leq T \cdot ((\mathbb{E}[r_2] + \lambda) - (\mathbb{E}[r_2] - \lambda))\\
    &= T \cdot (\mathbb{E}[r_2] - (1-\epsilon) \cdot \mathbb{E}[r_2] + 2\lambda)\\
    &= T \cdot (\epsilon \cdot \mathbb{E}[r_2] + 2\lambda)\\
    &\leq T \cdot (\epsilon + 2\lambda)
\end{align*}
with probability at least
$$
1 - \exp(-2T\lambda^2) - \exp(-2(\epsilon T - n)\lambda^2),
$$
conditional on $Q$ being at least $\epsilon \cdot T - n$. Putting this together gives us that the regret is at most
$$
    2\left(\frac{n}{\epsilon}\right) + 2\lambda T + \epsilon T + \eta T + \left(\frac{\eta}{\epsilon}\right)T + \left(\frac{\eta}{\epsilon^2}\right)n + \frac{\ln|M|}{\eta}
$$
with probability at least
\begin{align*}
    &(1 - 3\exp(-2n^2/T)) \cdot\\
    &(1 - \exp(-2T\lambda^2) - \exp(-2\lambda^2(\epsilon T - n))).
\end{align*}
\end{proof}
If we let $\epsilon = T^{-1/3}$, $\eta = T^{-2/3} \cdot \sqrt{\ln(|M|)}$, $n = T^{(1+\delta)/2}$ for some $\delta \in (0, 1/3)$, and $\lambda = T^{-1/4}$, then we get the version of this bound that is given in the main text of the paper.
Note that we can make the upper bound on the regret slightly smaller by letting $\lambda = T^{-1/(3+\gamma)}$ for some very small $\gamma > 0$, at the expense of also making the corresponding bound on the probability lower (and messier). Concretely, this will reduce the exponent of the $T^{3/4}$ term, but make the probability a bit worse. Also note that $n \leq T^{-1/2}$ or $\lambda \leq T^{1/3}$ will make it so that the probability does not approach 1 as $T$ approaches infinity. Thus, letting $n$ and $\lambda$ be just slightly larger than $T^{-1/2}$ and $T^{-1/3}$ gives us the best asymptotic bound.

Finally, we also note that Theorem~\ref{thm:high_probability_bound} induces a high-probability bound on the coverage rate of \texttt{OCPQ}:

\begin{corollary}
    If $\epsilon = T^{-1/3} \leq 0.5$ and $\eta = T^{-2/3} \cdot \sqrt{\ln(|M|)}$, then for any $\delta \in (0, 1/3)$ we have that the coverage rate $p_\text{cover}$ of Algorithm~\ref{algorithm:OCP_with_queries} satisfies
    \begin{align*}
    p_\text{cover} \geq \hspace{0.2cm} &\beta - 2 \cdot T^{\delta/2 - 1/6} + 2 \cdot T^{-1/4}\\
    &- (1 + 2\sqrt{\ln(|M|)}) \cdot T^{-1/3}\\
    &- \sqrt{\ln(|M|)} \cdot T^{-2/3} - \sqrt{\ln(|M|)} \cdot T^{(\delta-1)/2}
    \end{align*}
    with probability at least
    \begin{align*}
    &(1-3\exp(-2T^\delta))\cdot\\
    &(1-\exp(-2T^{1/2}) - \exp(-2(T^{1/6} - T^{\delta/2}))).
    \end{align*}
\end{corollary}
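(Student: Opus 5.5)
The plan is to prove the stated inequality, with its $+2\cdot T^{-1/4}$ term, by re-running the argument behind Theorem~\ref{thm:high_probability_bound} with sharper concentration at two places. The resulting high-probability regret bound, at the same probability (Equation~\ref{eq:high-prob-prob}), should lose at least $4T^{3/4}$ relative to the stated one. Feeding that into Theorem~\ref{thm:regret_coverage_connection} then gives the claim.

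The first step is to reuse the structure of the proof of the general high-probability theorem verbatim. The deterministic decomposition
\[
\text{REG}_m \leq \tfrac{\ln|M|}{\eta} + \eta\textstyle\sum_t\sum_m P_{m,t}\hat X_{m,t}^2 + (R_{T,m}-\hat R_{T,m}) + (\tilde R_{T,\pi}-R_{T,\pi})
\]
holds with probability one, and the $\lambda$-events (probability factor $1-\exp(-2T^{1/2})-\exp(-2(T^{1/6}-T^{\delta/2}))$) are kept unchanged. What changes is the treatment of the two $n/\epsilon$ contributions.
\begin{itemize}
\item The first comes from $V_T=\sum_t \mathds{1}\{a_t=\texttt{query}\}(1-R(m,t))$.
\item The second comes from the lower tail of $Q$.
\end{itemize}
In both cases the summands are independent, lie in $[0,1]$, are nonzero only with probability $\epsilon$, and so have total variance at most $\epsilon T = T^{2/3}$. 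I would therefore replace Hoeffding by Bernstein's inequality, $\mathbb{P}(\text{dev}\geq u)\leq\exp\!\big(-u^2/(2(\epsilon T+u/3))\big)$. Choosing $u = c\,T^{1/3+\delta/2}$ with an absolute constant $c$ (say $c=3$) makes this at most $\exp(-2T^\delta)$. The three-event union bound therefore still yields the factor $1-3\exp(-2T^\delta)$; the upper tail of $Q$ keeps its original $n$, since it only enters through $(\eta/\epsilon^2)n$. Each $n/\epsilon = T^{5/6+\delta/2}$ term is then replaced by $u/\epsilon = cT^{2/3+\delta/2}$.

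The resulting regret bound is the stated one minus $\Delta(T) = 2T^{5/6+\delta/2} - 2cT^{2/3+\delta/2}$. Dividing by $T$ through Theorem~\ref{thm:regret_coverage_connection} gives $p_\text{cover}\geq \beta - 2T^{\delta/2-1/6} - 2T^{-1/4} - \dots + \Delta(T)/T$. It therefore suffices that $\Delta(T)\geq 4T^{3/4}$, i.e.\ $2T^{5/6+\delta/2}\geq 2cT^{2/3+\delta/2}+4T^{3/4}$. This holds once $T^{1/12}$ exceeds a modest absolute constant, because $5/6 > 3/4$ and $5/6+\delta/2 > 2/3+\delta/2$ uniformly in $\delta\in(0,1/3)$.

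The main obstacle is the remaining range of small $T$ with $T\geq 8$ (needed for $\epsilon\leq 0.5$), where $\Delta(T)\geq 4T^{3/4}$ may fail. There I would argue that the statement is trivial. The right-hand side is at most $1 - 2T^{\delta/2-1/6} + 2T^{-1/4} - (1+2\sqrt{\ln|M|})T^{-1/3}$, and since $T^{-1/6}$ dominates $T^{-1/4}$ and $T^{-1/3}$ by only a small constant factor at small $T$, this should be $\leq 0$ for all $T$ below the threshold from the previous step. Then $p_\text{cover}\geq 0$ finishes the proof with probability one. If the constants do not quite meet, the fallback is to shrink $c$ by using the exact Bernstein exponent, or to tighten further by keeping the dropped $-2/\epsilon$ and $(1-r_2)$ slack terms from the original proof. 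In any case the precise numerical crossover between the two regimes has to be checked explicitly.
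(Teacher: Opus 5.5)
The paper's proof is a one-line substitution. On the event of Theorem~\ref{thm:high_probability_bound}, apply Theorem~\ref{thm:regret_coverage_connection} with $N$ equal to that regret bound, and divide by $T$. Since $N/T = 2T^{\delta/2-1/6} + 2T^{-1/4} + (1+2\sqrt{\ln|M|})T^{-1/3} + \sqrt{\ln|M|}\,T^{-2/3} + \sqrt{\ln|M|}\,T^{(\delta-1)/2}$, this gives the bound with $-2\cdot T^{-1/4}$, exactly as in the main-text version of the corollary. The $+2\cdot T^{-1/4}$ in this restatement is a sign typo. You took it at face value and set out to prove a strictly stronger inequality. Your Bernstein step is valid, and it does give that stronger inequality once $T$ is very large. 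However, the full argument has a gap.

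The gap is the claim that the right-hand side is $\le 0$ wherever your large-$T$ branch fails. That branch needs $2T^{5/6+\delta/2}\ge 2cT^{2/3+\delta/2}+4T^{3/4}$. For $c=3$ and small $\delta$ this means $T^{1/12}\ge 3$, i.e.\ $T\ge 3^{12}\approx 5.3\cdot 10^{5}$. The right-hand side, however, is positive far below that.

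Take $T=2^{12}=4096$ (so $\epsilon=1/16$), $\beta=1$ and $\delta\downarrow 0$. The right-hand side tends to $\tfrac{11}{16}-\tfrac{37}{256}\sqrt{\ln|M|}$, which is positive unless $\ln|M|>22$. Meanwhile $2T^{5/6}-6T^{2/3}=512<2048=4T^{3/4}$. So neither branch covers this $T$.

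Your fallbacks do not close the window either:
\begin{itemize}
\item The exact Bernstein exponent at tail $e^{-2T^{\delta}}$ still forces $c\approx 2$. That moves the threshold only to about $(1+\sqrt{3})^{12}\approx 1.7\cdot 10^{5}$.
\item Even deleting the $n/\epsilon$ terms outright would give $2T^{5/6}\ge 4T^{3/4}$ only for $T\ge 2^{12}$.
\item The dropped $-2/\epsilon$ term is worth only about $2\sqrt{\ln|M|}\,T^{1/3}$.
\item Bernstein on the upper tail of $Q$ saves at most $\sqrt{\ln|M|}\,T^{(1+\delta)/2}$.
\item Keeping the factor $(1-r_2)$ recovers at most one $cT^{2/3+\delta/2}$, and nothing when $r_2$ is small.
\end{itemize}
At $T=4096$, even all of these combined fall short of the required $4T^{3/4}$ margin.

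So your argument proves nothing in an intermediate range of $T$ that includes $T=4096$. Covering that range would need a genuinely sharper high-probability regret analysis, with explicit constants valid for every $T\ge 8$ and every $\delta\in(0,1/3)$; the proposal does not supply one. The sensible fix is to recognise the typo and prove the $-2\cdot T^{-1/4}$ version by direct substitution.
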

\begin{proof}
    Immediate from Theorems~\ref{thm:high_probability_bound} and \ref{thm:regret_coverage_connection}.
\end{proof}

\section{High-Probability Bound Comparisons}

As noted previously, \citet{label_efficient_prediction} introduce two algorithms for a class of partial monitoring games, of which OCPWF is a special case. They prove (in their Theorem 2) that for any $T \in \mathbb{N}$, any $m \in \mathbb{N}$, and any $\Delta \in (0,1)$, if their \emph{label efficient exponentially weighted average forecaster} is run with the right parameters (depending on $T$, $m$, and $\Delta$) then with probability at least $1-\Delta$, it will make at most $m$ queries and incur a regret of at most
$$
2 \cdot T \cdot \sqrt{\frac{\ln(K)}{m}} + 6 \cdot T \cdot \sqrt{\frac{\ln(4\cdot T/\Delta)}{m}}
$$
where $K$ is the number of actions (meaning that $K = |M|$ for OCPWF). To compare this bound against our Theorem~\ref{thm:high_probability_bound} we calculate the value of the regret $r_1$ and the probability $p$ using the bound in Theorem~\ref{thm:high_probability_bound} for each value of $\delta \in \{10^{-i} : i \in 1 \dots 10\}$ and each value of $T \in \{10^i : i \in 1 \dots 10\}$. We then let $m = T^{2/3}$ and $\Delta = 1-p$, and calculate the resulting bound on the regret using the bound of \citet{label_efficient_prediction}. In this way, we can compare the bound on the regret using the same bound on the probability. Using these parameters, we find that Theorem~\ref{thm:high_probability_bound} gives a tighter bound on the regret for $T \leq 100,000$ whereas \citet{label_efficient_prediction} give a tighter bound on the regret for $T \geq 1,000,000$, regardless of the value of $\delta$. Part of this data is shown in Table~\ref{table:high_probability_bound_comparison}.

\begin{sidewaystable}
\begin{tabular}{rl|cccccccc|}
\cline{3-10}
\multicolumn{1}{l}{}                             &            & \multicolumn{8}{c|}{Horizon $T$}                                                                                                                                                                                         \\ \cline{3-10} 
\multicolumn{1}{l}{}                             &            & \multicolumn{1}{c|}{$10^2$} & \multicolumn{1}{c|}{$10^3$} & \multicolumn{1}{c|}{$10^4$} & \multicolumn{1}{c|}{$10^5$} & \multicolumn{1}{c|}{$10^6$} & \multicolumn{1}{c|}{$10^7$} & \multicolumn{1}{c|}{$10^8$} & \multicolumn{1}{c|}{$10^9$} \\ \hline
\multicolumn{1}{|r|}{\multirow{10}{*}{$\delta$}} & $10^{-2}$  & \multicolumn{1}{c|}{$\textcolor{blue}{3\cdot 10^{2}}$, $\textcolor{red}{4\cdot 10^{2}}$}       & \multicolumn{1}{c|}{$\textcolor{blue}{1\cdot 10^{3}}$, $\textcolor{red}{2\cdot 10^{3}}$}       & \multicolumn{1}{c|}{$\textcolor{blue}{9\cdot 10^{3}}$, $\textcolor{red}{1\cdot 10^{4}}$}       & \multicolumn{1}{c|}{$\textcolor{blue}{5\cdot 10^{4}}$, $\textcolor{red}{5\cdot 10^{4}}$}       & \multicolumn{1}{c|}{$\textcolor{red}{3\cdot 10^{5}}$, $\textcolor{blue}{3\cdot 10^{5}}$}       & \multicolumn{1}{c|}{$\textcolor{red}{2\cdot 10^{6}}$, $\textcolor{blue}{1\cdot 10^{6}}$}       & \multicolumn{1}{c|}{$\textcolor{red}{1\cdot 10^{7}}$, $\textcolor{blue}{7\cdot 10^{6}}$}       & \multicolumn{1}{c|}{$\textcolor{red}{9\cdot 10^{7}}$, $\textcolor{blue}{3\cdot 10^{7}}$}       \\ \cline{2-10} 
\multicolumn{1}{|r|}{}                           & $10^{-3}$  & \multicolumn{1}{c|}{$\textcolor{blue}{3\cdot 10^{2}}$, $\textcolor{red}{4\cdot 10^{2}}$}       & \multicolumn{1}{c|}{$\textcolor{blue}{1\cdot 10^{3}}$, $\textcolor{red}{2\cdot 10^{3}}$}       & \multicolumn{1}{c|}{$\textcolor{blue}{8\cdot 10^{3}}$, $\textcolor{red}{1\cdot 10^{4}}$}       & \multicolumn{1}{c|}{$\textcolor{blue}{5\cdot 10^{4}}$, $\textcolor{red}{5\cdot 10^{4}}$}       & \multicolumn{1}{c|}{$\textcolor{red}{3\cdot 10^{5}}$, $\textcolor{blue}{3\cdot 10^{5}}$}       & \multicolumn{1}{c|}{$\textcolor{red}{2\cdot 10^{6}}$, $\textcolor{blue}{1\cdot 10^{6}}$}       & \multicolumn{1}{c|}{$\textcolor{red}{1\cdot 10^{7}}$, $\textcolor{blue}{7\cdot 10^{6}}$}       & \multicolumn{1}{c|}{$\textcolor{red}{8\cdot 10^{7}}$, $\textcolor{blue}{3\cdot 10^{7}}$}       \\ \cline{2-10} 
\multicolumn{1}{|r|}{}                           & $10^{-4}$  & \multicolumn{1}{c|}{$\textcolor{blue}{3\cdot 10^{2}}$, $\textcolor{red}{4\cdot 10^{2}}$}       & \multicolumn{1}{c|}{$\textcolor{blue}{1\cdot 10^{3}}$, $\textcolor{red}{2\cdot 10^{3}}$}       & \multicolumn{1}{c|}{$\textcolor{blue}{8\cdot 10^{3}}$, $\textcolor{red}{1\cdot 10^{4}}$}       & \multicolumn{1}{c|}{$\textcolor{blue}{5\cdot 10^{4}}$, $\textcolor{red}{5\cdot 10^{4}}$}       & \multicolumn{1}{c|}{$\textcolor{red}{3\cdot 10^{5}}$, $\textcolor{blue}{3\cdot 10^{5}}$}       & \multicolumn{1}{c|}{$\textcolor{red}{2\cdot 10^{6}}$, $\textcolor{blue}{1\cdot 10^{6}}$}       & \multicolumn{1}{c|}{$\textcolor{red}{1\cdot 10^{7}}$, $\textcolor{blue}{7\cdot 10^{6}}$}       & \multicolumn{1}{c|}{$\textcolor{red}{8\cdot 10^{7}}$, $\textcolor{blue}{3\cdot 10^{7}}$}       \\ \cline{2-10} 
\multicolumn{1}{|r|}{}                           & $10^{-5}$  & \multicolumn{1}{c|}{$\textcolor{blue}{3\cdot 10^{2}}$, $\textcolor{red}{4\cdot 10^{2}}$}       & \multicolumn{1}{c|}{$\textcolor{blue}{1\cdot 10^{3}}$, $\textcolor{red}{2\cdot 10^{3}}$}       & \multicolumn{1}{c|}{$\textcolor{blue}{8\cdot 10^{3}}$, $\textcolor{red}{1\cdot 10^{4}}$}       & \multicolumn{1}{c|}{$\textcolor{blue}{5\cdot 10^{4}}$, $\textcolor{red}{5\cdot 10^{4}}$}       & \multicolumn{1}{c|}{$\textcolor{red}{3\cdot 10^{5}}$, $\textcolor{blue}{3\cdot 10^{5}}$}       & \multicolumn{1}{c|}{$\textcolor{red}{2\cdot 10^{6}}$, $\textcolor{blue}{1\cdot 10^{6}}$}       & \multicolumn{1}{c|}{$\textcolor{red}{1\cdot 10^{7}}$, $\textcolor{blue}{7\cdot 10^{6}}$}       & \multicolumn{1}{c|}{$\textcolor{red}{8\cdot 10^{7}}$, $\textcolor{blue}{3\cdot 10^{7}}$}       \\ \cline{2-10} 
\multicolumn{1}{|r|}{}                           & $10^{-6}$  & \multicolumn{1}{c|}{$\textcolor{blue}{3\cdot 10^{2}}$, $\textcolor{red}{4\cdot 10^{2}}$}       & \multicolumn{1}{c|}{$\textcolor{blue}{1\cdot 10^{3}}$, $\textcolor{red}{2\cdot 10^{3}}$}       & \multicolumn{1}{c|}{$\textcolor{blue}{8\cdot 10^{3}}$, $\textcolor{red}{1\cdot 10^{4}}$}       & \multicolumn{1}{c|}{$\textcolor{blue}{5\cdot 10^{4}}$, $\textcolor{red}{5\cdot 10^{4}}$}       & \multicolumn{1}{c|}{$\textcolor{red}{3\cdot 10^{5}}$, $\textcolor{blue}{3\cdot 10^{5}}$}       & \multicolumn{1}{c|}{$\textcolor{red}{2\cdot 10^{6}}$, $\textcolor{blue}{1\cdot 10^{6}}$}       & \multicolumn{1}{c|}{$\textcolor{red}{1\cdot 10^{7}}$, $\textcolor{blue}{7\cdot 10^{6}}$}       & \multicolumn{1}{c|}{$\textcolor{red}{8\cdot 10^{7}}$, $\textcolor{blue}{3\cdot 10^{7}}$}       \\ \cline{2-10} 
\multicolumn{1}{|r|}{}                           & $10^{-7}$  & \multicolumn{1}{c|}{$\textcolor{blue}{3\cdot 10^{2}}$, $\textcolor{red}{4\cdot 10^{2}}$}       & \multicolumn{1}{c|}{$\textcolor{blue}{1\cdot 10^{3}}$, $\textcolor{red}{2\cdot 10^{3}}$}       & \multicolumn{1}{c|}{$\textcolor{blue}{8\cdot 10^{3}}$, $\textcolor{red}{1\cdot 10^{4}}$}       & \multicolumn{1}{c|}{$\textcolor{blue}{5\cdot 10^{4}}$, $\textcolor{red}{5\cdot 10^{4}}$}       & \multicolumn{1}{c|}{$\textcolor{red}{3\cdot 10^{5}}$, $\textcolor{blue}{3\cdot 10^{5}}$}       & \multicolumn{1}{c|}{$\textcolor{red}{2\cdot 10^{6}}$, $\textcolor{blue}{1\cdot 10^{6}}$}       & \multicolumn{1}{c|}{$\textcolor{red}{1\cdot 10^{7}}$, $\textcolor{blue}{7\cdot 10^{6}}$}       & \multicolumn{1}{c|}{$\textcolor{red}{8\cdot 10^{7}}$, $\textcolor{blue}{3\cdot 10^{7}}$}       \\ \cline{2-10} 
\multicolumn{1}{|r|}{}                           & $10^{-8}$  & \multicolumn{1}{c|}{$\textcolor{blue}{3\cdot 10^{2}}$, $\textcolor{red}{4\cdot 10^{2}}$}       & \multicolumn{1}{c|}{$\textcolor{blue}{1\cdot 10^{3}}$, $\textcolor{red}{2\cdot 10^{3}}$}       & \multicolumn{1}{c|}{$\textcolor{blue}{8\cdot 10^{3}}$, $\textcolor{red}{1\cdot 10^{4}}$}       & \multicolumn{1}{c|}{$\textcolor{blue}{5\cdot 10^{4}}$, $\textcolor{red}{5\cdot 10^{4}}$}       & \multicolumn{1}{c|}{$\textcolor{red}{3\cdot 10^{5}}$, $\textcolor{blue}{3\cdot 10^{5}}$}       & \multicolumn{1}{c|}{$\textcolor{red}{2\cdot 10^{6}}$, $\textcolor{blue}{1\cdot 10^{6}}$}       & \multicolumn{1}{c|}{$\textcolor{red}{1\cdot 10^{7}}$, $\textcolor{blue}{7\cdot 10^{6}}$}       & \multicolumn{1}{c|}{$\textcolor{red}{8\cdot 10^{7}}$, $\textcolor{blue}{3\cdot 10^{7}}$}       \\ \cline{2-10} 
\multicolumn{1}{|r|}{}                           & $10^{-9}$  & \multicolumn{1}{c|}{$\textcolor{blue}{3\cdot 10^{2}}$, $\textcolor{red}{4\cdot 10^{2}}$}       & \multicolumn{1}{c|}{$\textcolor{blue}{1\cdot 10^{3}}$, $\textcolor{red}{2\cdot 10^{3}}$}       & \multicolumn{1}{c|}{$\textcolor{blue}{8\cdot 10^{3}}$, $\textcolor{red}{1\cdot 10^{4}}$}       & \multicolumn{1}{c|}{$\textcolor{blue}{5\cdot 10^{4}}$, $\textcolor{red}{5\cdot 10^{4}}$}       & \multicolumn{1}{c|}{$\textcolor{red}{3\cdot 10^{5}}$, $\textcolor{blue}{3\cdot 10^{5}}$}       & \multicolumn{1}{c|}{$\textcolor{red}{2\cdot 10^{6}}$, $\textcolor{blue}{1\cdot 10^{6}}$}       & \multicolumn{1}{c|}{$\textcolor{red}{1\cdot 10^{7}}$, $\textcolor{blue}{7\cdot 10^{6}}$}       & \multicolumn{1}{c|}{$\textcolor{red}{8\cdot 10^{7}}$, $\textcolor{blue}{3\cdot 10^{7}}$}       \\ \cline{2-10} 
\multicolumn{1}{|r|}{}                           & $10^{-10}$ & \multicolumn{1}{c|}{$\textcolor{blue}{3\cdot 10^{2}}$, $\textcolor{red}{4\cdot 10^{2}}$}       & \multicolumn{1}{c|}{$\textcolor{blue}{1\cdot 10^{3}}$, $\textcolor{red}{2\cdot 10^{3}}$}       & \multicolumn{1}{c|}{$\textcolor{blue}{8\cdot 10^{3}}$, $\textcolor{red}{1\cdot 10^{4}}$}       & \multicolumn{1}{c|}{$\textcolor{blue}{5\cdot 10^{4}}$, $\textcolor{red}{5\cdot 10^{4}}$}       & \multicolumn{1}{c|}{$\textcolor{red}{3\cdot 10^{5}}$, $\textcolor{blue}{3\cdot 10^{5}}$}       & \multicolumn{1}{c|}{$\textcolor{red}{2\cdot 10^{6}}$, $\textcolor{blue}{1\cdot 10^{6}}$}       & \multicolumn{1}{c|}{$\textcolor{red}{1\cdot 10^{7}}$, $\textcolor{blue}{7\cdot 10^{6}}$}       & \multicolumn{1}{c|}{$\textcolor{red}{8\cdot 10^{7}}$, $\textcolor{blue}{3\cdot 10^{7}}$}       \\ \cline{2-10} 
\multicolumn{1}{|r|}{}                           & $10^{-11}$ & \multicolumn{1}{c|}{$\textcolor{blue}{3\cdot 10^{2}}$, $\textcolor{red}{4\cdot 10^{2}}$}       & \multicolumn{1}{c|}{$\textcolor{blue}{1\cdot 10^{3}}$, $\textcolor{red}{2\cdot 10^{3}}$}       & \multicolumn{1}{c|}{$\textcolor{blue}{8\cdot 10^{3}}$, $\textcolor{red}{1\cdot 10^{4}}$}       & \multicolumn{1}{c|}{$\textcolor{blue}{5\cdot 10^{4}}$, $\textcolor{red}{5\cdot 10^{4}}$}       & \multicolumn{1}{c|}{$\textcolor{red}{3\cdot 10^{5}}$, $\textcolor{blue}{3\cdot 10^{5}}$}       & \multicolumn{1}{c|}{$\textcolor{red}{2\cdot 10^{6}}$, $\textcolor{blue}{1\cdot 10^{6}}$}       & \multicolumn{1}{c|}{$\textcolor{red}{1\cdot 10^{7}}$, $\textcolor{blue}{7\cdot 10^{6}}$}       & \multicolumn{1}{c|}{$\textcolor{red}{8\cdot 10^{7}}$, $\textcolor{blue}{3\cdot 10^{7}}$}       \\ \hline
\end{tabular}
\caption{We compare the high-probability regret bound in Theorem~\ref{thm:high_probability_bound} to the high-probability regret bound by \citet{label_efficient_prediction} (their Theorem~2). Using different values of $\delta$ and $T$, we compute the resulting upper bound on the regret and the lower bound on the probability using Theorem~\ref{thm:high_probability_bound}, assuming that $|M| = 10$. We then use the same value of $T$ and $|M|$ and the resulting probability $p$ to calculate a bound on the regret using Theorem\citet{label_efficient_prediction} in \citet{label_efficient_prediction} --- in this way, each comparison is for a bound that holds with the same probability. Each cell in the table lists our bound first and the bound from \citet{label_efficient_prediction} second, with the lower (and thus better) bound being coloured \textcolor{blue}{\textbf{blue}}, and the larger bound being coloured \textcolor{red}{\textbf{red}} (before rounding, so two non-equal values may appear the same in the table).}\label{table:high_probability_bound_comparison}
\end{sidewaystable}

\end{document}